\theoremstyle{plain}
\newtheorem{theorem}{Theorem}[section]
\newtheorem{lemma}[theorem]{Lemma}
\theoremstyle{definition}
\newtheorem{definition}[theorem]{Definition}
\theoremstyle{remark}
\icmltitlerunning{Multi-group Learning for Hierarchical Groups}
\begin{document}

\twocolumn[
\icmltitle{Multi-group Learning for Hierarchical Groups}

\icmlsetsymbol{equal}{*}

\begin{icmlauthorlist}
\icmlauthor{Samuel Deng}{yyy}
\icmlauthor{Daniel Hsu}{yyy}
\end{icmlauthorlist}

\icmlaffiliation{yyy}{Department of Computer Science, Columbia University}

\icmlcorrespondingauthor{Samuel Deng}{samdeng@cs.columbia.edu}
\icmlcorrespondingauthor{Daniel Hsu}{djhsu@cs.columbia.edu}

\icmlkeywords{Machine Learning, ICML, multi-group learning, fairness, learning theory}

\vskip 0.3in
]

\printAffiliationsAndNotice{\icmlEqualContribution} %

\begin{abstract}
The multi-group learning model formalizes the learning scenario in which a single predictor must generalize well on multiple, possibly overlapping subgroups of interest. We extend the study of multi-group learning to the natural case where the groups are hierarchically structured. We design an algorithm for this setting that outputs an interpretable and deterministic decision tree predictor with near-optimal sample complexity. We then conduct an empirical evaluation of our algorithm and find that it achieves attractive generalization properties on real datasets with hierarchical group structure.
\end{abstract}

\section{Introduction}\label{sec:intro}
\topic{In the classical statistical learning setup, the goal is to construct a predictor with high \textit{average} accuracy. However, in many practical learning scenarios, an aggregate, on-average measure of performance is insufficient.} In general, average-case performance can obscure performance for subgroups of examples --- a predictor that boasts 95\% accuracy on average might only be 50\% accurate on an important subgroup comprising 10\% of the population. Such a subgroup might be difficult for a predictor trained for aggregate performance because it is not represented well during training \citep{oakden-rayner_hidden_2019} or it admits spurious correlations \citep{borkan_nuanced_2019}. 

\topic{Recent work has shown that, in various learning domains, constructing a predictor that performs well on multiple subgroups is crucial.} For instance, when \textit{fairness} is a concern, a natural desideratum is that a predictor be accurate not only on average, but also conditional on possibly intersecting subgroups such as race and gender \citep{hardt_equality_2016, diana_minimax_2021}. In medical imaging, a model might systematically err on rarer cancers to achieve a higher average accuracy, leading to possibly life-threatening predictions on individuals with the rare cancer \citep{oakden-rayner_hidden_2019}. Similar demands for group-wise accuracy appear in domains as varied as medical imaging \citep{bissoto_constructing_2019, oakden-rayner_hidden_2019, degrave_ai_2021}, facial recognition \citep{buolamwini_gender_2018, kuehlkamp_gender--iris_2017}, object recognition \citep{devries_does_2019}, and NLP \citep{orr_bootleg_2020, varma_cross-domain_2021, borkan_nuanced_2019}. A common theme is that these high error subgroups come from some \textit{hierarchical group structure}. Demographic subgroups in fairness-aware scenarios naturally arrange via, say, race, gender, and age \citep{borkan_nuanced_2019} or subgroups of face images in facial recognition naturally arrange via, say, gender, facial expression, and hair color \citep{liu_deep_2015}.

\topic{Motivated by these concerns, we study the \textit{multi-group (agnostic PAC) learning} model first proposed by \citet{rothblum_multi-group_2021}.} In multi-group learning, there is a collection of \textit{groups} $\cG$ comprised of possibly overlapping subsets of the input space and a \textit{benchmark hypothesis class} $\cH$ of predictors. The objective in multi-group learning is to construct a predictor $f$ whose accuracy is not much worse than the best (possibly different) predictor $h_g \in \cH$ for each group $g \in \cG$ simultaneously. This generalizes the traditional (on-average) statistical learning setting when $\cG$ contains the entire input space. \citet{rothblum_multi-group_2021} gives an initial boosting-based algorithm that achieves multi-group learning, and \citet{tosh_simple_2022} provide simpler algorithms with improved sample complexity. These results all assume no further structure on $\cG$. In this paper, we study the natural special case in which $\cG$ has hierarchical structure. 

\topic{To this end, we present two main contributions. First, we identify a mathematically natural structural property for groups --- namely, hierarchical structure --- that permits a simple and efficient algorithm that achieves multi-group learning with near-optimal group-wise error rates. Second, we conduct an extensive empirical evaluation of this algorithm against other baselines for multi-group learning on several real datasets with hierarchical group structure.} This partially addresses a question posed by \citet{tosh_simple_2022} to design a learning algorithm that outputs a simple, deterministic predictor like their \prepend algorithm enjoys near-optimal error rates.
The empirical results show our algorithm indeed achieves multi-group generalization on par with---and, on some groups, better than---existing multi-group learning methods.
This supports our theory and suggests that, in the case where groups are hierarchically structured, the tree representation of our algorithm is a good inductive bias for generalization on the subgroups.

\subsection{Summary of Results}\label{sec:results}
\topic{First, we analyze two algorithms for hierarchical multi-group learning, showing ``near-optimal'' group-wise excess error rates for the latter.} The naïve first algorithm, detailed in Section \ref{sec:group_erm}, simply applies ERM to a partition of the input space formed by the hierarchical groups. This procedure of training a separate ``decoupled'' predictor on disjoint subsets of the input space to achieve some desired multi-group property has been studied before \citep{dwork_decoupled_2017}; we simply extend the analysis to the hierarchical multi-group learning framework. 

Our main algorithm, \mgltree (Algorithm \ref{alg:tree}), outputs a simple decision tree predictor that is guaranteed to achieve error competitive to the best possible predictor in a benchmark hypothesis class for every group simultaneously. Importantly, the best benchmark hypothesis for each group may differ. Namely, we show that, for finite $\cH$, \mgltree achieves multi-group learning with a group-wise excess error rate of $O\left(\sqrt{\log (|\cH||\cG|)/n_g} \right)$, where $\cH$ is the hypothesis class, $\cG$ is the collection of hierarchical groups, and $n_g$ is the number of training examples for a group $g.$ We note that this also extends to $\cH$ of bounded VC dimension, and these guarantees hold for any bounded loss function.

\topic{In comparison, an algorithm of \citet{tosh_simple_2022} also achieves the state-of-the-art group-wise excess error rate of $O\left(\sqrt{\log (|\cH||\cG|)/n_g} \right)$ for finite $\cH$ and $\cG$ (without the hierarchical restriction on $\cG$) with a black-box online-to-batch reduction.} The resulting predictor, however, is a randomized ensemble of $n$ (also) random predictors, one for each training sample, and requires the explicit enumeration of a finite hypothesis class. It is unclear if a practical implementation of such an algorithm exists.
A separate algorithm
of \citet{tosh_simple_2022} and \citet{globus-harris_algorithmic_2022}, called \prepend, on the other hand, outputs a simple, interpretable, and determnistic decision list predictor that avoids enumerating $\cH$. Its group-wise excess error rate, however, is not optimal: $O\left( \sqrt[3]{\log(|\cH||\cG|)/\gamma n_g} \right)$, where $\gamma$ is the probability of the group with the smallest mass. Resolving this trade-off between simplicity and optimal rates was an unresolved issue posed in \citet{tosh_simple_2022}. Our algorithm avoids the trade-off and achieves the state-of-the-art excess error rate with a simple predictor that refines its predictions through a breadth-first search on the tree induced by the hierarchical group structure. 

\topic{Second, we conduct an empirical evaluation of existing multi-group learning algorithms on datasets with natural hierarchical group structure.} We evaluate \mgltree, the \prepend algorithm, and the ERM baseline
in several binary classification problems. We evaluate different benchmark hypothesis classes (linear predictors, decision trees, and tree ensembles) and different hierarchical collections of groups. We consider twelve US Census datasets from \citet{ding_retiring_2022} for the tasks of predicting income, health care coverage, and employment. We consider four different collections of hierarchically structured groups arising from different collections of demographic attributes: (1) race, sex, and age, (2) race, sex, and education, (3) US state, race, and sex, and (4) US state, race, and age.

On each dataset, we evaluate the generalization of each algorithm through classification accuracy on a held-out test set. We find that \mgltree consistently improves on the accuracy of the global ERM and group-specific ERM hypotheses from our benchmark class, validating our theory. Our algorithm also consistently achieves equal or higher accuracy than \prepend, suggesting that the decision tree representation of our final predictor is a more favorable inductive bias to hierarchically structured groups than a decision list of possibly arbitrary order. We also find that, in some cases, \mgltree even outperforms more complex ERM predictors (e.g., tree ensembles) which were empirically observed in previous work to have good multi-group generalization properties \citep{gardner_subgroup_2023, pfohl_comparison_2022}.

\subsection{Related Work}
Our work touches upon several areas of theoretical and empirical literature, including multi-group learning, fairness, and hidden stratification.

\textbf{Multi-group learning.} \topic{The multi-group agnostic PAC learning setting was first formalized by \citet{rothblum_multi-group_2021}, and initial algorithms for multi-group learning relied on a reduction to the outcome indistinguishability framework of \citet{dwork_outcome_2020}.} \citet{tosh_simple_2022} introduced additional algorithms for achieving multi-group learning with improved, near-optimal sample complexity, discussed in Section \ref{sec:results}. Both of these works, like ours, study this problem in the batch setting. \citet{blum_advancing_2019} studies the online setting, where the goal is regret minimization with respect to multiple, possibly intersecting subgroups. \citet{rittler_agnostic_2023} considers the problem in the active learning setting, where the learner can decide which examples to obtain labels on. \citet{blum_collaborative_2017, haghtalab_-demand_2022} study another learning model in which the learner may decide the number of samples to obtain from different distributions (which are not necessarily subsets of the input space). This setup is related to \textit{multi-task learning}, in which samples are drawn from multiple, possibly related tasks (i.e., distributions, which are also not necessarily subsets of the input space), and the goal is to perform well on a single test distribution while avoiding so-called ``negative transfer'' between tasks \cite{ben-david_theoretical_2002, ben-david_exploiting_2003, mansour_domain_2008, li_identification_2023}. Multi-group agnostic PAC learning is also related to a recent strand of work in \textit{multicalibration} for providing more stringent calibration guarantees across multiple, possibly intersecting groups \citep{hebert-johnson_multicalibration_2018, kim_multiaccuracy_2019, globus-harris_multicalibration_2023}. 

\textbf{Group and minimax fairness.} \topic{A primary motivation for multi-group learning comes from the group fairness literature, where the goal is to achieve specific fairness criterion over multiple intersecting subgroups of individuals defined by characteristics such as sex or race.} \citet{hebert-johnson_multicalibration_2018} and \citet{kearns_preventing_2018} initiated the study of fairness across multiple \textit{intersecting} subgroups constructed from different combinations of protected attribute values (e.g.~intersections of race and gender). These works developed algorithms for various fairness notions in the algorithmic fairness literature (e.g.~equalized odds or false positive parity). In the context of fairness, multi-group learning is related to the natural fairness criterion of ``minimax fairness,'' in which the objective is to minimize the maximum loss across all groups rather than achieving parity in losses \citep{diana_minimax_2021}. We note that, when fairness is a concern, ``gerrymandered,'' intersectional groups naturally admit a hierarchical group structure \citep{kearns_preventing_2018} once the defining features are ordered.

\textbf{Slice discovery and hidden stratification.} \topic{In many other learning scenarios, fine-grained high error subgroups are unknown ahead of time.} A plethora of empirical work has uncovered various settings in which predictors with high overall accuracy predict poorly on meaningful subgroups of the input space \citep{buolamwini_gender_2018, oakden-rayner_hidden_2019, borkan_nuanced_2019}. The burgeoning literature of \textit{slice discovery} aims to develop methods that identify high-error subgroups that are not known \textit{a priori} \citep{chung_automated_2019, eyuboglu_domino_2022, sohoni_no_2022}. These slice discovery methods find these high-error subgroups by recursively slicing features into smaller and smaller subcategories, so the collection of found subgroups are presented in a hierarchical structure. This literature serves as a motivation to our work, but we note that, in multi-group learning, the collection of groups is known ahead of time (perhaps already ``discovered'' as a result of one of these methods).

\section{Problem Setup}\label{sec:setup}
\subsection{Setting and notation}
We are interested in the standard supervised statistical learning setting with additional group structure. We denote the input space as $\cX$, the output space as $\cY$, and the decision space as $\cZ$. It is not necessary that $\cZ = \cY.$ We denote $\cD$ as an arbitrary joint probability distribution over $\cX \times \cY$. We denote $S = \{(x_1, y_1), \dots, (x_n, y_n)\}$ as an i.i.d.~set of random examples drawn from $\cD.$ A \textit{predictor (hypothesis)} is a function $h: \cX \rightarrow \cZ$. A \textit{benchmark hypothesis class} $\cH$ is a collection of such functions.

We will compute probabilities and expectations with respect to the true distribution $\cD$, or the empirical distribution over $S$ (where a new example $(x, y)$ is drawn uniformly at random from $S$). We denote these as $\PP[\cdot]$, $\EE[\cdot]$ for the true distribution, and $\PP_S[\cdot]$, $\EE_S[\cdot]$ for the empirical distribution. 

For a bounded loss function $\ell: \cZ \times \cY \rightarrow [0, 1]$, we denote the \textit{risk} as $L_{\cD}(f) := \EE_{\cD}[\ell(f(x), y)]$ and the \textit{empirical risk} as $L_S(f) := \EE_S[\ell(f(x), y)] = \frac{1}{n} \sum_{i = 1}^n \ell(f(x_i), y_i)$.

\textbf{Multi-group learning notation.} A \textit{group} is a subset of the input space, $g \subseteq \cX$. We will overload $g$ to also denote the indicator function for the set, $g(x) := \indic{x \in g}$. $\cG$ denotes the collection of groups of interest, so it is a collection of subsets of the input space. For a set of $n$ examples, denote $n_g$ as the number of examples where $x \in g$, i.e. $n_g := \sum_{i = 1}^n g(x_i).$

For a distribution $P$ (in our case, either $\cD$ or $S$), we will write $\PP[A \mid g]$ (respectively, $\EE[X \mid g]$) to denote the probability of an event $A$ (respectively, expectation of a random variable $X$) conditioned on the event that, on a random $X \sim P$, $g(X) = 1$ (i.e. $X \in g$). 

Our main metric for comparison will be the \textit{group-conditional risk}: $L_{\cD}(f \mid g) := \EE_{\cD}[\ell(f(x), y) \mid g]$. We also denote the \textit{group-conditional empirical risk} as $L_S(f \mid g) := \frac{1}{n_g} \sum_{i = 1}^n g(x_i) \ell(f(x_i), y_i).$ For a group $g \in \cG$, we use $\hat{h}^g \in \argmin_{h \in \cH} L_S(h \mid g)$ to refer to a hypothesis in $\cH$ that minimizes the empirical risk on \textit{only} the examples from $g.$

\subsection{Background on multi-group agnostic learning}\label{sec:background_mgl}
Multi-group (agnostic PAC) learning is a generalization of the traditional PAC learning setting \cite{valiant_theory_1984}. The goal in multi-group learning is to construct a predictor that achieves small risk \textit{conditional} on the membership of an example in a group, for a collection of (potentially overlapping) groups $\cG$ simultaneously. ``Small risk'' is measured with respect to the best hypothesis in a \textit{benchmark hypothesis class}, $\cH.$

A \emph{multi-group learning algorithm} for $(\cG, \cH)$ takes as input a dataset of $n$ i.i.d.~labeled examples from a distribution $\cD$ and outputs a predictor $f: \cX \rightarrow \cZ$. The goal of such an algorithm is to ensure that the \textit{group-wise excess errors} $L_\cD(f \mid g) - \inf_{h \in \cH} L_{\cD}(h \mid g)$ are small for all $g \in \cG$. Formally, we say that such an algorithm achieves the \emph{multi-group learning property} with group-wise excess error bounds of $\epsilon_g(n, \delta).$

\begin{definition}[Multi-group learning property]\label{def:mgl}
A multi-group learning algorithm for $(\cG, \cH)$ satisfies the \emph{multi-group learning property} with group-wise excess error bounds $\epsilon_g(\cdot, \cdot)$ for $g \in \cG$ if it returns a predictor $f: \cX \rightarrow \cZ$ such that, with probability at least $1 - \delta$ over $n$ i.i.d.~examples from $\cD$,
\begin{equation*}\label{eq:unif_mgl}
    L_{\cD}(f \mid g) - \inf_{h \in \cH} L_{\cD}(h \mid g) \leq \epsilon_g(n, \delta) \quad \text{ for all } g \in \cG.
\end{equation*}
\end{definition}

These group-wise excess errors will all be decreasing functions of $n_g$, and it is straightforward to convert a group-wise excess error bound to an upper bound on the sample size $n_g$ sufficient to guarantee a given excess error bound $\epsilon_g > 0$ for group $g$.

Note that $f$ is a \textit{single} predictor that simultaneously achieves this objective with respect to the best hypothesis in \textit{every} group. In fact, every group may have a different best predictor achieving $\inf_{h \in \cH} L_{\cD}(h \mid g)$, and there may be no single $h \in \cH$ that achieves small risk for all groups simultaneously, as Figure \ref{fig:best_h} shows. Thus, we focus on the \textit{improper} learning setting, where $f$ is allowed to be outside of $\cH.$ 

\begin{figure}[ht]
\vskip 0.2in
\begin{center}
\centerline{\includegraphics[width=\columnwidth]{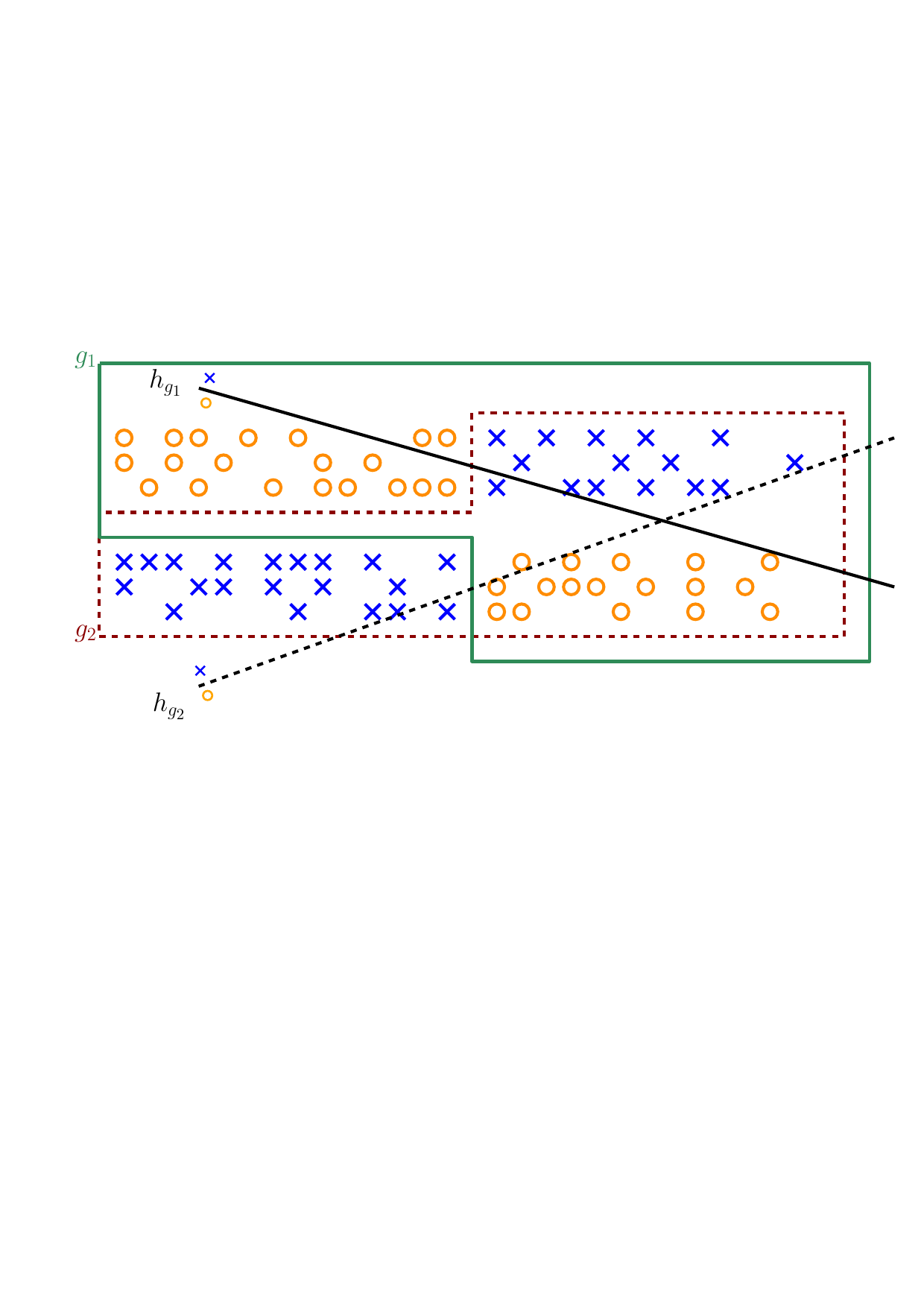}}
\caption{\textbf{No best $h$ for all groups simultaneously.} Letting $\cH$ be the class of halfspaces, the groups $g_1$ (indicated by the green solid line) and $g_2$ (indicated by the red dotted line) overlap, but their optimal predictors $h_{g_1}$ and $h_{g_2}$ are much different.} 
\label{fig:best_h}
\end{center}
\vskip -0.2in
\end{figure}

When $\cX \in \cG$, this is a generalization of the traditional agnostic PAC learning setting, where it is well-known that \textit{empirical risk minimization} (ERM) is necessary and sufficient. \citet{tosh_simple_2022} observe that, if we are \textit{a priori} concerned with the conditional distribution on each $g \in \cG$, then the optimal excess error from ERM on each group is $O(\sqrt{\log(|\cH|)/n_g})$ for finite $\cH$ and $O(\sqrt{d \log(n_g)/n_g}$ for $\cH$ with VC dimension $d < \infty.$ They show that there exists an algorithm that achieves this at a ``near-optimal'' rate that incurs an extra $O\sqrt{\log|\cG|}$ factor for finite $\cG$, with rate $O(\sqrt{\log(|\cG||\cH|)/n_g})$.
In Section \ref{sec:algo_tree}, we give a much simpler, deterministic algorithm that achieves this rate in the case of hierarchically structured groups.

We briefly remark that, when \citet{rothblum_multi-group_2021} introduced the learning model in Definition \ref{def:mgl}, they required the additional assumption of ``multi-PAC compatibility.'' This is the requirement that $\cH$ contains a hypothesis that is nearly optimal for every group simultaneously. This precludes settings where the best hypothesis on each group can be vastly different, such as in Figure \ref{fig:best_h} or the practical settings discussed in Section \ref{sec:intro}. Therefore, we avoid this assumption and provide group-wise excess error bounds where the best $h \in \cH$ is allowed to be different across the groups $\cG$.

\subsection{Hierarchical group structure}\label{sec:hierarchical}
\topic{We note a couple important features of the collection of groups $\cG$ in this learning model.} First, $\cG$ is fixed \textit{a priori} and is known during both training and test. In addition, the group identity of each example is known during training and test; that is, $g(x)$ can always be evaluated. In some applications to fairness, group membership for certain ``protected attributes'' may not be available at training or test time due to privacy or legal concerns \citep{veale_fairer_2017, holstein_improving_2019, lahoti_fairness_2020}. 

Because of such restrictions, recent works in fairness have pivoted from demanding fairness on a small, fixed number of coarse demographic attributes (e.g.~race, sex) to instead fixing a combinatorially large number of subgroups based on conjunctions of attributes \citep{kearns_preventing_2018}. For example, considering all the possible subgroups generated by conjunctions of race, sex, and age generates an exponentially large number of subgroups. This motivates the importance of obtaining near-optimal sample complexity rates that are \textit{logarithmic} in $|\cG|$, the total number of subgroups.

\topic{Subdividing an input space $\cX$ based on relevant attributes in a given order generates an exponentially large number of hierarchically structured subgroups.} We can also obtain a hierarchically structured collection of groups from dividing an input space top-down into categories and further subcategories. In the unsupervised machine learning context, hierarchically structured groups are obtained via, say, agglomerative clustering or a dyadic partitioning.

\begin{definition}[Hierarchically Structured Groups]\label{def:hierarchical}
A collection of groups $\cG$ is \emph{hierarchically structured} if, for every pair of distinct groups $g, g' \in \cG$, exactly one of the following holds:
\begin{enumerate}
    \item $g \cap g' = \emptyset$ ($g$ and $g'$ are disjoint).
    \item $g \subset g'$ ($g$ is contained in $g'$).
    \item $g' \subset g$ ($g'$ is contained in $g$).
\end{enumerate}
\end{definition}

\topic{A hierarchically structured collection of groups also naturally admits a rooted tree or collection of trees, where each group $g \in \cG$ is a node in the tree.} Figure \ref{fig:tree_example} gives an example.

\begin{figure}[ht]
\vskip 0.2in
\begin{center}
\centerline{\includegraphics[width=\columnwidth]{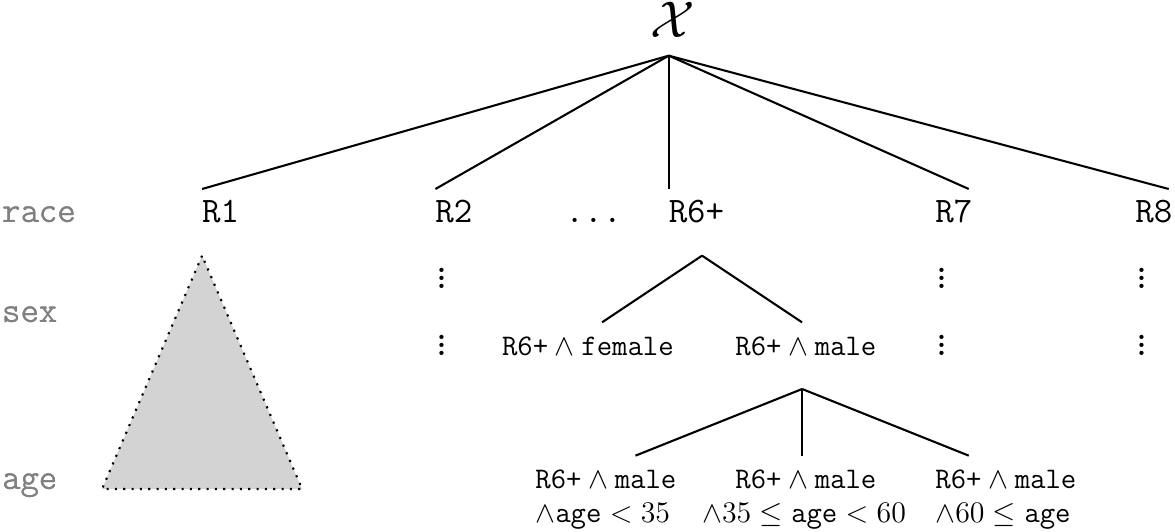}}
\caption{\textbf{Example of a hierarchically structured tree.} Each level of the tree above corresponds to a demographic attribute ($\texttt{race}$, $\texttt{sex}$, and $\texttt{age}$). Proceeding down the tree yields increasingly granular subgroups. The leaves are the most granular level, with subgroups such as $\texttt{R6+} \land \texttt{male} \land \texttt{age} < 35$.}
\label{fig:tree_example}
\end{center}
\vskip -0.2in
\end{figure}

\begin{definition}[Hierarchical Tree]\label{def:tree}
Let $\cX$ be an input space. A collection of hierarchically structured groups $\cG$ on $\cX$ with $\cX \in \cG$ (without loss of generality) admits a \emph{(rooted) tree} $\gtree$ such that each node of $\gtree$ is a group $g$, $\cX$ is the root of $\gtree$, and, for every pair $g, g' \in \gtree$:
\begin{enumerate}
    \item If $g$ is a child of $g'$, then $g \subset g'$.
    \item If $g$ and $g'$ are the same distance from the root, then $g \cap g' = \emptyset.$
\end{enumerate}
\end{definition}

Subdividing on attributes in this way can create different hierarchical trees depending on the order of attributes. For example, subdividing by $\texttt{race}$ first then $\texttt{sex}$ gives a different tree than subdividing on $\texttt{race}$ first and then $\texttt{age}$ (although the leaves are the same). In our experiments (Section \ref{sec:experiments}), similar findings held for all orderings. In practice, this ordering may be decided by a domain expert.

\section{Algorithms for learning hierarchical groups}
\topic{In this section, we analyze two multi-group learning algorithms for hierarchically structured groups.}
The first, in Section \ref{sec:group_erm}, is a baseline algorithm of \citet{dwork_decoupled_2017} studied in the context of fairness, but we include a brief analysis in our setting as a warm-up. It does not achieve the state-of-the-art group error rates for multi-group learning.
Our main algorithm, in Section \ref{sec:algo_tree}, outputs a simple decision tree predictor that obtains the near-optimal excess error rates, satisfying a desideratum from \citet{tosh_simple_2022} for the case of hierarchically structured $\cG$.
We restrict attention to finite $\cH$; infinite $\cH$ with finite VC dimenison are considered in Appendices~\ref{sec:g_erm_proof} and~\ref{sec:tree_proof}, along with the proofs.

\subsection{Algorithm: ``Decoupled'' Group ERM}\label{sec:group_erm}
\topic{We first analyze a simple algorithm for multi-group learning in the hierarchical setting: training a predictor on each disjoint leaf node.}
This procedure has been considered before in previous work in the context of fairness \citep{dwork_decoupled_2017}, but we include an analysis of its sample complexity in the multi-group learning framework with hierarchical groups for completeness and comparison.

Let $\cG$ be a hierarchically structured collection of groups, with hierarchical tree $\gtree$. Let $g_1, \dots, g_N$ be the leaves of the tree, and let $\hat{h}^{g_i} \in \argmin_{h \in \cH} L_S(h \mid g_i)$ be the ERM predictor trained only on samples from leaf $g_i$. Our predictor $f: \cX \rightarrow \cZ$ is simply:
\begin{equation}\label{alg:g-erm}
    f(x) := \hat{h}^{g_i}(x) \text{ if } x \in g_i,
\end{equation}
which is well-defined if the leaves $g_1, \dots, g_N$ form a partitioning of the input space $\cX$. 

We note that this ``decoupled'' predictor was originally proposed under the assumption that the groups partition the input space \citep{dwork_decoupled_2017}, so we make that assumption here on the leaves. However, we note that our main algorithm in Section \ref{sec:algo_tree} avoids this assumption and works with general hierarchically structured groups.

\begin{theorem}\label{thrm:g_erm}
Let $\cH$ be a hypothesis class and let $\cG$ be a collection of hierarchically structured groups with leaf nodes $g_1, \dots, g_N$ partitioning the input space $\cX.$ Let $\ell(\cdot, \cdot) \in [0, 1]$ be any bounded loss function. Then, with probability $1 - \delta$ over $n$ i.i.d.~examples $(x,y) \sim \cD$ over $\cX \times \cY$, $f$ in Equation \eqref{alg:g-erm} satisfies the multi-group learning property:
\begin{equation}
    L_{\cD}\left(f \mid g \right) - \inf_{h \in \cH} L_{\cD}\left(h \mid g \right) \leq \epsilon_g(n, \delta),
\end{equation}
for any $g \in \cG$ such that $g = \bigcup_{i = 1}^k g_i$ and $\epsilon_g(n, \delta) := 9\sum_{i = 1}^k \frac{\PP[x \in g_i]}{\sum_{j=1}^k \PP[x \in g_j]} \sqrt{\frac{\log(8|\cG||\cH|/\delta)}{n_{g_i}}}.$
\end{theorem}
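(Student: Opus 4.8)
The plan is to reduce the multi-group guarantee on an arbitrary node $g$ to ordinary agnostic-PAC excess-error bounds on its constituent leaves, exploiting that $f$ is defined leaf-by-leaf and that the leaves partition $g$. Since the leaves partition $\cX$ and $g = \bigcup_{i=1}^k g_i$ with the $g_i$ pairwise disjoint, the law of total expectation gives, for every $h \colon \cX \to \cZ$, the exact decomposition
\begin{equation*}
L_\cD(h \mid g) = \sum_{i=1}^k w_i \, L_\cD(h \mid g_i), \qquad w_i := \frac{\PP[x \in g_i]}{\sum_{j=1}^k \PP[x \in g_j]}.
\end{equation*}
First I would apply this to $f$ itself; because $f$ agrees with $\hat h^{g_i}$ on $g_i$, the left-hand side collapses to $\sum_i w_i \, L_\cD(\hat h^{g_i} \mid g_i)$.

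The key step is the comparison against the benchmark. Fix a (near-)minimizer $h^\star \in \cH$ of $L_\cD(\cdot \mid g)$ and decompose its risk over the same leaves. Subtracting, the global weights $w_i$ survive and I obtain
\begin{equation*}
L_\cD(f \mid g) - \inf_{h \in \cH} L_\cD(h \mid g) = \sum_{i=1}^k w_i \left[ L_\cD(\hat h^{g_i} \mid g_i) - L_\cD(h^\star \mid g_i) \right].
\end{equation*}
Since $h^\star$ is merely one candidate on leaf $g_i$, each bracket is at most the per-leaf excess error $L_\cD(\hat h^{g_i} \mid g_i) - \inf_{h \in \cH} L_\cD(h \mid g_i)$. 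This is the crucial observation: decoupled ERM is never hurt by being compared against a single global hypothesis, because on each leaf it may silently exploit that leaf's own optimal predictor. The multi-group quantity is thus upper bounded by a $w_i$-weighted average of standard single-distribution ERM excess errors, one per leaf.

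It then remains to bound each per-leaf excess error by $O(\sqrt{\log(|\cG||\cH|)/n_{g_i}})$. Conditioning on which sample indices land in $g_i$, those examples are i.i.d.\ draws from $\cD \mid g_i$, so the two-line ERM argument applies: the excess error is at most twice $\sup_{h \in \cH} |L_\cD(h \mid g_i) - L_S(h \mid g_i)|$, and a Hoeffding bound together with a union bound over $\cH$ controls this supremum by $\sqrt{\log(2|\cH|/\delta')/(2 n_{g_i})}$ conditionally on the realized count $n_{g_i}$. To make the guarantee hold simultaneously for every $g \in \cG$—each of which is a union of leaves—I would take a further union bound over the at most $|\cG|$ leaves, which is exactly what produces the $\log(|\cG||\cH|)$ dependence. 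Summing the per-leaf bounds against the weights $w_i$ then reassembles the stated $\epsilon_g$.

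The main obstacle is handling the randomness of the counts $n_{g_i}$ cleanly, which is also the source of the explicit constants $8$ and $9$. The clean route is the tower property: the Hoeffding bound is stated conditionally on the event fixing the set of indices in $g_i$ (so that $n_{g_i}$ is measurable and those points are genuinely i.i.d.\ from $\cD \mid g_i$), and since the conditional failure probability is at most $\delta'$ for every realization, the unconditional failure probability is as well. The less-clean-but-constant-bearing bookkeeping—relating the empirical count $n_{g_i}$ appearing in the denominator to its expectation $n\,\PP[x\in g_i]$ via a multiplicative Chernoff bound, and absorbing the factor-of-two reduction—is what inflates the numerical constants. I would also verify that the bound degrades gracefully (becoming vacuous) when some $n_{g_i}$ is small or zero, and confirm that using the \emph{true} probabilities $\PP[x \in g_i]$ in the weights $w_i$ is exactly what is needed to decompose the \emph{true} risk $L_\cD(\cdot \mid g)$, even though the rate inside each square root is governed by the empirical count.
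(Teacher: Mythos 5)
Your proposal is correct and follows essentially the same route as the paper's proof: decompose the conditional risk on $g$ as a $\PP$-weighted average over the disjoint leaves (the paper's Lemma~\ref{lemma:disjoint_decomp}), bound each leaf by the standard ERM argument under uniform convergence of group-conditional risks, and observe that the minimizer for the union $g$ is just one candidate on each leaf, so the per-leaf excess errors dominate. The only difference is bookkeeping: where you sketch re-deriving the conditional uniform convergence bound by hand (conditioning on the indices landing in each leaf, Hoeffding plus union bounds, and a Chernoff bound for the random counts $n_{g_i}$), the paper imports this wholesale as a black-box lemma (Theorem~1 of \citet{tosh_simple_2022}, stated as Lemma~\ref{lemma:group_unifconv}), which is where its constants $8$ and $9$ come from.
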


\topic{This simple algorithm achieves multi-group learning, but it is not optimal.}
In the worst case (e.g., when all disjoint leaf nodes have the same probability mass), the error rate grows with $\sqrt{|\cG|}$, which is far larger than the $\sqrt{\log|\cG|}$ achievable by other methods, as discussed in Section \ref{sec:hierarchical}.

\subsection{Algorithm: Multi-group Tree}\label{sec:algo_tree}
\topic{In this section, we present \mgltree (Algorithm~\ref{alg:tree}), which achieves the group-wise excess error of $O(\sqrt{\log|\cH||\cG|/n_g})$ with a simple and interpretable decision tree predictor.} One can view \mgltree as an adaptation of the \prepend algorithm of \citet{tosh_simple_2022} and \citet{globus-harris_algorithmic_2022}. Our analysis of \mgltree depends on the hierarchical structure of $\cG$ and is very different from that of \prepend, leading to the improved near-optimal sample complexity rate.

\topic{One unresolved issue stated in \citet{tosh_simple_2022} was to find an algorithm for multi-group learning that is both ``simple'' and enjoys near-optimal sample complexity.}
If we were only concerned with the conditional distribution of a single fixed group, standard statistical learning theory suggests that, for finite $\cH$, ERM is necessary and sufficient to achieve an excess error of
$\epsilon_g = O( \sqrt{\log(|\cH|)/n_g} )$.
\citet{tosh_simple_2022} gives an algorithm (different from \prepend) that achieves the near-optimal excess error of
$\epsilon_g = O( \sqrt{\log(|\cH||\cG|)/n_g})$.
The resulting predictor, however, is a randomized ensemble of $n$ (also) random predictors, one for each training sample, and requires the enumeration of a finite hypothesis class. On the other hand, \prepend outputs a simple, interpretable, and determnistic decision list predictor that avoids enumerating $\cH$. Its excess error, however, is not optimal: 
$\epsilon_g = O( \sqrt[3]{\log(|\cH||\cG|)/(\gamma n_g)} )$,
where $\gamma := \inf_{g \in \cG} \PP_{\cD}[x \in g]$. This trade-off between simplicity and optimal rates was an unresolved issue stated in \citet{tosh_simple_2022}. For hierarchical $\cG$, \mgltree avoids this trade-off between simplicity (and implementability) and near-optimal rates, as stated in Theorems \ref{thrm:tree_finite} and \ref{thrm:tree_infinite}. This allows us to also conduct an empirical evaluation of \mgltree in Section \ref{sec:experiments}.

\begin{theorem}\label{thrm:tree_finite}
Suppose $\cH$ is a finite benchmark hypothesis class and $\cG$ is a collection of hierarchically structured groups. Let $\ell(\cdot, \cdot) \in [0, 1]$ be any bounded loss function. Then, with probability $1 - \delta$ over $n$ i.i.d.~examples $(x,y) \sim \cD$ over $\cX \times \cY$, \mgltree taking input
$$
\epsilon_n(g) := 18\sqrt{\frac{2 \log (|\cG||\cH|) + \log(8/\delta)}{n_g}}
$$
outputs a predictor $f$ satisfying the multi-group learning property with:
\begin{equation}\label{eqn:mgl_tree_mglprop}
    L_{\cD}(f \mid g) - \min_{h \in \cH} L(h \mid g) \leq 2\epsilon_n(g) \quad \text{for all $g \in \cG.$}
\end{equation}
If $\cH$ is infinite, with finite VC dimension $d \geq 1$, then setting \mgltree with
$$
\epsilon_n(g) := 18\sqrt{\frac{2d\log(16|\cG|n/\delta)}{n_g}}
$$
outputs a predictor $f$ satisfying Equation \eqref{eqn:mgl_tree_mglprop}.
\end{theorem}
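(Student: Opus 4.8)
The plan is to prove the bound for every $g\in\cG$ by combining (i) a \emph{single-hypothesis} uniform convergence statement over $\cH\times\cG$ and (ii) an induction over the tree $\gtree$ showing that the refinements made by \mgltree never increase the true group-conditional risk. First I would establish the high-probability event $E$ on which, simultaneously for all $g\in\cG$ and all $h\in\cH$, $|L_S(h\mid g)-L_\cD(h\mid g)|\le \tfrac12\epsilon_n(g)$. For finite $\cH$ this follows by conditioning on the covariates $x_1,\dots,x_n$ (which fixes each count $n_g$), applying Hoeffding to the $n_g$ independent losses inside group $g$, and union bounding over the $|\cG||\cH|$ pairs; the $\log(|\cG||\cH|/\delta)$ and the $1/n_g$ in $\epsilon_n(g)$ are exactly what this produces, and the constant $18$ leaves ample slack for the factor $\tfrac12$. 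For VC classes the same union over $\cG$ is combined with the standard growth-function bound applied conditionally on each group, which is the source of the $d$ and $\log(|\cG| n/\delta)$ in the second display.

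Next I would set up the induction. For each node $g$ let $h_g\in\cH$ denote the hypothesis ``in effect'' at $g$ after \mgltree processes it: either $\hat h^g$ (if $g$ is refined) or the hypothesis inherited from the nearest refined ancestor. Because \mgltree proceeds in breadth-first order, every node is processed before all of its strict descendants, so $h_g$ is well defined and $f$ agrees with $h_g$ on $g$ except on the refined descendants of $g$. The core claim, proved by strong induction from the leaves upward, is that $L_\cD(f\mid g)\le L_\cD(h_g\mid g)$ for every $g$. Since $f=h_g$ off the refined descendants, it suffices to show $L_\cD(f\mid g')\le L_\cD(h_g\mid g')$ for each maximal refined descendant $g'\subset g$; there the inherited hypothesis is exactly $h_g$, so the refinement test guarantees $L_S(\hat h^{g'}\mid g')\le L_S(h_g\mid g')-\epsilon_n(g')$. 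Feeding this through the two-sided deviation bound of event $E$ at group $g'$ gives $L_\cD(\hat h^{g'}\mid g')\le L_\cD(h_g\mid g')$ --- the refinement does not hurt the true risk on $g'$ --- and the inductive hypothesis applied within $g'$ upgrades $\hat h^{g'}$ to $f$. Averaging the resulting nonpositive increments over the disjoint maximal refined descendants (disjointness at each level is Definition \ref{def:tree}) yields the claim.

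Finally I would close the bound. Since $h_g\in\cH$, event $E$ gives $L_\cD(h_g\mid g)\le L_S(h_g\mid g)+\tfrac12\epsilon_n(g)$. If $g$ was refined, $L_S(h_g\mid g)=\min_{h}L_S(h\mid g)$; if not, the failed refinement test gives $L_S(h_g\mid g)\le\min_h L_S(h\mid g)+\epsilon_n(g)$. In either case, comparing $\min_h L_S(h\mid g)$ to $L_S(h^\ast_g\mid g)$ for a true minimizer $h^\ast_g$ and applying event $E$ once more converts the empirical minimum into $\min_h L_\cD(h\mid g)$ at the cost of another $\tfrac12\epsilon_n(g)$, so that $L_\cD(f\mid g)\le L_\cD(h_g\mid g)\le \min_h L_\cD(h\mid g)+2\epsilon_n(g)$.

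The main obstacle --- and the reason the hierarchical structure buys the optimal $\sqrt{\log(|\cG||\cH|)/n_g}$ rate rather than the $\sqrt{|\cG|/n_g}$ of the decoupled baseline in Theorem \ref{thrm:g_erm} --- is avoiding a direct uniform-convergence bound on the patchwork predictor $f$. A naive decomposition of $L_\cD(f\mid g)-L_S(f\mid g)$ over the cells of $g$ sums $\sqrt{1/n_{g'}}$ over all descendant cells and reintroduces a $\sqrt{\#\text{cells}}$ factor. The induction above sidesteps this entirely: it never generalizes $f$ as a whole, only individual members of $\cH$, and it exploits that the refinement threshold $\epsilon_n(g')$ is set to exactly twice the per-group generalization slack $\tfrac12\epsilon_n(g')$, so that every empirically justified refinement is also justified in true risk. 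The delicate points to verify are that breadth-first processing makes ``the hypothesis in effect at $g$'' and ``maximal refined descendant'' well defined, and, in the infinite case, that the conditional VC bound holds uniformly over $\cG$ with the stated parameters.
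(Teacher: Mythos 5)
Your proof is correct, and it reaches the exact constant ($2\epsilon_n(g)$, i.e.\ four times the deviation bound) of the theorem, but it organizes the induction genuinely differently from the paper. The paper proves the population guarantee by induction over the \emph{algorithm's execution}: following the breadth-first order, it maintains the invariant that the evolving tree $\fold$ already satisfies the target excess-risk bound at every visited node, and the heart of the argument (Lemma \ref{lemma:fold_main} to treat $\fold$ as a single member of $\cH$ on each group, plus the decomposition Lemma \ref{lemma:disjoint_decomp}) is a ``monotonicity'' step showing that an update at $g$ can only decrease $L_{\cD}(\cdot \mid g')$ for every ancestor $g'$, precisely because the test margin $\epsilon_n(g)$ is twice the uniform-convergence deviation. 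You instead analyze the \emph{final} tree bottom-up, with the intermediate invariant $L_{\cD}(f \mid g) \le L_{\cD}(h_g \mid g)$ (where $h_g$ is the working predictor in effect at $g$), and only afterwards convert this into an excess-risk bound using the accept/reject condition at $g$ itself. The ingredients are the same --- refinements are population-safe because the margin is twice the deviation, and conditional risks decompose over the disjoint maximal refined descendants --- but your split into ``the patchwork never hurts relative to the local hypothesis'' and ``the local hypothesis is near-optimal on its own group'' is arguably more modular than the paper's single execution-order invariant; conversely, the paper's style of argument also delivers, essentially for free, the purely empirical correctness statement (Theorem \ref{thrm:treepend_correctness_main}), which your route bypasses.

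One caveat on your first step: the uniform-convergence event $E$ cannot be obtained by ``conditioning on the covariates $x_1,\dots,x_n$ and applying Hoeffding,'' because conditionally on the covariates the empirical conditional risk centers at $\frac{1}{n_g}\sum_{i : x_i \in g} \EE[\ell(h(x_i),y_i) \mid x_i]$, not at $L_{\cD}(h \mid g)$. The standard fix is to condition only on \emph{which} samples land in $g$ (fixing $n_g$), so that the in-group samples are i.i.d.\ from $\cD$ conditioned on $x \in g$, and then apply Hoeffding and the union bound; alternatively, do what the paper does and invoke Lemma \ref{lemma:group_unifconv} (Theorem 1 of Tosh and Hsu), which also covers the VC case, where your sketch is thinnest. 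This is a fixable slip in a portion of the argument the paper handles by citation; your main induction is sound.
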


\begin{algorithm}
\caption{\mgltree}\label{alg:tree}
\begin{algorithmic}[1]
\REQUIRE 
\STATE $S$, a training dataset. 
\STATE Collection of hierarchically structured groups $\mathcal{G} \subseteq 2^{\mathcal{X}}$. 
\STATE Error rates $\epsilon_n(g) \in (0, 1)$ for all $g \in \mathcal{G}$
\ENSURE Decision tree $f: \mathcal{X} \rightarrow \cZ$.
\STATE Order $\mathcal{G}$ into a \textit{hierarchical tree} $\gtree$ (Definition \ref{def:tree}).
\STATE Initialize the root: $f^{\mathcal{X}} := \hat{h}^\mathcal{X}$.
\FOR {each node $g \in \gtree \setminus \{\mathcal{X}\}$ in breadth-first order}
\STATE Compute:
$\mathrm{err}_g := \EE_S[ \ell(f^g(x),y) \mid x \in g ] - \EE_S[ \ell(\hat{h}^g(x),y) \mid x \in g ] - \epsilon_n(g)$.
\IF{$\mathrm{err}_g \geq 0$}
\STATE Set $f^g := \hat{h}^g$.
\ELSE
\STATE Set $f^g := f^{\parent(g)}$, where $\parent(g)$ denotes the parent node of $g.$
\ENDIF
\ENDFOR
\STATE \textbf{return} $f: \mathcal{X} \rightarrow \cZ$, a decision tree predictor.
\end{algorithmic}
\end{algorithm}

\topic{\mgltree gradually refines the ``decision tree'' predictor $f$ by updating its behavior on children groups when a child's predictor is significantly better than its parent's by margin $\epsilon_n(g).$} Each node $g$ in $\gtree$ has an associated predictor $f^g$. To evaluate $f$ on the input $x$, we find the ``deepest'' $g \in \gtree$ that contains $x$ by following a path starting from the root $\cX$ and moving from parent to child whenever the child contains $x$. The prediction $f(x)$ is taken to be $f^g(x).$ 

The main tension that the algorithm resolves is whether to use a predictor for a coarser-grained group (say, the predictor for just $\texttt{R6+}$) vs. a predictor for a finer-grained group (say, the predictor for $\texttt{R6+} \land \texttt{male} \land \texttt{age} < 35$). The finer-grained predictor may be more specific to the finer-grained group, but the finer-grained group necessarily had less samples to train on, and, hence, higher variance. The coarser-grained predictor may be less specific to the finer-grained group (and might, therefore, not pick up on the group-specific signal for the labels), but it had more samples to train on. By the hierarchical structure, coarser predictors are always the parents of finer-grained children. 

The main intuition of the algorithm is to resolve this trade-off by only using the finer-grained group's predictor when it is significantly better than its coarser-grained parent, where the "significance" is $\epsilon_n(g)$ in the Step 7 of the algorithm. Lemma \ref{lemma:fold_main} below establishes a nice "monotonicity" property --- every update operation will never make the overall decision tree violate any error bounds it satisfied earlier in the search, so we always make forward progress towards a decision tree that satisfies the multi-group learning objective.

To prove the correctness of \mgltree, we need to show the main property that the tree's predictions monotonically improve as \mgltree runs. This monotonicity property is similar in spirit to the key idea in the analysis of \prepend \citep{tosh_simple_2022}, but we need to exploit breadth-first search ordering on the hierarchical tree to attain this monotonicity property. We present the statements of the key lemma for proving correctness, and the explicit statement of correctness here in the main body. The full proof is in Appendix \ref{sec:tree_proof}.

\begin{lemma}\label{lemma:fold_main}
Consider any step of Algorithm \ref{alg:tree} where we are considering $g \in \mathcal{G}.$ Let $\fold$ be the decision tree at this step before updating (the state of the tree at line 7). Let $\hat{h}^g \in \argmin_{h \in \mathcal{H}} L_{S}(h \mid g)$ for all $g \in \mathcal{G}.$ Then, for all $x \in g$, $\fold(x) = h^{g'}(x)$ for some $g' \supset g$ already visited by the algorithm.
\end{lemma}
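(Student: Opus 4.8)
The plan is to prove this by a short structural induction that combines two ingredients: the laminar (hierarchical) structure of $\cG$ and the breadth-first processing order. First I would pin down what $\fold$ computes on a point $x \in g$ at the moment $g$ is considered. Because $\cG$ is hierarchically structured, the set of nodes of $\gtree$ containing a fixed $x$ forms a single root-to-leaf chain: any two nodes at the same depth are disjoint by Definition~\ref{def:tree}(2), so at most one node per level contains $x$, and a node contains $x$ only if its parent does by Definition~\ref{def:tree}(1) (child $\subset$ parent). Hence for $x \in g$ the nodes containing $x$ are exactly the ancestors $\cX \supset \cdots \supset \parent(g)$, then $g$, then those descendants of $g$ that still contain $x$.

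Second, I would invoke the BFS order: when $g$ is being considered, every strict ancestor of $g$ (each at strictly smaller depth) has already been visited and assigned a predictor, while $g$ itself and all of its descendants have not. Evaluating $\fold$ on $x$ walks down this chain through the already-assigned nodes and stops at the deepest visited node containing $x$; by the first step that node is precisely $\parent(g)$. Therefore $\fold(x) = f^{\parent(g)}(x)$ for every $x \in g$.

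Third, I would establish by induction on the BFS order the invariant that every assigned predictor $f^{u}$ equals $\hat{h}^{g'}$ for some $g' \supseteq u$ already visited. The base case is the root, where $f^{\cX} := \hat{h}^{\cX}$ and we take $g' = \cX$. For the inductive step, the algorithm sets $f^{u}$ either to $\hat{h}^{u}$ (take $g' = u$) or to $f^{\parent(u)}$, in which case the hypothesis applied to the already-visited parent yields a visited $g' \supseteq \parent(u) \supset u$. Applying this invariant to $u = \parent(g)$ gives $f^{\parent(g)} = \hat{h}^{g'}$ for a visited $g' \supseteq \parent(g) \supset g$; combining with $\fold(x) = f^{\parent(g)}(x)$ from the second step yields $\fold(x) = \hat{h}^{g'}(x)$ for all $x \in g$ with $g' \supset g$ visited, as required.

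The genuinely delicate point is the second step: making precise that evaluating the partially built tree on $x \in g$ lands exactly at $\parent(g)$, neither too shallow nor too deep. This is where both ingredients are indispensable. The laminar structure guarantees that the only same-or-shallower-depth nodes containing $x$ are the ancestors of $g$, so the walk cannot halt at some off-path node; and the BFS order guarantees that all of those ancestors have been assigned predictors while no proper descendant of $g$ has, so the walk cannot descend past $\parent(g)$. Everything else is routine bookkeeping over the two branches of the update rule in lines~5--8.
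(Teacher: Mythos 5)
Your proposal is correct and takes essentially the same route as the paper's own proof: an induction along the breadth-first order that combines the laminar structure of $\cG$ (so the only visited nodes containing a point $x \in g$ are the strict ancestors of $g$) with the fact that parents are assigned predictors before their children, forcing $\fold(x) = f^{\parent(g)}(x) = \hat{h}^{g'}(x)$ for some visited $g' \supset g$. Your write-up is a more careful, fleshed-out version of the paper's terse induction—explicitly isolating the chain structure, the claim that evaluation stops at $\parent(g)$, and the invariant on working predictors—but the underlying argument is identical.
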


\begin{theorem}[Correctness of \mgltree]\label{thrm:treepend_correctness_main}
Let $\mathcal{G}$ be a hierarchically structured collection of groups, let $S = \{(x_i, y_i)\}_{i = 1}^n$ be $n$ i.i.d. training data drawn from any distribution $\cD$ over $\mathcal{X} \times \mathcal{Y}$, and let $\epsilon_n: \mathcal{G} \rightarrow (0, 1)$ be any error rate function. Then, Algorithm \ref{alg:tree} run on these parameters outputs a predictor $f: \mathcal{X} \rightarrow \mathcal{A}$ satisfying:
\begin{equation*}
    L_{S}(f \mid g) \leq \inf_{h \in \mathcal{H}} L_{S}(h \mid g) + \epsilon_n(g), \quad \text{ for all } g \in \mathcal{G}.
\end{equation*}
\end{theorem}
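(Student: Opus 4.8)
The plan is to decouple the claim into a local (per-node) guarantee and a global (whole-tree) domination, and then chain them. Write $f^g$ for the single hypothesis that \mgltree assigns to node $g$ (finalized once $g$ is visited in the breadth-first sweep, and never altered thereafter), and let $f$ be the final decision tree. Since $\hat{h}^g$ minimizes $L_S(\cdot \mid g)$ over $\cH$, it suffices to prove, for every $g \in \cG$, that (i) $L_S(f^g \mid g) \le L_S(\hat{h}^g \mid g) + \epsilon_n(g)$ and (ii) $L_S(f \mid g) \le L_S(f^g \mid g)$; together these give $L_S(f \mid g) \le \inf_{h \in \cH} L_S(h \mid g) + \epsilon_n(g)$.

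For (i) I would argue directly from the update rule. By Lemma \ref{lemma:fold_main} and the breadth-first order, when $g$ is visited every ancestor of $g$ has already been finalized and the current tree agrees on all of $g$ with a single ancestor predictor, namely $f^{\parent(g)}$; hence $\mathrm{err}_g = L_S(f^{\parent(g)} \mid g) - L_S(\hat{h}^g \mid g) - \epsilon_n(g)$ is well defined. If $\mathrm{err}_g \ge 0$ the algorithm sets $f^g = \hat{h}^g$ and (i) holds with slack $\epsilon_n(g) > 0$ to spare; if $\mathrm{err}_g < 0$ it keeps $f^g = f^{\parent(g)}$, and $\mathrm{err}_g < 0$ is exactly the statement $L_S(f^{\parent(g)} \mid g) < L_S(\hat{h}^g \mid g) + \epsilon_n(g)$. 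The root case is immediate since $f^{\cX} = \hat{h}^{\cX}$.

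For (ii) I would induct up the tree from leaves to root, exploiting that the children $g_1, \dots, g_m$ of $g$ are disjoint subsets of $g$, so the training points of $g$ partition into the residual $g_0 := g \setminus \bigcup_j g_j$ and the children, with the final tree agreeing with $f^g$ on $g_0$ and with the subtree rooted at $g_j$ on each $g_j$. This yields the decomposition
\[
L_S(f \mid g) = \frac{n_{g_0}}{n_g} L_S(f^g \mid g_0) + \sum_{j=1}^m \frac{n_{g_j}}{n_g} L_S(f \mid g_j).
\]
The induction hypothesis gives $L_S(f \mid g_j) \le L_S(f^{g_j} \mid g_j)$, and the linchpin is the per-child domination $L_S(f^{g_j} \mid g_j) \le L_S(f^g \mid g_j)$: because $g$ is finalized before $g_j$ in breadth-first order, the default value of $f^{g_j}$ is exactly $f^g$, and the update rule replaces it by $\hat{h}^{g_j}$ only when $\mathrm{err}_{g_j} \ge 0$, i.e.\ only when $L_S(\hat{h}^{g_j} \mid g_j) \le L_S(f^g \mid g_j) - \epsilon_n(g_j) \le L_S(f^g \mid g_j)$. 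Substituting these bounds and recombining with the identity $L_S(f^g \mid g) = \frac{n_{g_0}}{n_g} L_S(f^g \mid g_0) + \sum_j \frac{n_{g_j}}{n_g} L_S(f^g \mid g_j)$ collapses everything to $L_S(f \mid g) \le L_S(f^g \mid g)$, closing the induction (leaves being the trivial base case, where $f = f^g$ on $g$).

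The main obstacle I anticipate is bookkeeping the mismatch between the node predictor $f^g$ --- the single hypothesis the algorithm ``sees'' when it computes $\mathrm{err}_g$ --- and the final tree $f$, which on $g$ is a patchwork that overrides $f^g$ on the sub-regions handled by descendants. The algorithm only ever certifies the single-predictor quantity, so the entire content of the argument is showing that these overrides never hurt the group-conditional empirical risk. This is precisely where the breadth-first ordering (guaranteeing a parent is frozen before any child reasons about it) and the ``switch only when better by $\epsilon_n$'' rule are indispensable, and where Lemma \ref{lemma:fold_main} is used to keep every intermediate comparison one between single hypotheses rather than between subtrees.
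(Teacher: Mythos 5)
Your proposal is correct, and it proves the theorem by a different induction architecture than the paper, though both arguments rest on the same two ingredients: the empirical disjoint-decomposition identity (Lemma \ref{lemma:disjoint_decomp_emp}) and the fact that the algorithm installs $\hat{h}^g$ only when it beats the inherited predictor by margin $\epsilon_n(g)$. The paper's proof (Theorem \ref{thrm:treepend_correctness}) is a \emph{temporal} induction over the breadth-first steps: it maintains the invariant that the current tree satisfies the desired bound for every node visited so far, and shows that an update at a node $g^*$ leaves disjoint nodes untouched and can only decrease the empirical conditional risk of every ancestor $\hat{g}_j \supset g^*$ --- the same decomposition-plus-ERM-domination computation you carry out, but applied once per update step rather than aggregated over the whole run. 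Your proof is instead a \emph{structural} induction on the final tree: the local certificate (i) issued at the moment each node is finalized, chained with the leaves-to-root domination (ii) $L_S(f \mid g) \le L_S(f^g \mid g)$. Each organization buys something. The paper's temporal version establishes the stronger run-time monotonicity property highlighted around Lemma \ref{lemma:fold_main} (no update ever violates a previously satisfied bound), and that architecture is then mirrored in the proofs of Theorems \ref{thrm:tree_finite} and \ref{thrm:tree_infinite}, where uniform convergence must be interleaved with the decomposition at every comparison because the final tree is not a single element of $\cH$; one cannot obtain those distributional bounds by simply citing the empirical theorem and applying uniform convergence to $f$. Your version never has to reason about intermediate states of the tree after a node is finalized, and it isolates a clean invariant, so the bookkeeping for the purely empirical statement at hand is more transparent. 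A final point in your favor: your reading of line 7 --- that $\mathrm{err}_g$ compares $\hat{h}^g$ against $f^{\parent(g)}$, with which the current tree agrees on all of $g$ --- is the intended semantics (the displayed formula references $f^g$ before it is assigned), and you correctly invoke Lemma \ref{lemma:fold_main} together with the breadth-first order to justify it.
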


This predictor also has the bonus that it is easily interpretable. By reading off the nodes of the decision tree predictor, one can interpret the source of model errors for a specific group from its subgroups deeper in the tree. For example, for a hierarchical collection of groups induced by \texttt{race}, \texttt{sex}, and \texttt{age}, one can check if errors on \texttt{race} groups stem from an intersection of $\texttt{race} \land \texttt{sex}$ or the deeper intersection of $\texttt{race} \land \texttt{sex} \land \texttt{age}$ one level down the tree. Because the guarantees from multi-group learning apply to \textit{any} arbitrary bounded loss function, this property may be attractive in, say, fairness scenarios that demand some notion of fairness that can be encoded by a bounded loss function. We leave such applications for future work.
\section{Empirical evaluation}\label{sec:experiments}
\topic{In this section, we perform empirical evaluations of Algorithm \ref{alg:tree} for several classification problems with natural hierarchical group structure.}
\subsection{Experiment setup and learning task}\label{sec:experiment_setup}
\topic{\textbf{Models.} For the benchmark hypothesis class $\cH$, we consider the following models:} logistic regression, decision trees (with maximum depth 2, 4, and 8), random forests, and XGBoost. Implementation details are in Appendix \ref{sec:hyperparams}.

\topic{\textbf{Dataset overview.} We conduct our experiments on twelve U.S. Census datasets from the Folktables package of \citet{ding_retiring_2022}.} These datasets include many demographic attributes or other protected features of individuals, and they are large enough to have intersections with substantial data even when subdividing on multiple attributes. 

Nine datasets come from each of the U.S. states of New York (NY), California (CA), and Florida (FL), where we consider the Employment, Income, and Coverage binary classification tasks. The other three nationwide datasets come from concatenating the data from 18 U.S. states, determined by the two most populous states in each U.S. Census designated region of the U.S. \citep{us_census_us_2023} and considering the same three binary classification tasks. The binary classification tasks are as follows:
\begin{itemize}[leftmargin=*,noitemsep,topsep=0pt]
    \item \textbf{Income.} Predict whether employed individuals older than 16 make over \$50,000 USD per year.
    \item \textbf{Employment.} Predict whether individuals older than 16 and younger than 90 are employed.
    \item \textbf{Coverage.} Predict whether individuals younger than 65 (ineligible for Medicare) with income less than \$30,000 USD has coverage from public health insurance.
\end{itemize}

For each of the nine state datasets, we consider two hierarchically structured collections of intersecting groups that arise naturally from subdividing demographic attributes of $\{\texttt{race}, \texttt{sex}, \texttt{age}\}$ or $\{\texttt{race}, \texttt{sex}, \texttt{edu}\}$. For the three nationwide datasets, we divide on $\{\texttt{state}, \texttt{race}, \texttt{age}\}$ or $\{\texttt{state}, \texttt{race}, \texttt{sex}\}.$ In fairness settings, by defining the intersecting groups of interest in this way, we interpolate between a coarse, small collection of pre-defined groups from a single attribute to ensuring fairness for individuals \citep{kearns_preventing_2018}. 
Additional details are provided in Appendix \ref{sec:dataset_details}.

\textbf{Setup.} For each benchmark hypothesis class, we consider four training procedures:
\begin{itemize}[leftmargin=*,noitemsep,topsep=0pt]
    \item \textbf{ERM.} The predictor from minimizing empirical risk over all the data: $f \in \argmin_{f \in \cH} L_S(f).$
    \item \textbf{\prepend.} The decision list predictor output by \prepend. Details are given in Appendix \ref{sec:prepend}.
    \item \textbf{\mgltree.} The decision tree from Algorithm \ref{alg:tree}.
    \item \textbf{Group ERM.} The group-specific ERM predictor $\hat{h}^g \in \argmin_{f \in \cH} L_S(f \mid g)$ (when evaluating performance on group $g$).
\end{itemize}
For all ERM procedures, we optimize the logistic loss, a typical surrogate for the misclassification loss. We note that ``Group ERM'' is a benchmark procedure for if we were \textit{a priori} concerned only about the conditional distribution for group $g \in \cG$ (as discussed in Section \ref{sec:background_mgl}).

We evaluate the multi-group generalization of each predictor by measuring misclassification error on data restricted to each group from a held-out test set of 20\% of the data. We repeat over 10 trials and plot the mean and standard errors.

\subsection{Main findings}\label{sec:experiment_findings}
\topic{The main findings of our empirical evaluations are:}
\begin{enumerate}[leftmargin=*,noitemsep,topsep=0pt]
    \item \textbf{\mgltree achieves multi-group learning.}
    \item \textbf{\mgltree performs on par with or better than \prepend on all groups across all datasets.}
    \item \textbf{\mgltree outperforms ``subgroup robust'' baseline methods on certain groups.}
    \item \textbf{Simpler hypothesis classes lead to larger improvements when updating the predictor of \mgltree.}
\end{enumerate}

We note that these findings are consistent across all twelve datasets. Due to space, we relegate most of the plots to Appendix \ref{sec:add_experiments} and present the results on a couple of the datasets (CA Employment and CA Income) in the main body.

\topic{First, we empirically observe that \mgltree indeed achieves multi-group learning.} This supports the upper bound on excess error of Theorem \ref{thrm:tree_finite}. Across all datasets, \mgltree performs on-par with or better than the minimum error achieved by the ERM or Group ERM predictors from the benchmark hypothesis class. For instance, for CA Employment, Figure \ref{fig:lineplot} shows that \mgltree (labeled ``TREE'') has consistently lower error than both ERM and Group ERM for logistic regression, depth 2 decision trees, and random forests. In cases where \mgltree beats ERM by a significant amount, the predictor from \mgltree has identified a high-error subgroup on a specific slice of the population, which may prove useful for further auditing.

Though \mgltree consistently matches or beats ERM and Group ERM on other datasets, the gaps in group generalization are sometimes not as stark; this may suggest that ERM and Group ERM with the benchmark hypothesis class already achieve close to the Bayes optimal risk. Some evidence supports this: on such datasets, even random forest or XGBoost tree ensembles -- which are known to be very effective on tabular datasets \citep{gardner_subgroup_2023} -- have comparable error to simpler benchmark classes.

\begin{figure*}[ht]
\vskip 0.2in
\begin{center}
\centerline{\includegraphics[width=\textwidth]{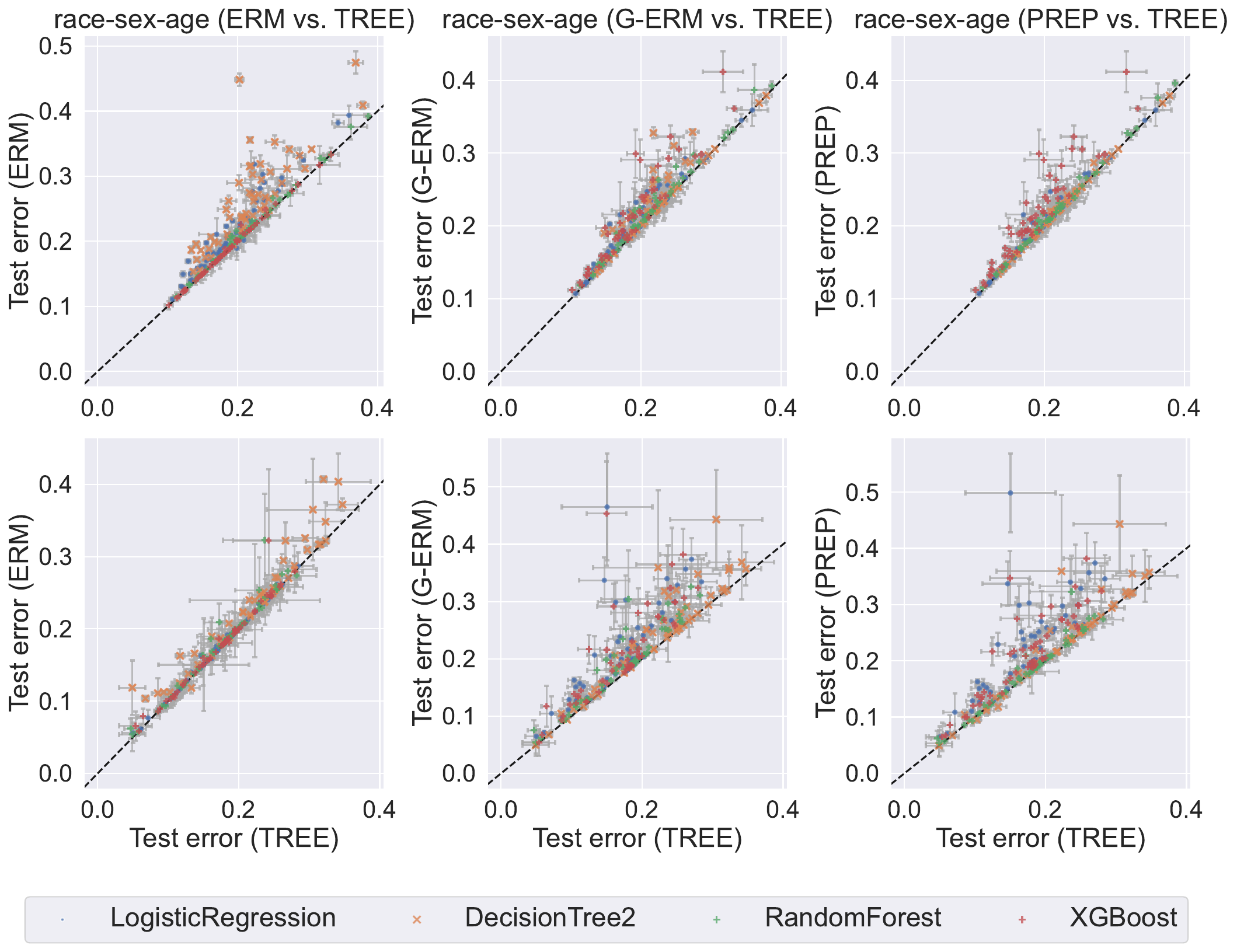}}
\caption{\textbf{Test accuracy on \texttt{race}-\texttt{sex}-\texttt{age} groups for CA Employment (top row) and CA Income (bottom row).} Each point in the plot represents the test error on a specific group. The $y = x$ line represents equal error between our \mgltree and the competing method; points \textit{above} the $y = x$ line are groups where \mgltree exhibits better generalization.} 
\label{fig:lineplot}
\end{center}
\vskip -0.2in
\end{figure*}

\topic{Second, \mgltree performs on par with or better than \prepend across almost all groups across all datasets.} This suggests that, in the case of hierarchially structured groups, \mgltree is better at finding an ordering of group predictors that improves the accuracy of the model. This might not be surprising, as \mgltree directly exploits an inductive bias towards hierarchically structured groups by constructing its decision tree with knowledge of the hierarchial structure. \prepend, on the other hand, constructs a decision list with a possibly arbitrary ordering of groups and group predictors. We observe this in Figure \ref{fig:lineplot} for the CA Income dataset, with further discussion in Section \ref{sec:comparison}.

\topic{Third, \mgltree sometimes outperforms more complex random forest and XGBoost baselines even when using relatively simple benchmark hypothesis classes.} A couple of recent works suggest that simply running ERM with a tree ensemble method such as random forest or XGBoost achieves good multi-group performance \citep{pfohl_comparison_2022, gardner_subgroup_2023}. For example, we observed that \mgltree run with the simple benchmark hypothesis class of linear predictors (specifically, logistic regression models) yields a substantial improvement over the ERM random forest baseline on almost all groups in the CA Employment dataset. See Figure \ref{fig:employment_logreg_bar} in Appendix \ref{sec:add_experiments} for more details. %

\topic{Finally, we also observe that, for simpler model classes, the predictor from \mgltree yields a larger improvement over baseline ERM and Group ERM training compared to more complex model classes.} We see this in both datasets in Figure \ref{fig:lineplot}, where the simplest model class of depth-2 decision trees has the largest gaps from the $y = x$ line. The more complex hypothesis class of linear models from logistic regression yields points closer to the line. The most complex classes of random forests and XGBoost see even less improvement than both logistic regression and depth-2 decision trees from running \mgltree, as evident from how many points hug the $y = x$ line. This suggests that for the predictor from \mgltree, improvements in subgroup generalization compared to the benchmark hypothesis class becomes more difficult as the class becomes more expressive. This behavior is also predicted from the dependence on $\cH$ in our excess error rates in Theorem \ref{thrm:tree_finite}.

\subsection{Comparing \prepend and \mgltree}\label{sec:comparison}
One main finding in Section \ref{sec:experiment_findings} was that \mgltree generalizes on par with or better than \prepend on all groups across all datasets. Figure \ref{fig:lineplot} shows two examples of this, and Appendix \ref{sec:add_experiments} contains more examples. In this section, we compare the final predictors output by \prepend and \mgltree on two datasets from our empirical study when the benchmark class is logistic regression. We find that, perhaps unsurprisingly, \mgltree achieves better generalization than \prepend because it prefers using the group-specific hypotheses for coarser groups at higher levels in the tree, while \prepend tends to overfit to the finer-grained subgroups at the leaves. We summarize these findings here and refer the reader to Appendix \ref{sec:comparison_full} for the full breakdown of the final predictors from \mgltree and \prepend.

\textbf{CA Employment.} From the upper-right plot in Figure \ref{fig:lineplot}, we observe that, for most groups, \mgltree performs on par with \prepend when both are instantiated with logistic regression (blue dots). There are still several groups where \mgltree performs better, as illustrated by the blue dots above the $y = x$ line, but the vast majority of the groups are on par with \prepend. 

Upon a closer examination of the group-specific predictors used by \prepend or \mgltree, we find that, for CA Employment, \prepend and \mgltree mostly resort to using leaf node predictors. In the case of the \texttt{race}-\texttt{sex}-\texttt{age} groups, the leaves correspond to conjunctions of three attributes such as $\texttt{R6+} \land \texttt{male} \land \texttt{age} < 35$. As such, the final predictor for both algorithms is almost identical, but both show a marked improvement over simply using ERM for most groups, as shown in the upper-left plot in Figure \ref{fig:lineplot}.

\textbf{CA Income.} From the lower-right plot in Figure \ref{fig:lineplot}, on the other hand, we see that \mgltree generalizes better than \prepend on many groups. Upon inspection of the final predictor, we observe that \mgltree employs coarser-grained subgroup predictors further up the tree (e.g. using \texttt{ALL} or \texttt{R1}) on many of the leaves, while, on most groups, \prepend predicts with finer-grained subgroup predictors. 

This suggests that the arbitrary order of the \prepend decision list can overfit to finer-grained subgroups that may not have sufficient samples for the finer-grained predictors to generalize. On the other hand, \mgltree only predicts with a finer-grained subgroup's predictor when its coarser-grained parent has sufficiently high error due to its breadth-first search on the hierarchical tree.

\section{Conclusion}
In this paper, we study the multi-group (agnostic PAC) learning framework in the natural case where the groups are hierarchically structured. We give an algorithm that achieves multi-group learning by constructing a decision tree of predictors for each group in the hierarchical collection of groups. This algorithm achieves multi-group learning with a near-optimal group-wise excess error of $O(\sqrt{\log(|\cH||\cG|)/n_g})$ and outputs a simple and deterministic predictor, addressing an issue posed by \citep{tosh_simple_2022} for the case of hierarchically structured groups. We then conduct an extensive empirical evaluation of our algorithm and find that it achieves attractive generalization properties on real datasets with hierarchical group structure, as the theory suggests.

\section*{Acknowledgements}
We acknowledge support from the NSF under grant IIS-2040971.

\section*{Impact Statement}
This work focuses on a learning setting initially inspired by the fairness literature (see, e.g., \citet{diana_minimax_2021}, \citet{hebert-johnson_multicalibration_2018}, \citet{kearns_preventing_2018}, \citet{buolamwini_gender_2018}) requiring some fairness criterion to be met on multiple, possibly overlapping subgroups of individuals. If our algorithms are used in the fairness setting, it is important to note that the jury is still out on what definition of algorithmic fairness is philosophically and legally sound. Our particular learning objective corresponds to ensuring that every subgroup is as well-off as the best possible hypothesis for that subgroup; however, this is at odds with a fairness definition such as, for instance, statistical parity \citep{barocas_fairness_2023}. We also note that our setting may be used in applications more general than fairness, when fine-grained accuracy is the only concern. In those cases, the usual societal implications of the application area apply.

\bibliography{final_references.bib}
\bibliographystyle{icml2024}

\newpage
\appendix
\onecolumn
\section{Technical Lemmas}
The main technical lemma we will employ in our proofs is a result from \citet{tosh_simple_2022} that establishes the uniform convergence of risks conditional on group membership. This generalizes a lemma from \citet{balsubramani_adaptive_2019} that gives a rate for the uniform convergence of empirical conditional measures.

\begin{lemma}[Theorem 1 from \citet{tosh_simple_2022}]\label{lemma:group_unifconv}
Let $\cH$ be a hypothesis class, let $\cG$ be a collection of groups, and let $\ell: \cZ \times \cY \rightarrow [0, 1]$ be a bounded loss function. Given an i.i.d.~sample of size $n$ drawn from a distribution $\cD$ over $\cX \times \cY$, with probability at least $1 - \delta$,
\begin{equation*}
    \left| L_{\cD}(h \mid g) - L_S(h \mid g) \right| \leq 9\sqrt{\frac{2 \log( S_{2n}(\overline{\cH}) S_{2n}(\cG)) + \log(8/\delta)}{n_g}},
\end{equation*}
for all $h \in \cH$ and $g \in \cG$, where $S_{k}(C)$ is the $k$th shattering coefficient of the collection $C$ of sets, and $\overline{\cH} := \{ (x, y) \mapsto \indic{\ell(h(x), y) \geq t} : h \in \cH, t \in \RR \}$ is the set of (boolean) functions constructed from composing the hypothesis class $\cH$ with the loss $\ell(\cdot, \cdot)$.
\end{lemma}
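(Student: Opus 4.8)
The plan is to reduce the statement to a uniform deviation bound for a class of Boolean functions and then invoke a self-normalized concentration inequality that ``pays'' only for the realized group size $n_g$. First I would dispose of the ratio structure of the conditional risk. Writing $L_{\cD}(h \mid g) = \EE_{\cD}[\ell(h(x),y)\,g(x)]/\PP_{\cD}[g]$ and analogously for the empirical version, both the numerator and the denominator fluctuate, so the deviation of the ratio must be controlled through the joint deviation of these two sums. The cleanest route is to treat $|L_{\cD}(h \mid g) - L_S(h \mid g)|$ as a single group-conditional deviation and bound it with an inequality that is self-normalized in $n_g$, rather than separately controlling numerator and denominator and then combining via $|a/b - a'/b'|$.

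The second step handles the real-valued loss. Since $\ell \in [0,1]$, the layer-cake identity $\ell(h(x),y) = \int_0^1 \indic{\ell(h(x),y) \ge t}\,dt$ expresses every conditional risk as an average over $t$ of the conditional probabilities of the threshold events $\{\ell(h(x),y) \ge t\}$. This is exactly why the composed Boolean class $\overline{\cH}$ enters: uniform control of the real-valued conditional risks follows from uniform control of the conditional measures of the threshold events indexed by $\overline{\cH}$, and integrating over $t$ (or taking a supremum) recovers the claim for $\ell$ itself.

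The third step is the capacity count. On any fixed realization of $2n$ points---the original sample together with a ghost sample introduced by symmetrization---the relevant functions are the products $(x,y)\mapsto \indic{\ell(h(x),y)\ge t}\,g(x)$, and the number of distinct sign patterns they induce is at most $S_{2n}(\overline{\cH})\,S_{2n}(\cG)$, since the thresholded-loss factor contributes at most $S_{2n}(\overline{\cH})$ patterns and the group indicator at most $S_{2n}(\cG)$. A union bound over these finitely many patterns produces the $\log(S_{2n}(\overline{\cH})S_{2n}(\cG))$ term, while the $\log(8/\delta)$ contribution and the constant $9$ absorb the failure probabilities of the symmetrization and concentration steps.

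The hard part---and the reason the bound scales with $1/\sqrt{n_g}$ rather than $1/\sqrt{n}$---is the self-normalization over a data-dependent denominator. A naive Hoeffding bound on the conditional average uses the random quantity $n_g$ as its denominator, so I cannot freely condition on it. The resolution, following the adaptive-concentration argument of \citet{balsubramani_adaptive_2019}, is to establish a deviation inequality for the conditional measures that holds uniformly across all possible values of $n_g$ at once; this is achieved by a peeling argument over dyadic scales of $n_g$ (or equivalently a self-normalized martingale inequality), so that for each pair $(h,g)$ the realized deviation is automatically governed by its own conditional sample size. Combining this self-normalized control with the symmetrization and the union bound over the $S_{2n}(\overline{\cH})S_{2n}(\cG)$ patterns then yields the stated bound simultaneously for all $h \in \cH$ and all $g \in \cG$.
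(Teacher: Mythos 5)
This lemma is not proved in the paper at all: it is imported verbatim as Theorem~1 of \citet{tosh_simple_2022} (with the remark that it generalizes a conditional-measure convergence lemma of \citet{balsubramani_adaptive_2019}), so there is no internal proof to compare against, and your sketch must be judged against the cited source's argument. On that comparison, your plan is essentially a correct reconstruction of the standard route and of the role of every object in the statement: the layer-cake identity $\ell(h(x),y)=\int_0^1 \indic{\ell(h(x),y)\ge t}\,dt$ is exactly why the thresholded class $\overline{\cH}$ appears; the pattern count for the products $\indic{\ell(h(x),y)\ge t}\,g(x)$ on a symmetrized double sample is indeed at most $S_{2n}(\overline{\cH})S_{2n}(\cG)$, which produces the log term; and you correctly identify the crux, namely that the normalization by the \emph{random} quantity $n_g$ blocks a naive Hoeffding-plus-symmetrization argument (the deviation threshold $\epsilon$ would differ between the sample and the ghost sample), so that some form of self-normalized or relative deviation control is unavoidable. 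The one place you diverge from the cited proof is your first step: you explicitly reject controlling numerator and denominator separately, whereas the simpler and standard execution — the one underlying the $9$ and the $2\log(\cdot)+\log(8/\delta)$ structure of the constant — applies Vapnik-style \emph{relative} deviation inequalities (deviations normalized by $\sqrt{\PP_S[\cdot]}$) twice, once to the intersection class $\{\bar h \cap g : \bar h \in \overline{\cH}, g \in \cG\}$ and once to $\cG$ itself, and then does elementary ratio arithmetic using $\PP_S[\bar h\cap g]\le \PP_S[g]$ to land on $1/\sqrt{n_g}$. Your alternative — a single self-normalized deviation handled by dyadic peeling over the scale of $n_g$ — is heavier but viable, and for a \emph{single} pair $(h,g)$ it is even unnecessary, since conditioning on which points fall in $g$ makes the in-group sample i.i.d.\ from $\cD$ restricted to $g$ and Hoeffding applies exactly; the peeling machinery is only needed to make this interact with symmetrization uniformly over infinite classes. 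So: no genuine gap, the key ideas are all present and correctly placed, but the ratio-decomposition you dismissed as unclean is in fact the shorter path taken by the source, while your route buys nothing extra here beyond conceptual uniformity with the self-normalized-martingale literature.
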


Two immediate consequences of Lemma \ref{lemma:group_unifconv} are Lemmas \ref{lemma:group_unifconv_finite} and \ref{lemma:group_unifconv_vc}. 

\begin{lemma}\label{lemma:group_unifconv_finite}
In the setting of Lemma \ref{lemma:group_unifconv}, if $\cH$ and $\cG$ are finite, then
\begin{equation*}
    \left| L_{\cD}(h \mid g) - L_S(h \mid g) \right| \leq 9\sqrt{\frac{2 \log(|\cH||\cG|) + \log(8/\delta)}{n_g}}
\end{equation*}
for all $h \in \cH$ and all $g \in \cG.$ 
\begin{proof}
For a finite set, the shattering coefficient is bounded by the size of the set, so $S_{2n}(\overline{\cH}) \leq |\cH|$ and $S_{2n}(\cG) \leq |\cG|.$
\end{proof}
\end{lemma}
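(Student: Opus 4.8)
The plan is to read off Lemma~\ref{lemma:group_unifconv_finite} as an immediate specialization of Lemma~\ref{lemma:group_unifconv}. I would leave the $9\sqrt{(2\log(\cdot)+\log(8/\delta))/n_g}$ shell of that bound untouched and only control the argument of the logarithm, namely the product of shattering coefficients $S_{2n}(\overline{\cH})\,S_{2n}(\cG)$. Because the map $u \mapsto 9\sqrt{(2\log u + \log(8/\delta))/n_g}$ is increasing on $u>0$, it suffices to show $S_{2n}(\overline{\cH}) \le |\cH|$ and $S_{2n}(\cG) \le |\cG|$; substituting these and using monotonicity turns the high-probability guarantee of Lemma~\ref{lemma:group_unifconv} into the claimed inequality, holding simultaneously for all $h \in \cH$ and $g \in \cG$ with the same failure probability $\delta$.

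The one fact I need is that a finite collection of sets has shattering coefficient at most its own cardinality. Recall that $S_k(C)$ is the maximum, over all $k$-point samples, of the number of distinct subsets that members of $C$ pick out of the sample. For $\cG$ this is clean: on any fixed set of $2n$ points, each group $g \in \cG$ (through its indicator) selects exactly one subset, so at most $|\cG|$ distinct subsets arise, and hence $S_{2n}(\cG) \le |\cG|$. The same counting principle handles the loss-composed class once one notes that, since $\cH$ is finite, the functions $(x,y)\mapsto \ell(h(x),y)$ range over at most $|\cH|$ distinct real-valued maps, so $\overline{\cH}$ is effectively indexed by the finite set $\cH$ and $S_{2n}(\overline{\cH}) \le |\cH|$.

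I expect the only delicate point --- and thus the main (minor) obstacle --- to be the threshold parameter $t \in \RR$ in the definition of $\overline{\cH}$: taken literally, ranging $t$ over all reals makes $\overline{\cH}$ an infinite class, so the bare ``finite $\Rightarrow$ cardinality'' argument does not apply to it verbatim the way it does to $\cG$. I would dispatch this by observing that on a fixed $2n$-point sample, for each fixed $h$ the thresholded indicators $\indic{\ell(h(x_i),y_i) \ge t}$ produce only the nested subsets determined by the order statistics of $\{\ell(h(x_i),y_i)\}$, so the distinct boolean patterns are still organized by the finite index $\cH$; this is what justifies replacing $S_{2n}(\overline{\cH})$ by $|\cH|$ in the bound. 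With $S_{2n}(\overline{\cH})\,S_{2n}(\cG) \le |\cH|\,|\cG|$ in hand, plugging into Lemma~\ref{lemma:group_unifconv} and invoking monotonicity of $\log$ and $\sqrt{\cdot}$ yields exactly $\left| L_{\cD}(h \mid g) - L_S(h \mid g) \right| \le 9\sqrt{(2\log(|\cH||\cG|)+\log(8/\delta))/n_g}$ for all $h \in \cH$ and $g \in \cG$.
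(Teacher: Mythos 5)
Your proposal coincides with the paper's proof: the paper, too, obtains Lemma \ref{lemma:group_unifconv_finite} as an immediate specialization of Lemma \ref{lemma:group_unifconv}, its entire argument being the cardinality bounds $S_{2n}(\overline{\cH}) \leq |\cH|$ and $S_{2n}(\cG) \leq |\cG|$, with the monotonicity of $u \mapsto 9\sqrt{(2\log u + \log(8/\delta))/n_g}$ left implicit.

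One caveat on the threshold issue you flagged, which is the right thing to worry about and which the paper's one-liner silently elides (it applies ``finite set $\Rightarrow$ shattering coefficient at most cardinality'' to $\overline{\cH}$, which is infinite): your nested-subsets argument does not actually deliver $S_{2n}(\overline{\cH}) \leq |\cH|$. On a fixed $2n$-point sample, a single $h$ whose loss values $\ell(h(x_i), y_i)$ are pairwise distinct already realizes $2n+1$ distinct boolean patterns as $t$ sweeps $\RR$, so ``organized by the finite index $\cH$'' only yields $S_{2n}(\overline{\cH}) \leq (2n+1)|\cH|$. Substituting that costs an extra additive $2\log(2n+1)$ under the square root, so it does not literally reproduce the stated $\log(|\cH||\cG|)$ bound. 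For the zero--one loss the damage vanishes, since each $h$ contributes at most its error set beyond the two trivial patterns (empty and full), giving $S_{2n}(\overline{\cH}) \leq |\cH| + 2$; but for a general bounded loss the clean way to get exactly the claimed inequality is to observe that for finite $\cH$ the uniform-convergence argument behind Lemma \ref{lemma:group_unifconv} can be run with a direct union bound over $\cH \times \cG$, bypassing the thresholded class $\overline{\cH}$ altogether. In short: you take the same route as the paper, and you are in fact more careful than the paper's own proof about the one genuine subtlety, though the patch as you state it proves a weaker count than the one both you and the paper invoke.
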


\begin{lemma}\label{lemma:group_unifconv_vc}
In the setting of Lemma \ref{lemma:group_unifconv}, if $\cH$ has VC dimension $d > 0$, then
\begin{equation*}
    \left| L_{\cD}(h \mid g) - L_S(h \mid g) \right| \leq 9\sqrt{\frac{2d \log(2|\cG|n) + \log(8/\delta)}{n_g}}.
\end{equation*}
\begin{proof}
By Sauer's lemma, $S_{2n}(\overline{\cH}) \leq \sum_{i = 0}^d \binom{n}{i} \leq (2n)^d$.
\end{proof}
\end{lemma}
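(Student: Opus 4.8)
The plan is to obtain Lemma~\ref{lemma:group_unifconv_vc} as a direct corollary of the master uniform-convergence bound in Lemma~\ref{lemma:group_unifconv}, in exactly the same way that Lemma~\ref{lemma:group_unifconv_finite} handles the finite case. The right-hand side of Lemma~\ref{lemma:group_unifconv} is monotone increasing in the two shattering coefficients $S_{2n}(\overline{\cH})$ and $S_{2n}(\cG)$, so it suffices to upper bound each of these quantities and substitute; the $\log(8/\delta)$ term and the leading constant $9$ then carry through unchanged.

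First I would bound the group term. Since the target inequality retains the cardinality $|\cG|$ rather than any VC-type quantity, I treat $\cG$ as finite, and then the shattering coefficient of a finite collection of sets on any number of points is at most its cardinality, giving $S_{2n}(\cG) \leq |\cG|$.

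Next I would bound the loss-composed class $\overline{\cH}$ via the Sauer--Shelah lemma: a class of VC dimension $d$ realizes at most $\sum_{i=0}^{d}\binom{2n}{i}$ distinct behaviors on $2n$ points, and this binomial sum is a polynomial in $n$ of degree $d$, bounded for $d \geq 1$ by $(2n)^d$. Combining the two bounds gives $S_{2n}(\overline{\cH}) S_{2n}(\cG) \leq (2n)^d |\cG|$, and since $|\cG| \geq 1$ and $d \geq 1$ we may absorb the extra factor of $|\cG|$ into the exponent, $(2n)^d |\cG| \leq (2n)^d |\cG|^d = (2|\cG|n)^d$. Taking logarithms yields $\log\!\big(S_{2n}(\overline{\cH}) S_{2n}(\cG)\big) \leq d \log(2|\cG|n)$, and substituting this into Lemma~\ref{lemma:group_unifconv} reproduces the claimed bound.

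The step I expect to be the main obstacle is the application of Sauer--Shelah to $\overline{\cH}$ rather than to $\cH$. The hypothesis assumes $\cH$ has VC dimension $d$, but the uniform-convergence lemma is stated in terms of $\overline{\cH} = \{(x,y) \mapsto \indic{\ell(h(x),y) \geq t} : h \in \cH,\, t \in \RR\}$, the class obtained by thresholding the loss composed with $\cH$. For binary classification under the $0$--$1$ loss these classes coincide and the VC dimension is preserved exactly, but for a general bounded loss and an arbitrary threshold $t$ one must argue that the thresholding does not inflate the VC dimension beyond $d$ (or adopt the convention that $d$ denotes the VC dimension of $\overline{\cH}$). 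Making this reduction precise---rather than the subsequent Sauer--Shelah counting, which is routine---is where the care is needed.
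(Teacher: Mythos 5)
Your proposal is correct and takes essentially the same route as the paper's (one-line) proof: apply Sauer--Shelah to $\overline{\cH}$ to get $S_{2n}(\overline{\cH}) \leq (2n)^d$, bound $S_{2n}(\cG) \leq |\cG|$ for finite $\cG$, absorb the factor $|\cG|$ into the exponent to obtain $(2|\cG|n)^d$, and substitute into Lemma~\ref{lemma:group_unifconv}. The subtlety you flag---that the VC hypothesis is stated for $\cH$ while Sauer must be applied to the thresholded loss class $\overline{\cH}$---is a genuine point, but the paper glosses over it in exactly the same way, implicitly taking $d$ to bound the VC dimension of $\overline{\cH}$.
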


We will also need the following two lemmas for decomposing group conditional risks for hierarchically structured $\cG.$

\begin{lemma}\label{lemma:disjoint_decomp_emp}
Let $g_1, \dots g_N$ be $N$ disjoint groups, and let $S = \{(x_i, y_i)\}_{i=1}^n$ be a dataset of $n$ i.i.d. examples. Let $n_{\cup g}$ be the number of examples in $\bigcup_{k = 1}^N g_k$, and let $n_{g_k}$ be the number of examples in $g_k$. Then, the group conditional empirical risk for $\bigcup_{k = 1}^K g_k$ decomposes as follows:
\begin{equation}
    L_S\left(f \bigmid \bigcup_{k = 1}^N g_k\right) = \sum_{k = 1}^N \frac{n_{g_k}}{n_{\cup g}} L_S(f \mid g_k).
\end{equation}
\end{lemma}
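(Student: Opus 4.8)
The plan is to unwind the definition of the group-conditional empirical risk directly, exploiting disjointness to split the indicator of the union into a sum of indicators, and then regroup the terms. This is a purely algebraic identity, so no probabilistic or concentration machinery is needed; the only structural fact I use is that the $g_k$ are pairwise disjoint.

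First I would write $G := \bigcup_{k=1}^N g_k$ and recall from the notation in Section~\ref{sec:setup} that
\begin{equation*}
    L_S(f \mid G) = \frac{1}{n_G} \sum_{i=1}^n G(x_i)\, \ell(f(x_i), y_i),
\end{equation*}
where $G(x_i) = \indic{x_i \in G}$ and $n_G = n_{\cup g}$. The key observation is that, because the groups are pairwise disjoint, each example $x_i$ lies in at most one of the $g_k$, so the indicator of the union is exactly the sum of the individual indicators: $G(x_i) = \sum_{k=1}^N g_k(x_i)$. This is the one step that genuinely uses the hypothesis, and it is also the only place where an error could creep in if the groups were allowed to overlap (then the left side would double-count points in multiple groups).

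Substituting this identity and exchanging the two finite sums gives
\begin{equation*}
    L_S(f \mid G) = \frac{1}{n_{\cup g}} \sum_{k=1}^N \sum_{i=1}^n g_k(x_i)\, \ell(f(x_i), y_i).
\end{equation*}
Then I would recognize the inner sum as $\sum_{i=1}^n g_k(x_i)\,\ell(f(x_i),y_i) = n_{g_k} L_S(f \mid g_k)$, again straight from the definition of the group-conditional empirical risk, and conclude
\begin{equation*}
    L_S(f \mid G) = \frac{1}{n_{\cup g}} \sum_{k=1}^N n_{g_k}\, L_S(f \mid g_k) = \sum_{k=1}^N \frac{n_{g_k}}{n_{\cup g}}\, L_S(f \mid g_k),
\end{equation*}
which is the claimed decomposition.

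There is no real obstacle here; the statement is essentially a bookkeeping identity, and the entire content is the disjointness-driven splitting of the union indicator. The only minor care worth taking is to handle the degenerate case where $n_{\cup g} = 0$ (no sampled example falls in $G$), in which case $L_S(f \mid G)$ is conventionally undefined or set to zero and each $n_{g_k} = 0$ as well; I would simply assume $n_{\cup g} \geq 1$, as is implicit whenever the conditional risk is invoked, so that all denominators are nonzero. (I also note the statement's union index should read $N$ rather than $K$ on the right-hand side, a harmless typo.)
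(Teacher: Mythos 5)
Your proof is correct and follows essentially the same route as the paper's: both unwind the definition of group-conditional empirical risk, use disjointness to write the union's indicator as $\sum_{k} g_k(x_i)$, swap the order of summation, and recognize the inner sum as $n_{g_k} L_S(f \mid g_k)$. Your added remarks on the degenerate case $n_{\cup g} = 0$ and the $K$/$N$ typo are minor touches the paper omits, but the argument is identical in substance.
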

\begin{proof}
By definition of group conditional empirical risk,
$$L_S\left(f \bigmid \bigcup_{k = 1}^N g_k\right) = \frac{1}{n} \sum_{i = 1}^n \indic{x_i \in \bigcup_{k = 1}^N g_k} \ell(f(x_i), y_i).$$
We note that $g_1, \dots, g_N$ are disjoint, so:
\begin{align*}
    \frac{1}{n} \sum_{i = 1}^n \indic{x_i \in \bigcup_{k = 1}^N g_k} \ell(f(x_i), y_i) &=  \frac{1}{n_{\cup g}} \sum_{i = 1}^n \left(\sum_{k=1}^N g_k(x_i) \ell(f(x_i), y_i)\right) \\
    &= \frac{1}{n_{\cup g}} \sum_{k=1}^N \left(\sum_{i = 1}^n  g_k(x_i) \ell(f(x_i), y_i)\right) \\
    &= \frac{1}{n_{\cup g}} \sum_{k=1}^N n_{g_k} L_S\left(f \mid g_k\right) = \sum_{k = 1}^N \frac{n_{g_k}}{n_{\cup g}} L_S\left(f \mid g_k\right).
\end{align*}
In the first equality, the $g_k(x)$ are just indicators for disjoint groups, so that immediately follows from boolean algebra on $\indic{x \in \bigcup_{k = 1}^N g_k}$. The second equality just switches order of summation, and the third is the definition of group conditional empirical risk again.
\end{proof}

\begin{lemma} \label{lemma:disjoint_decomp}
Let $g_1, \dots g_N$ be $N$ disjoint groups. Then, the group conditional risk for $\bigcup_{k = 1}^N g_k$ decomposes:
\begin{equation}
    L_{\cD}\left(f \bigmid \bigcup_{k = 1}^N g_k\right) = \sum_{k = 1}^N \frac{\PP[x \in g_k]}{\sum_{j = 1}^K \PP[x \in g_j]} L_{\cD}\left(f \mid g_k\right).
\end{equation}
\end{lemma}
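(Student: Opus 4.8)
The plan is to prove this exactly as the empirical analog Lemma~\ref{lemma:disjoint_decomp_emp} is proved, but working with the true distribution $\cD$ rather than the empirical distribution over $S$. I would begin by unfolding the definition of the group conditional risk as a ratio of an unconditional expectation to a group probability,
\begin{equation*}
    L_{\cD}\left(f \bigmid \bigcup_{k=1}^N g_k\right) = \frac{\EE_{\cD}\!\left[\indic{x \in \bigcup_{k=1}^N g_k}\,\ell(f(x),y)\right]}{\PP\!\left[x \in \bigcup_{k=1}^N g_k\right]},
\end{equation*}
which is well-defined as long as the denominator is nonzero (i.e., the union has positive mass).

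The key step, just as in the empirical case, is to exploit disjointness of $g_1,\dots,g_N$. Disjointness gives the pointwise identity $\indic{x \in \bigcup_{k=1}^N g_k} = \sum_{k=1}^N g_k(x)$, so by linearity of expectation the numerator splits as $\sum_{k=1}^N \EE_{\cD}[g_k(x)\,\ell(f(x),y)]$. Disjointness also gives $\PP[x \in \bigcup_{k=1}^N g_k] = \sum_{j=1}^N \PP[x \in g_j]$ for the denominator. Then I would recognize each summand in the numerator as $\EE_{\cD}[g_k(x)\,\ell(f(x),y)] = \PP[x \in g_k]\,L_{\cD}(f \mid g_k)$, which is simply the definition of group conditional risk rearranged.

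Substituting these two decompositions and pulling the common denominator inside the sum yields
\begin{equation*}
    L_{\cD}\left(f \bigmid \bigcup_{k=1}^N g_k\right) = \sum_{k=1}^N \frac{\PP[x \in g_k]}{\sum_{j=1}^N \PP[x \in g_j]}\, L_{\cD}(f \mid g_k),
\end{equation*}
which is the claimed identity. (I take the $K$ in the stated denominator to be a typo for $N$.)

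There is no real obstacle here; the statement is a routine consequence of disjointness together with the tower property of conditional expectation, and the proof is entirely parallel to that of Lemma~\ref{lemma:disjoint_decomp_emp}, with sample counts $n_{g_k}$ and $n_{\cup g}$ replaced by their population counterparts $\PP[x \in g_k]$ and $\sum_j \PP[x \in g_j]$. The only point worth flagging is the implicit well-definedness assumption that $\PP[x \in \bigcup_k g_k] > 0$, which holds for any group actually appearing in the tree.
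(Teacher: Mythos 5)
Your proof is correct and takes essentially the same approach as the paper's: both unfold the conditional risk as the ratio $\EE\left[\indic{x \in \bigcup_k g_k}\,\ell(f(x),y)\right]/\PP\left[x \in \bigcup_k g_k\right]$, use disjointness to split the numerator by linearity and the denominator by additivity, and recombine each term via $\EE[g_k(x)\,\ell(f(x),y)] = \PP[x \in g_k]\,L_{\cD}(f \mid g_k)$. The only cosmetic difference is that the paper derives this last identity through an explicit tower-property computation while you invoke it as the definition of conditioning on an event rearranged; your remarks on the $K$-versus-$N$ typo and the implicit positive-mass assumption are both apt.
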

\begin{proof}
By definition of group conditional risk,
$$L_{\cD}(f \mid g) = \EE\left[\ell(f(x), y) \mid x \in g\right].$$
We first claim that $L_{\cD}(f \mid g) = \frac{1}{\PP[x \in g]}\EE\left[g(x) \ell(f(x), y)\right].$ This follows from:
\begin{align*}
    \EE\left[g(x) \ell(f(x), y)\right] 
    &= \EE\left[\EE\left[g(x) \ell(f(x), y) \mid x \in g\right]\right] \\
    &= \EE\left[g(x) \EE\left[\ell(f(x), y) \mid x \in g\right]\right] \\
    &= \PP[x \in g] \EE\left[\ell(f(x), y) \mid x \in g\right] \\
    &= \PP[x \in g ] L_{\cD}(f \mid g).
\end{align*}
Using this fact, we can re-write the group conditional risk as:
$$L_{\cD}\left(f \bigmid \bigcup_{k = 1}^N g_k\right) = \frac{1}{\PP\left[x \in \bigcup_{j = 1}^N g_j\right]} \EE\left[\indic{x \in \bigcup_{k = 1}^N g_k}\ell(f(x), y)\right].$$
Because $g_1, \dots, g_N$ are disjoint, we can use additivity of $\PP(\cdot)$:
\begin{align*}
    &= \frac{1}{\sum_{j=1}^N \PP\left[x \in g_j\right]} \EE\left[\sum_{k = 1}^N g_k(x) \ell(f(x), y)\right] \\
    &= \frac{1}{\sum_{j=1}^N \PP[x \in g_j]} \sum_{k = 1}^N \EE\left[g_k(x) \ell(f(x), y)\right]
\end{align*}
Using the same fact that $\PP\left[x \in g_k\right] L_{\cD}\left(f \mid g_k\right) = \EE\left[g_k(x) \ell(f(x), y)\right]$, we get the desired result:
\begin{align*}
    &= \sum_{k = 1}^N \frac{\PP\left[x \in g_k\right]}{\sum_{j = 1}^N \PP\left[x \in g_j\right]} L_{\cD}(f \mid g_k).
\end{align*}
\end{proof}

\section{Proof of Theorem \ref{thrm:g_erm}}
\label{sec:g_erm_proof}
We give the full version of Theorem \ref{thrm:g_erm} here, including the case where $\cH$ has finite VC dimension.

\begin{theorem}\label{thrm:g_erm_vc}
Let $\cH$ be a hypothesis class with VC dimension $d > 0$ and let $\cG$ be a collection of hierarchically structured groups with leaf nodes $g_1, \dots, g_N$ partitioning the input space $\cX.$ Let $\ell(\cdot, \cdot) \in [0, 1]$ be any bounded loss function. Then, with probability $1 - \delta$ over $n$ i.i.d.~examples $(x,y) \sim \cD$ over $\cX \times \cY$, $f$ in Equation \eqref{alg:g-erm} satisfies the multi-group learning property with
\begin{equation}
    L_{\cD}\left(f \mid g \right) - \inf_{h \in \cH} L_{\cD}\left(h \mid g \right) \leq 9\sum_{i = 1}^k \frac{\PP[x \in g_i]}{\sum_{j=1}^k \PP[x \in g_j]} \sqrt{\frac{2d\log(16|\cG|n/\delta)}{n_{g_i}}}
\end{equation}
for any $g \in \cG$ such that $g = \bigcup_{i = 1}^k g_i$. For finite $\cH$,
\begin{equation}
    L_{\cD}\left(f \mid g \right) - \inf_{h \in \cH} L_{\cD}\left(h \mid g \right) \leq 9\sum_{i = 1}^k \frac{\PP[x \in g_i]}{\sum_{j=1}^k \PP[x \in g_j]} \sqrt{\frac{\log(8|\cG||\cH|/\delta)}{n_{g_i}}}.
\end{equation}
\end{theorem}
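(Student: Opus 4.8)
The plan is to reduce the group-conditional excess risk on $g = \bigcup_{i=1}^k g_i$ to a convex combination of per-leaf excess risks and then invoke a single, simultaneous uniform-convergence bound. The first step is purely structural: since the leaves $g_1,\dots,g_k$ are disjoint and $f$ agrees with $\hat{h}^{g_i}$ on $g_i$ by construction \eqref{alg:g-erm}, Lemma~\ref{lemma:disjoint_decomp} gives $L_{\cD}(f \mid g) = \sum_{i=1}^k w_i\, L_{\cD}(\hat{h}^{g_i} \mid g_i)$, where $w_i := \PP[x \in g_i]/\sum_{j=1}^k \PP[x \in g_j]$. Crucially, the same lemma applied to any fixed $h \in \cH$ yields $L_{\cD}(h \mid g) = \sum_{i=1}^k w_i\, L_{\cD}(h \mid g_i)$ with the identical weights, so subtracting term-by-term gives, for every fixed $h \in \cH$,
\[
 L_{\cD}(f \mid g) - L_{\cD}(h \mid g) = \sum_{i=1}^k w_i \left( L_{\cD}(\hat{h}^{g_i} \mid g_i) - L_{\cD}(h \mid g_i) \right).
\]

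Next I would bound each leaf term by the standard two-sided agnostic-ERM argument. Because $\hat{h}^{g_i}$ minimizes the empirical group-conditional risk $L_S(\cdot \mid g_i)$ over $\cH$ while $h \in \cH$, we have $L_S(\hat{h}^{g_i} \mid g_i) \le L_S(h \mid g_i)$; sandwiching both sides between their population counterparts via uniform convergence yields $L_{\cD}(\hat{h}^{g_i} \mid g_i) - L_{\cD}(h \mid g_i) \le 2\Delta_{g_i}$, where $\Delta_{g_i}$ is the uniform-convergence width on group $g_i$. The essential point---and what keeps the dependence logarithmic in $|\cG|$ rather than linear in the number of leaves---is that I invoke Lemma~\ref{lemma:group_unifconv} (via Lemma~\ref{lemma:group_unifconv_finite} for finite $\cH$ or Lemma~\ref{lemma:group_unifconv_vc} in the VC case) exactly once: its conclusion holds simultaneously for all $h \in \cH$ and all $g \in \cG$ on a single $1-\delta$ event, and each leaf is a member of $\cG$. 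Thus no union bound over the $k$ leaves is needed, and $\Delta_{g_i}$ takes the stated form with $\log(|\cG||\cH|)$ (resp. its VC analogue) in the numerator.

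Finally I would recombine. Since the bound $2\sum_{i} w_i \Delta_{g_i}$ is independent of $h$, the inequality $L_{\cD}(f \mid g) - L_{\cD}(h \mid g) \le 2\sum_i w_i \Delta_{g_i}$ holds for every $h \in \cH$, so I may take the infimum over $h \in \cH$ on the left to obtain $L_{\cD}(f \mid g) - \inf_{h \in \cH} L_{\cD}(h \mid g) \le 2\sum_i w_i \Delta_{g_i}$; pulling the infimum through in this order also sidesteps any worry about its being attained. Substituting the finite-class and VC widths from Lemmas~\ref{lemma:group_unifconv_finite} and~\ref{lemma:group_unifconv_vc}, and absorbing the factor of $2$, the additive $\log(8/\delta)$ term, and the identity $16|\cG|n/\delta = (2|\cG|n)(8/\delta)$ into the constant and the logarithm, yields the two displayed bounds. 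I expect the only genuine obstacles to be bookkeeping: (i) arranging the single simultaneous application of uniform convergence so that all per-leaf errors share one failure event and the rate stays $\sqrt{\log(|\cG||\cH|)/n_{g_i}}$; and (ii) tracking the weights $w_i$ so the final bound is exactly the normalized-probability weighted average claimed, rather than a crude maximum over the leaves.
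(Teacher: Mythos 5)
Your proposal is correct and takes essentially the same route as the paper's proof: decompose the risk on $g=\bigcup_{i=1}^k g_i$ over the disjoint leaves via Lemma~\ref{lemma:disjoint_decomp}, bound each leaf's excess risk using one simultaneous application of Lemma~\ref{lemma:group_unifconv_finite} (or Lemma~\ref{lemma:group_unifconv_vc}) over all $(h,g)\in\cH\times\cG$, and recombine with the weights $\PP[x\in g_i]/\sum_{j}\PP[x\in g_j]$; your variant of fixing $h$, subtracting term-by-term, and taking the infimum at the end, instead of the paper's passage through the leaf-optimal and then union-optimal hypotheses, is an immaterial reorganization. The one caveat is your closing claim that the factor of $2$ from the two-sided ERM argument can be absorbed into the stated constant: it cannot, since twice the uniform-convergence width is at least $18\sqrt{\bigl(\log(|\cG||\cH|)+\log(8/\delta)\bigr)/n_{g_i}}$, which strictly exceeds the claimed $9\sqrt{\log(8|\cG||\cH|/\delta)/n_{g_i}}$, so your argument proves the theorem only up to an extra factor of $2$. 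This is not a defect relative to the paper, whose own proof silently drops the same factor in Equation~\eqref{eq:indiv_erm} by asserting a one-width excess-risk bound for ERM; your bookkeeping is, if anything, the more honest of the two.
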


\begin{proof}
We first show the proof for $\cH$ with finite VC dimension $d.$

For each disjoint atomic group $g_i$, for $i \in [N],$ the behavior of $f$ is simply to use $\hat{h}_{g_i}.$ We know from uniform convergence of conditional risks (Lemma \ref{lemma:group_unifconv_vc}) that
\begin{equation}\label{eq:indiv_erm}
L_\mathcal{D}(f \mid g_i) = L_\mathcal{D}(\hat{h}_{g_i} \mid g_i) \leq \inf_{h \in \mathcal{H}} L_\mathcal{D}(h \mid g_i) + 9\sqrt{\frac{2d\log(16|\cG|n/\delta)}{n_{g_i}}},
\end{equation}
where $n_{g_i}$ is the number of examples, all from group $g_i.$ Consider the union of $k$ of the $N$ disjoint atomic groups, $\bigcup_{i = 1}^k g_{i}$ (without loss of generality for index $i$, let $g_i$ be any one of the $N$ groups). By Lemma \ref{lemma:disjoint_decomp}, 
$$
L_\mathcal{D}\left(f \bigmid \bigcup_{i = 1}^k g_i\right) = \sum_{i = 1}^k \frac{\PP(g_i)}{\sum_{j=1}^k \PP(g_j)} L_\mathcal{D}(f \mid g_i).
$$
By Equation \eqref{eq:indiv_erm} and the definition of the predictor $f$ from Section \ref{sec:group_erm},
\begin{align*}
    L_\mathcal{D}\left(f \bigmid \bigcup_{i = 1}^k g_i\right) &= \sum_{i = 1}^k \frac{\PP(g_i)}{\sum_{j=1}^k \PP(g_j)} L_\mathcal{D}(f \mid g_i) = \sum_{i = 1}^k \frac{\PP(g_i)}{\sum_{j=1}^k \PP(g_j)} L_\mathcal{D}(\hat{h}_{g_i} \mid g_i) \\
    &\leq \sum_{i = 1}^k \frac{\PP(g_i)}{\sum_{j=1}^k \PP(g_j)} \left(L_\mathcal{D}(h^*_{g_i} \mid g_i) + 9\sqrt{\frac{2d\log(16|\cG|n/\delta)}{n_{g_i}}} \right),
\end{align*}
where $h^*_{g_i} \in \argmin_{h \in \mathcal{H}} L_\mathcal{D}(h \mid g_i).$ Now, let $h^*_{\cup g_i} \in \argmin_{h^* \in \mathcal{H}} L_\mathcal{D}(h^* \mid \bigcup_{i = 1}^k g_i).$ Because each $h^*_{g_i}$ is optimal for their conditional risks on their respective group $g_i$, 
\begin{align*}
    &\leq \sum_{i = 1}^k \frac{\PP(g_i)}{\sum_{j=1}^k \PP(g_j)} \left(L_\mathcal{D}(h^*_{\cup g_i} \mid g_i) + 9\sqrt{\frac{2d\log(16|\cG|n/\delta)}{n_{g_i}}} \right) \\
    &= \sum_{i = 1}^k \frac{\PP(g_i)}{\sum_{j=1}^k \PP(g_j)} L_\mathcal{D}(h^*_{\cup g_i} \mid g_i) + \sum_{i = 1}^k \frac{\PP(g_i)}{\sum_{j=1}^k \PP(g_j)} 9\sqrt{\frac{2d\log(16|\cG|n/\delta)}{n_{g_i}}} \\
    &= L_\mathcal{D}\left(h^*_{\cup g_i} \bigmid \bigcup_{i=1}^k g_i \right) + \sum_{i = 1}^k \frac{\PP(g_i)}{\sum_{j=1}^k \PP(g_j)} 9\sqrt{\frac{2d\log(16|\cG|n/\delta)}{n_{g_i}}},
\end{align*}
whcih is our our result. The final equality is from applying Lemma \ref{lemma:disjoint_decomp} again to combine the first term.

The proof for finite $\mathcal{H}$ is identical, but use the finite $\mathcal{H}$ uniform convergence bound for conditional risks in Lemma \ref{lemma:group_unifconv_finite} instead of Equation \eqref{eq:indiv_erm}.
\end{proof}

\section{Proof of Theorems \ref{thrm:tree_finite} and \ref{thrm:tree_infinite}}\label{sec:tree_proof}

We now present the proof of \mgltree (Algorithm \ref{alg:tree}). This algorithm outputs a final decision tree predictor, $f: \mathcal{X} \rightarrow \mathcal{Z}$. Each node of the decision tree is a group $g \in \mathcal{G}$ with an associated \textit{working predictor} $f^g: \mathcal{X} \rightarrow \mathcal{Z}.$ The goal of the algorithm is to determine a good working predictor for each node of the tree. For each group $g \in \mathcal{G}$ and an i.i.d.~sample $S$, we'll denote $\hat{h}^g$ to be the ERM minimizer of group conditional empirical risk:
\begin{equation*}
    \hat{h}^g := \argmin_{h \in \mathcal{H}} L_{S}(h \mid g)
\end{equation*}
We construct the decision tree as follows. First, we generate the hierarchical tree $\gtree$ (Definition \ref{def:tree}) from the hierarchically structured collection of groups $\mathcal{G}$. For simplicity, the root is $\mathcal{X}$. In this tree, every node $g$ is a subset of its ancestors. We begin by initializing the root node's working predictor $f^{\mathcal{X}} := \hat{h}^\mathcal{X},$ the ERM minimizer of group conditional empirical risk for all of $\mathcal{X}$ --- this is just standard unconditional empirical risk $L_{S}(h).$ 

To assign working predictors $f^g$ to each node, we start from the root of the tree where $g = \mathcal{X}$ and visit all $|\mathcal{G}|$ nodes in the tree in breadth-first order. Let $f^{\parent(g)}$ denote the working predictor for the parent of node $g$. The main idea is to set the working predictor at node $g$ to $f^g := \hat{h}^g$ only if its parent is insufficient for achieving the desired margin of error $\epsilon_n(g)$. Otherwise, node $g$ inherits its working predictor from its parent: $f^g := f^{\parent(g)}$. To show that Algorithm \ref{alg:tree} is correct, the key is to prove a ``monotonicity'' property: at each update operation, the algorithm does not violate any error bounds for groups further up the tree.

Let $\fold$ denote the state of the decision tree \textit{before} an update operation. In Algorithm \ref{alg:tree}, this corresponds to the state of the decision tree at line 7 in each iteration of the main BFS loop. We will analyze $\fold$ in the proofs of Theorem \ref{thrm:treepend_correctness} and our main Theorems \ref{thrm:tree_finite} and \ref{thrm:tree_infinite}.

We state one more obvious lemma concerning the behavior of $f$ when we choose to inherit the parent's working predictor, $f^{\parent(g)}$. When we do this, $f$ is functionally equivalent to $\fold$ on group $g$ and all the nodes on the path from $g$ back up to the root. This is referred to as \ref{lemma:fold_main} in the main body, and is restated here as Lemma \ref{lemma:fold} for convenience.
\begin{lemma}[Behavior of $\fold$]\label{lemma:fold}
Consider any step of Algorithm \ref{alg:tree} where we are considering $g \in \mathcal{G}.$ Let $\fold$ be the decision tree at this step before updating (the state of the tree at line 7). Let $\hat{h}^g \in \argmin_{h \in \mathcal{H}} L_{S}(h \mid g)$ for all $g \in \mathcal{G}.$ Then, for all $x \in g$, $\fold(x) = h^{g'}(x)$ for some $g' \supset g$ already visited by the algorithm.
\end{lemma}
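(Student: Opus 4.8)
The plan is to prove a slightly stronger statement by induction on the breadth-first order in which \mgltree processes the nodes of $\gtree$, and then read off the lemma as an immediate corollary. The invariant I would maintain is the following: \emph{once a node $g'$ has been finalized (its working predictor assigned in the update step of its iteration), we have $f^{g'} = \hat{h}^{g''}$ for some already-visited ancestor $g'' \supseteq g'$, where $\hat{h}^{g''}$ is the group-conditional ERM predictor.} Granting this invariant, the lemma follows quickly: at line 7 while processing $g$, the tree $\fold$ evaluates any $x \in g$ using the working predictor of the deepest \emph{already-set} node on the root-to-$x$ path, and I claim this node is exactly $\parent(g)$, so that $\fold(x) = f^{\parent(g)}(x)$. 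Applying the invariant to the (already finalized) parent gives $f^{\parent(g)} = \hat{h}^{g'}$ for some visited $g' \supseteq \parent(g)$; since a child is strictly contained in its parent, $\parent(g) \supset g$ and hence $g' \supset g$, which is exactly the claimed form.

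First I would establish the base case: the root $\cX$ is initialized to $f^{\cX} = \hat{h}^{\cX}$, so the invariant holds with $g'' = g' = \cX$. For the inductive step, consider finalizing a non-root node $g$. Because $g$ lies strictly below its parent in $\gtree$ and \mgltree proceeds in breadth-first order, $\parent(g)$ was finalized at an earlier step, so the inductive hypothesis applies to it and yields $f^{\parent(g)} = \hat{h}^{g''}$ for some visited $g'' \supseteq \parent(g)$. If the $\mathrm{err}_g \geq 0$ branch is taken, then $f^g := \hat{h}^g$ and the invariant holds with the node $g$ itself. If instead the else branch is taken, then $f^g := f^{\parent(g)} = \hat{h}^{g''}$, and since $g'' \supseteq \parent(g) \supset g$ the invariant again holds with the same ancestor $g''$. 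This closes the induction, and in particular shows that the predictor $f^g$ referenced inside the computation of $\mathrm{err}_g$ is itself a well-defined ancestor ERM.

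The one step requiring care is the identification $\fold(x) = f^{\parent(g)}(x)$ for $x \in g$ at line 7, which is precisely where the hierarchical structure and the search order are essential; this is the main (if modest) obstacle. The root-to-$x$ path is the unique chain $\cX \supset \cdots \supset \parent(g) \supset g \supset \cdots$ passing through one node per level. Every proper ancestor of $g$ sits at a strictly smaller depth and so was already finalized, while $g$ (at line 7, before its own update) and all of its descendants have not yet been assigned. No already-finalized node off this chain can intercept $x$: a node at the same depth as $g$ is disjoint from $g$ by the hierarchical property (same-level nodes are disjoint), so it cannot contain $x$, and a node at greater depth is finalized only if visited, which under breadth-first order cannot yet have happened for descendants of $g$. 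Hence the deepest already-finalized node containing $x$ is exactly $\parent(g)$, giving $\fold(x) = f^{\parent(g)}(x)$. Once these evaluation semantics are pinned down, the induction is routine and relies only on the breadth-first ordering together with the two defining properties of the hierarchical tree, namely that children are strictly contained in their parents and that same-level nodes are disjoint.
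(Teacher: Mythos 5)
Your proposal is correct and follows essentially the same route as the paper's proof: an induction over the breadth-first order, with the root (global ERM) as the base case and the hierarchical structure (parents visited before children, same-level nodes disjoint) driving the inductive step. The only difference is that you make explicit the evaluation semantics $\fold(x) = f^{\parent(g)}(x)$ and the working-predictor invariant, details the paper's terser induction leaves implicit.
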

\begin{proof}
This just follows by induction. For the first step of Algorithm \ref{alg:tree}, $\fold$ is simply $h \in \argmin_{h \in \mathcal{H}} L_{S}(h)$, the ERM predictor over all of $\mathcal{X}$. Of course, $g \subset \mathcal{X}$ for any $g.$ Assume the lemma for all $g' \in \mathcal{G}'$, the set of already visited nodes. Suppose we are on step $g \in \mathcal{G}$ in our BFS. Then, if $x \in g,$ by hierarchical structure and BFS, $x \in g'$ for some $g \subset g'$ because we've visited all parents before their children. Therefore, $\fold$ uses $h^{g'}$ for some $g' \supset g.$ 
\end{proof}

Lemma \ref{lemma:fold} allows us to apply Lemma \ref{lemma:group_unifconv}, our conditional uniform convergence bound, on $\fold$, as it is functionally equivalent to some $h \in \mathcal{H},$ our benchmark hypothesis class.

The key to Algorithm \ref{alg:tree} is that each update operation at any node $g$ does not make $f$ violate the error bounds it satisfied further up the tree. We observe that, at an update iteration (when $\mathrm{err}_g \geq 0$), either we accept the (conditional) ERM predictor $f^g := \hat{h}^g$ or we inherit the parent's working predictor $f^g := f^{\parent(g)}$. See Figure \ref{fig:treepend_pf}.

\begin{figure}[H]
    \centering
	\includegraphics[width=0.5\columnwidth]{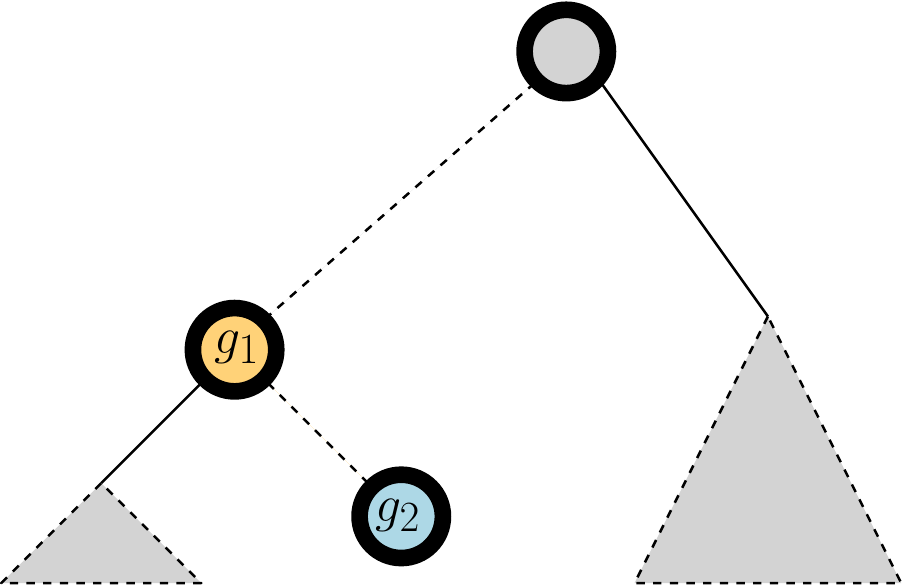}
	\caption{Let $g_1$, the yellow node, be a group that Algorithm \ref{alg:tree} has already seen. Suppose $f$ updates on $g_2.$ We see that $g_1$ is on the path from $g_2$ to the root. We need to show that the inequality for $g_1$ is not violated after the update.}
	\label{fig:treepend_pf}
\end{figure}

The next theorem, Theorem \ref{thrm:treepend_correctness}, establishes the correctness of the algorithm when it terminates. This is stated in the main body as Theorem \ref{thrm:treepend_correctness_main}, and we state it here as Theorem \ref{thrm:treepend_correctness}.

\begin{theorem}[Correctness of \mgltree]\label{thrm:treepend_correctness}
Let $\mathcal{G}$ be a hierarchically structured collection of groups, let $S = \{(x_i, y_i)\}_{i = 1}^n$ be $n$ i.i.d. training data drawn from any distribution $\cD$ over $\mathcal{X} \times \mathcal{Y}$, and let $\epsilon_n: \mathcal{G} \rightarrow (0, 1)$ be any error rate function. Then, Algorithm \ref{alg:tree} run on these parameters outputs a predictor $f: \mathcal{X} \rightarrow \mathcal{A}$ satisfying:
\begin{equation}\label{eqn:treepend_ineq}
    L_{S}(f \mid g) \leq \inf_{h \in \mathcal{H}} L_{S}(h \mid g) + \epsilon_n(g), \quad \text{ for all } g \in \mathcal{G}.
\end{equation}
\end{theorem}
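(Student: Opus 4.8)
The plan is to prove the statement purely at the level of the empirical distribution $S$; generalization to $\cD$ is handled by the later theorems via uniform convergence. The argument rests on two observations: that the bound holds for $g$ at the moment \mgltree finalizes the node $g$, and that no subsequent step can increase $L_S(f \mid g)$. First I would unpack the test in line 7. Writing $\fold$ for the current tree and recalling $\hat h^g = \argmin_{h \in \cH} L_S(h \mid g)$, the working prediction on $g$ before the update is the inherited $f^{\parent(g)}$, so $\mathrm{err}_g = L_S(\fold \mid g) - \inf_{h \in \cH} L_S(h \mid g) - \epsilon_n(g)$. Hence $\mathrm{err}_g \ge 0$ means the inherited predictor misses the target margin on $g$, while $\mathrm{err}_g < 0$ means it already meets it. This immediately gives the first observation: if $\mathrm{err}_g < 0$ we keep $f^g := f^{\parent(g)}$ and the inequality is the definition of $\mathrm{err}_g < 0$; if $\mathrm{err}_g \ge 0$ we set $f^g := \hat h^g$, and since no descendant of $g$ has yet been visited, all of $g$ is predicted by $\hat h^g$, so $L_S(f \mid g) = \inf_{h \in \cH} L_S(h \mid g)$, which trivially satisfies the bound.

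The crux is the monotonicity claim: once $g$ is finalized, no later step increases $L_S(f \mid g)$. Two structural facts drive this. By the breadth-first ordering together with the nesting dichotomy of Definition \ref{def:tree}, any node $g'$ visited after $g$ that meets $g$ must be a strict descendant $g' \subsetneq g$ --- intersecting nodes are nested, a proper superset of $g$ is a shallower ancestor already visited, and same-depth nodes are disjoint. Thus every later edit touching the region $g$ occurs strictly inside a descendant. Moreover such an edit fires only when $\mathrm{err}_{g'} \ge 0$, replacing the prediction on $g'$ by $\hat h^{g'}$; the firing condition gives $L_S(\fold \mid g') \ge \inf_{h \in \cH} L_S(h \mid g') + \epsilon_n(g') \ge L_S(\hat h^{g'} \mid g')$, so the empirical risk restricted to $g'$ weakly decreases. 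Since $g' \subseteq g$ and the examples of $g \setminus g'$ are untouched, the disjoint decomposition of group-conditional empirical risk (Lemma \ref{lemma:disjoint_decomp_emp}) shows the net change equals $\tfrac{n_{g'}}{n_g}\big(L_S(\hat h^{g'} \mid g') - L_S(\fold \mid g')\big) \le 0$.

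Chaining these two observations over the finitely many BFS steps yields $L_S(f \mid g) \le \inf_{h \in \cH} L_S(h \mid g) + \epsilon_n(g)$ for every $g \in \cG$, as claimed. I expect the monotonicity claim to be the main obstacle, and within it the key point is the purely combinatorial guarantee that all future edits to the region $g$ are confined to strict descendants of $g$: this is exactly where the breadth-first order and the laminar (hierarchical) structure of $\cG$ are indispensable, since a sideways or upward edit could otherwise undo a bound already established for $g$. Once that confinement is in place, the sign of each per-update change is a one-line consequence of the update rule.
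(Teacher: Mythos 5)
Your proof is correct, and its overall architecture matches the paper's: induction over the breadth-first traversal, with the laminar structure plus BFS order confining all later edits to strict descendants, and Lemma \ref{lemma:disjoint_decomp_emp} converting a per-region improvement into preservation of every previously established bound. (The paper runs the induction ``from the update's point of view''---showing the step at $g^*$ preserves all earlier inequalities---while you run it ``from the finalized node's point of view''; these are the same argument reindexed.) The one genuine difference is how the key monotonicity inequality is justified. The paper shows $L_S(\hat{h}^{g^*} \mid g^*) \leq L_S(\fold \mid g^*)$ by appealing to ERM optimality of $\hat{h}^{g^*}$ over $\cH$, which requires Lemma \ref{lemma:fold_main} to know that $\fold$ restricted to $g^*$ coincides with some member of $\cH$. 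You instead read the inequality directly off the firing condition: an update at $g'$ occurs only when $\mathrm{err}_{g'} \geq 0$, i.e. $L_S(\fold \mid g') \geq L_S(\hat{h}^{g'} \mid g') + \epsilon_n(g')$, so the replacement weakly decreases the empirical risk on $g'$ with no reference to $\cH$ at all. This makes your proof of the correctness theorem self-contained---Lemma \ref{lemma:fold_main} is not needed for it (it is still needed for the distributional Theorems \ref{thrm:tree_finite} and \ref{thrm:tree_infinite}, where uniform convergence must be applied to $\fold$)---and it even yields the slightly stronger quantitative fact that each firing update decreases $L_S(\cdot \mid g)$ by at least $(n_{g'}/n_g)\,\epsilon_n(g')$. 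You also explicitly verify that the bound for $g$ holds at the moment $g$ is processed (in both the update and no-update cases), a point the paper leaves implicit.
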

\begin{proof}
Consider any $g^* \in \mathcal{G}$, which corresponds to a node in the decision tree, $f$. We analyze the step of the algorithm's breadth-first search concerned with $g^*$ and argue that this step does not violate any inequalities of the form \eqref{eqn:treepend_ineq} satisfied up until this step. This is sufficient to prove correctness because $g^*$ is an arbitrary node, and the tree traversal will visit every node, so \eqref{eqn:treepend_ineq} will be satisfied for all nodes in the tree.

On the current step for $g^*$, the algorithm can either decide to update or not. If the if-condition is not satisfied and it does not update, then we keep $f := f_\textrm{old}$ from the previous round, and we are done. All inequalities previously satisfied must continue to be satisfied because $f$ did not change.

Suppose, then, that the algorithm \textit{did} update. Let $P_{g^*} := \{\hat{g}_1, \dots, \hat{g}_k\}$ be the set of nodes on the path from $g^*$ to the root of the tree, including the root. Then, for all nodes $g \not \in P_{g^*},$ $f$ continues to satisfy $\eqref{eqn:treepend_ineq}$ because $g \cap g^* = \emptyset$, so for $x \in g,$ $f(x) = \fold(x)$, as before. This is due to the hierarchical structure --- any node \textit{not} on the path from $g^*$ back up to the root must be disjoint from $g^*.$

Now, consider our final case: any $\hat{g}_j \in P_{g^*}$ for $j \in [k],$ a node back up to the root from $g^*.$ Again, we are in the case where we updated, so $f$ has changed. By the hierarchical structure, if $\hat{g}_j$ is further up the tree from $g^*$, then $g^* \subset \hat{g}_j$. We need to show that $L_{S}(f \mid \hat{g}_j) \leq L_{S}(f_\mathrm{old} \mid \hat{g}_j).$ Denote $\overline{g}$ as the complement of $g$, and apply Lemma \ref{lemma:disjoint_decomp_emp}:
\begin{align}
    L_{S}(f \mid \hat{g}_j) &= L_{S}(f \mid (\hat{g}_j \cap g^*) \cup (\hat{g}_j \cap \overline{g^*})) \nonumber \\
    &= L_{S}(f \mid \hat{g}_j \cap g^*)\frac{\Pr[x \in \hat{g}_j \cap g^*]}{\Pr[x \in \hat{g}_j]} + L_{S}(f \mid \hat{g}_j \cap \overline{g^*})\frac{\Pr[x \in (\hat{g}_j \cap \overline{g^*})]}{\Pr[x \in \hat{g}_j]} \nonumber \\
    &= L_{S}(f \mid g^*) \Pr[x \in g^* \mid x \in \hat{g}_j] + L_{S}(f \mid \hat{g}_j \cap \overline{g^*})\Pr[x \in \overline{g^*} \mid x \in \hat{g}_j] \nonumber \\
    &= L_{S}\Pr[x \in g^* \mid x \in \hat{g}_j] + L_{S}(f_\mathrm{old}|\hat{g}_j \cap \overline{g^*})\Pr[x \in \overline{g^*} \mid x \in \hat{g}_j]. \label{eq:hatf_decomp}
\end{align}
The last equality is a result of how the $f$ operates as a decision tree. Observe that, on $x \in g^*$, our updated decision tree $f$ uses $h^* \in \argmin_{h \in \mathcal{H}} L_{S}(h \mid g^*).$ On $x \in \hat{g}_j \cap \overline{g^*},$ our decision tree $f$ simply operates as it did before the update: $f(x) = f_{\mathrm{old}}(x).$ By definition of $h^*$ as ERM predictor,
\begin{equation}\label{eq:erm_hstar}
    L_{S}(h^* \mid g^*) \leq L_{S}(f_\mathrm{old} \mid g^*),
\end{equation}
so combining \eqref{eq:hatf_decomp} and \eqref{eq:erm_hstar},
\begin{align}
    L_{S}(f \mid \hat{g}_j) 
    &= L_{S}(h^* \mid g^*)\Pr[x \in g^* \mid x \in \hat{g}_j] + L_{S}(f_\mathrm{old} \mid \hat{g}_j \cap \overline{g^*})\Pr[x \in \overline{g^*} \mid x \in \hat{g}_j]\nonumber \\
    &\leq L_{S}(f_\mathrm{old} \mid g^*)\Pr[x \in g^* \mid x \in \hat{g}_j] + L_{S}(f_\mathrm{old} \mid \hat{g}_j \cap \overline{g^*})\Pr[x \in \overline{g^*} \mid x \in \hat{g}_j] \nonumber \\
    &= L_{S}(f_\mathrm{old} \mid \hat{g}_j) \leq \min_{h \in \mathcal{H}} L_{S}(h \mid \hat{g}_j) + \epsilon_n(\hat{g}_j), \label{eq:treepend_correct}
\end{align}
where the final equality in \eqref{eq:treepend_correct} follows from another application of Lemma \ref{lemma:disjoint_decomp_emp}. Because 
\begin{equation}\label{eq:treepend_correct_final}
    L_{S}(f \mid \hat{g}_j) \leq \min_{h \in \mathcal{H}} L_{S}(h \mid \hat{g}_j) + \epsilon_n(\hat{g}_j),
\end{equation}
where $f$ is the updated decision list, we see that $f$ does not violate any of the inequalities for the nodes $\hat{g}_j$ in the path up to the root, finishing our proof.
\end{proof}

We state the result for $\cH$ with finite VC dimension to accompany Theorem \ref{thrm:tree_finite} here.

\begin{theorem}\label{thrm:tree_infinite}
For $\cH$ with VC dimension bounded by $d > 0$ and the other conditions of Theorem \ref{thrm:tree_finite}, running Algorithm \ref{alg:tree} with input
$$
\epsilon_n(g) := 18\sqrt{\frac{2d\log(16|\cG|n/\delta)}{n_g}}
$$
outputs a predictor $f$ that achieves
\begin{equation}
    L_{\cD}(f \mid g) \leq \inf_{h \in \cH} L(h \mid g) + 36 \sqrt{\frac{2d\log(16|\cG|n/\delta)}{n_g}} \quad \text{for all } g \in \cG.
\end{equation}
\end{theorem}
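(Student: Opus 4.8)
The plan is to obtain the population bound by mirroring, at the level of $\cD$, the empirical monotonicity argument behind Theorem~\ref{thrm:treepend_correctness}, rather than by transferring the empirical guarantee through a single deviation bound on the final tree $f$. Avoiding the latter is the main conceptual point: the final predictor $f$ restricted to a group $g$ is a data-dependent patchwork of hypotheses $\hat h^{g'}$ on the descendant subgroups of $g$, so decomposing $L_{\cD}(f\mid g)$ over those pieces via Lemma~\ref{lemma:disjoint_decomp} and applying uniform convergence piecewise would only put the descendant sample sizes $n_{g'}\le n_g$ in the denominators --- exactly the $\sqrt{|\cG|}$-type blow-up that Theorem~\ref{thrm:g_erm} suffers and that we are trying to beat. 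The fix is to never treat $f$ restricted to $g$ as a whole, but to control $L_{\cD}(f\mid g)$ at the single moment $g$ is visited (when, by Lemma~\ref{lemma:fold}, the current tree equals a single $\hat h^{g'}\in\cH$ on $g$) and then argue that every subsequent update can only decrease it.

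First I would condition on the event of Lemma~\ref{lemma:group_unifconv_vc}, which holds with probability $1-\delta$ and gives $\lvert L_{\cD}(h\mid g)-L_S(h\mid g)\rvert \le \mathrm{dev}(g)$ for all $h\in\cH$ and $g\in\cG$, where $\mathrm{dev}(g):=9\sqrt{(2d\log(2|\cG|n)+\log(8/\delta))/n_g}$. The one-line arithmetic input is that the chosen margin dominates twice this deviation: since $\epsilon_n(g)=18\sqrt{2d\log(16|\cG|n/\delta)/n_g}$ and $2\,\mathrm{dev}(g)=18\sqrt{(2d\log(2|\cG|n)+\log(8/\delta))/n_g}$, comparing radicands leaves a gap of $(2d-1)\log(8/\delta)\ge 0$ whenever $d\ge 1$, so $\epsilon_n(g)\ge 2\,\mathrm{dev}(g)$ for every $g$. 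This slack is precisely what lets each empirical decision of the algorithm remain valid in population.

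Next I would prove a ``base'' inequality: at the step visiting $g$, the current tree $f_{\mathrm{cur}}$ satisfies $L_{\cD}(f_{\mathrm{cur}}\mid g)\le \inf_{h\in\cH}L_{\cD}(h\mid g)+2\epsilon_n(g)$. By Lemma~\ref{lemma:fold} the current tree equals a single hypothesis in $\cH$ on $g$ (its descendants are unvisited), so uniform convergence applies to it directly. In the update case $f_{\mathrm{cur}}=\hat h^g$, and chaining $L_{\cD}(\hat h^g\mid g)\le L_S(\hat h^g\mid g)+\mathrm{dev}(g)\le L_S(h^*_g\mid g)+\mathrm{dev}(g)\le L_{\cD}(h^*_g\mid g)+2\,\mathrm{dev}(g)$ (with $h^*_g$ the population optimizer and using that $\hat h^g$ minimizes empirical risk) gives the bound with room to spare. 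In the no-update case, $\mathrm{err}_g<0$ reads $L_S(f^{\parent(g)}\mid g)<L_S(\hat h^g\mid g)+\epsilon_n(g)$, and the two uniform-convergence substitutions (on $f^{\parent(g)}$ and on $h^*_g$), together with the empirical minimality of $\hat h^g$ and $2\,\mathrm{dev}(g)\le\epsilon_n(g)$, yield $L_{\cD}(f^{\parent(g)}\mid g)\le \inf_{h\in\cH} L_{\cD}(h\mid g)+2\epsilon_n(g)$.

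Finally I would establish population monotonicity and combine. When the algorithm later updates at a descendant $g^{*}\subseteq g$, it does so because $L_S(\hat h^{g^{*}}\mid g^{*})\le L_S(f_{\mathrm{old}}\mid g^{*})-\epsilon_n(g^{*})$; applying uniform convergence to $\hat h^{g^{*}}\in\cH$ and, via Lemma~\ref{lemma:fold}, to $f_{\mathrm{old}}$ on $g^{*}$ (again a single hypothesis there), and using $\epsilon_n(g^{*})\ge 2\,\mathrm{dev}(g^{*})$, gives $L_{\cD}(\hat h^{g^{*}}\mid g^{*})\le L_{\cD}(f_{\mathrm{old}}\mid g^{*})$. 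Decomposing the ancestor's risk exactly as in the proof of Theorem~\ref{thrm:treepend_correctness}, but with Lemma~\ref{lemma:disjoint_decomp} in place of its empirical analogue, then shows $L_{\cD}(f_{\mathrm{new}}\mid g)\le L_{\cD}(f_{\mathrm{old}}\mid g)$ for every ancestor $g\supseteq g^{*}$. Since breadth-first order never revisits an ancestor and updates at nodes disjoint from $g$ leave $f$ unchanged on $g$, the terminal value $L_{\cD}(f\mid g)$ is no larger than at the step $g$ was visited, which is at most $\inf_{h\in\cH}L_{\cD}(h\mid g)+2\epsilon_n(g)=\inf_{h\in\cH}L_{\cD}(h\mid g)+36\sqrt{2d\log(16|\cG|n/\delta)/n_g}$. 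The only change from the finite-$\cH$ proof of Theorem~\ref{thrm:tree_finite} is substituting Lemma~\ref{lemma:group_unifconv_vc} for Lemma~\ref{lemma:group_unifconv_finite} and re-verifying $\epsilon_n(g)\ge 2\,\mathrm{dev}(g)$ above; the hard part throughout is resisting the naive piecewise decomposition in favor of the commit-moment-plus-monotonicity argument.
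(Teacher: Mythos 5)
Your proposal is correct and is essentially the paper's own argument: both condition on the group-conditional uniform convergence event, use Lemma~\ref{lemma:fold_main} to treat the working tree as a single benchmark hypothesis on the group under consideration, exploit the margin $\epsilon_n(g) \geq 2\,\mathrm{dev}(g)$ to turn the algorithm's empirical update/no-update decisions into population bounds, and use the disjoint-decomposition lemma to show updates at descendants never increase ancestors' conditional risk. Your ``commit-moment plus population monotonicity'' packaging is just the paper's induction (which maintains the bound for all visited groups, with the same cancellation $2\,UC(g) - \epsilon_n(g) \leq 0$ in the ancestor case) viewed from a fixed group, and your explicit check that $2d\log(16|\cG|n/\delta) \geq 2d\log(2|\cG|n) + \log(8/\delta)$ merely spells out a step the paper leaves implicit when it substitutes the VC-based $UC(g)$.
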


We may now prove Theorem \ref{thrm:tree_finite} and Theorem \ref{thrm:tree_infinite} for the multi-group learning guarantee of Algorithm \ref{alg:tree}, \mgltree.

\begin{proof}
We will show this by induction on each iteration (visited group $g$) of Algorithm \ref{alg:tree} for finite $\cG$ and $\cH$. Condition on the event that we drew our i.i.d. dataset of size $n$ and Lemma \ref{lemma:group_unifconv_finite} holds. For ease of notation, we will denote
$$UC(g) := 9\sqrt{\frac{2\log(|\mathcal{G}||\mathcal{H}|) + \log(8/\delta)}{n_g}},$$
so, for all groups $g \in \mathcal{G}$, we have, uniformly over $h \in \mathcal{H}$:
\begin{equation*}
    |L_{\cD}(h \mid g) - L_{S}(h \mid g)| \leq UC(g).
\end{equation*}
Note that we choose $\epsilon_n(g) = 2UC(g)$. Our goal will thus be to show that, for all $g \in \mathcal{G}$, the decision tree $f$ satisfies
\begin{equation}\label{eq:induct_goal}
    L_{\cD}(f \mid g) \leq \min_{h \in \mathcal{H}} L_{\cD}(h \mid g) + 4UC(g).
\end{equation}
For the base case, we just need to show that our starting decision tree $f$, where each node is initialized with $h_0 \in \argmin_{h \in \mathcal{H}} L_{S}(h)$, satisfies inequality \eqref{eq:induct_goal} for $g = \mathcal{X}.$ Note that this is just standard unconditional risk. This is immediately true from standard uniform convergence and $f$ using the ERM predictor $h_0$, so
\begin{equation*}
    L_{\cD}(f) \leq \min_{h \in \mathcal{H}} L_{\cD}(h) + 4UC(g).
\end{equation*}
For the inductive hypothesis, assume that we are on the step of the BFS in Algorithm \ref{alg:tree} concerned with a group $g \in \mathcal{G}.$ Denote $\mathcal{G}'$ as the nodes that we already visited in our BFS, and $\fold$ as the decision list \textit{before} the possible update, as before. Then,
\begin{equation}\label{eq:induct_hyp}
    L_{\cD}(\fold \mid g') \leq \min_{h \in \mathcal{H}} L_{\cD}(h \mid g') + 4UC(g')
\end{equation}
holds for all $g' \in \mathcal{G}'$.

To prove the induction, we aim to show that \eqref{eq:induct_goal} is true for our current iteration's node $g$ as well, regardless of whether we updated $f$ or not. Let $f$ be the decision list \textit{after} the update step of Algorithm \ref{alg:tree}, and let $\hat{h}^g \in \argmin_{h \in \mathcal{H}} L_{S}(h \mid g).$ We want to show, for all $h \in \mathcal{H}$:
\begin{align}
    &L_{\cD}(f \mid g) - L_{\cD}(h \mid g) \leq 4UC(g) \label{eq:g1}\\
    &L_{\cD}(f \mid g') - L_{\cD}(h \mid g') \leq 4UC(g'), \text{  for all } g' \in \mathcal{G'}. \label{eq:g2}
\end{align}
So, showing \eqref{eq:g1} and \eqref{eq:g2} are our goals for each of our two cases: whether we update or not. These two cases depend on whether $L_{S}(\fold \mid g) - L_{S}(\hat{h}^g \mid g) \leq \epsilon_n(g)$ or $L_{S}(\fold \mid g) - L_{S}(\hat{h}^g \mid g) > \epsilon_n(g),$ the central comparison in our algorithm.

Suppose we are in the first case, when we \textit{do not} update. Because $\mathrm{err}_g \leq 0$, we have $L_{S}(\fold \mid g) - L_{S}(\hat{h}^g \mid g) \leq \epsilon_n(g)$. Then, because we do not update, $f = \fold$, so \eqref{eq:g2} is immediately fulfilled because $f$ is functionally equivalent to $\fold,$ which, by induction satisfied the inequality already for all $g' \in \mathcal{G}'.$ It suffices to show \eqref{eq:g1} for this case. Fix any $h \in \mathcal{H}.$ First, with Lemma \ref{lemma:fold}, we can apply conditional uniform convergence on $f = \fold$. Then,
\begin{align*}
    L_{\cD}(f \mid g) - L_{\cD}(h \mid g) &= L_{\cD}(\fold \mid g) - L_{\cD}(h \mid g) \\
    &\leq L_{S}(\fold \mid g) - L_{\cD}(h \mid g) + UC(g) \\
    &\leq L_{S}(\fold \mid g) - L_{S}(h \mid g) + 2UC(g) \\
    &\leq L_{S}(\fold \mid g) - L_{S}(\hat{h}^g \mid g) + 2UC(g) \\
    &\leq \epsilon_n(g) + 2UC(g) = 4UC(g).
\end{align*}
The first inequality is from Lemma \ref{lemma:fold} and Lemma \ref{lemma:group_unifconv_finite}. The third inequality is from the fact that $\hat{h}^g$ is the optimal ERM predictor conditioned on $x \in g.$ This proves \eqref{eq:g1} for the first case where we do not update $f$.

Suppose we are in the second case. In this case, we \textit{do} update and we have that $L_{S}(\fold \mid g) - L_{S}(\hat{h}^g \mid g) > \epsilon_n(g)$. In this case, $\fold$ is the decision tree before the update, and $f$ is the tree after the update. Its \textit{working predictor} has been updated to $\hat{h}^g$. Immediately, we have $L_{S}(f \mid g) - L_{S}(\hat{h}^g \mid g) = 0$ for the current node $g$, so, for any $h \in \mathcal{H}$,
\begin{align*}
    L_{\cD}(f \mid g) - L_{\cD}(h \mid g) &= L_{\cD}(\hat{h}^g \mid g) - L_{\cD}(h \mid g) \\
    &\leq L_{S}(\hat{h}^g \mid g) - L_{S}(h \mid g) + 2UC(g) \\
    &\leq 2UC(g) \leq 4UC(g).
\end{align*}
The first equality is because $f$ is functionally equivalent to $\hat{h}^g$ on all $x \in g$, and the first inequality comes from applying Lemma \ref{lemma:group_unifconv_finite} twice to get sample risk for $h$ and $\hat{h}^g.$ This proves \eqref{eq:g1}. It suffices to prove \eqref{eq:g2}. Consider any $g' \in \mathcal{G}'$, the set of already visited groups. There are two types nodes in $G'$: $g'_p$, the nodes on the path back up to the root from $g$, and $g'_{np}$, the nodes \textit{not} on the path back up to the root from $g.$

For any $g'_{np},$ by hierarchical structure, $g'_{np} \cap g = \emptyset$. So, for all $x \in g'_{np}$, the predictor just outputs as it did before the update: $f = \fold$. Then, for any $h \in \mathcal{H}$, we maintain the same guarantee we had before, fulfilling \eqref{eq:g2} for all $g'_{np}$:

\begin{equation*}
    L_{\cD}(f \mid g_{np}') - L_{\cD}(h \mid g'_{np}) = L_{\cD}(\fold \mid g_{np}') - L_{\cD}(h \mid g'_{np}) \leq 4UC(g'_{np}).
\end{equation*}

To finish the proof, it suffices to show that \eqref{eq:g2} is still fulfilled for all $g'_p$, the nodes on a path back up to the root from $g.$ Again, fix some $h \in \mathcal{H}.$ Using Lemma \ref{lemma:disjoint_decomp}:
\begin{align*}
    L_{\cD}(f \mid g_p') &= L_{\cD}(f \mid (g_p' \cap g) \cup (g_p' \setminus g)) \\
    &= \frac{\Pr[x \in (g_p' \cap g)]}{\Pr[x \in g_p']} L_{\cD}(f \mid g_p' \cap g) + \frac{\Pr[x \in (g_p' \setminus g)]}{\Pr[x \in g_p']} L_{\cD}(f \mid g_p' \setminus g) \\
    &= \frac{\Pr[x \in g]}{\Pr[x \in g_p']} L_{\cD}(f \mid g) + \frac{\Pr[x \in (g_p' \setminus g)]}{\Pr[x \in g_p']} L_{\cD}(f \mid g_p' \setminus g).
\end{align*}
The last equality comes from $g \subseteq g'_p$ because all nodes are contained in their ancestors. The updated $f$ now uses $\hat{h}^g$ for all $x \in g.$ Therefore:
\begin{equation*}
    L_{\cD}(f \mid g_p') = \frac{\Pr[x \in g]}{\Pr[x \in g_p']} L_{\cD}(\hat{h}^g \mid g) + \frac{\Pr[x \in (g_p' \setminus g)]}{\Pr[x \in g_p']} L_{\cD}(\fold \mid g_p' \setminus g).
\end{equation*}
Apply Lemma \ref{lemma:group_unifconv_finite} for conditional uniform convergence on $\hat{h}^g$:
\begin{equation*}
    L_{\cD}(f \mid g_p') \leq \frac{\Pr[x \in g]}{\Pr[x \in g_p']} L_{S}(\hat{h}^g \mid g) + \frac{\Pr[x \in (g_p' \setminus g)]}{\Pr[x \in g_p']} L_{\cD}(\fold \mid g_p' \setminus g) + \frac{\Pr[x \in g]}{\Pr[x \in g_p']} UC(g).
\end{equation*}
Adding and subtracting $\epsilon_n(g)$ to use the fact that we are in the update case where $L_{S}(\fold \mid g) >  L_{S}(\hat{h}^g \mid g) + \epsilon_n(g)$:
\begin{align*}
    &= \frac{\Pr[x \in g]}{\Pr[x \in g_p']} L_{S}(\hat{h}^g \mid g) + \frac{\Pr[x \in g]}{\Pr[x \in g_p']} \epsilon_n(g) \\
    &\qquad{} + \frac{\Pr[x \in (g_p' \setminus g)]}{\Pr[x \in g_p']} L_{\cD}(\fold \mid g_p' \setminus g) + \frac{\Pr[x \in g]}{\Pr[x \in g_p']} UC(g) - \frac{\Pr[x \in g]}{\Pr[x \in g_p']} \epsilon_n(g) \\
    &\leq \frac{\Pr[x \in g]}{\Pr[x \in g_p']} L_{S}(\fold \mid g) + \frac{\Pr[x \in (g_p' \setminus g)]}{\Pr[x \in g_p']} L_{\cD}(\fold \mid g_p' \setminus g) + \frac{\Pr[x \in g]}{\Pr[x \in g_p']} UC(g) - \frac{\Pr[x \in g]}{\Pr[x \in g_p']} \epsilon_n(g).
\end{align*}
Finally, using Lemma \ref{lemma:fold} on $\fold$ on $x \in g$, applying Lemma \ref{lemma:group_unifconv_finite} again, and recombining terms with Lemma \ref{lemma:disjoint_decomp},
\begin{align*}
    &\leq \frac{\Pr[x \in g]}{\Pr[x \in g_p']} L_{\cD}(\fold \mid g) + \frac{\Pr[x \in (g_p' \setminus g)]}{\Pr[x \in g_p']} L_{\cD}(\fold \mid g_p' \setminus g) + \frac{2\Pr[x \in g]}{\Pr[x \in g_p']} UC(g) - \frac{\Pr[x \in g]}{\Pr[x \in g_p']} \epsilon_n(g) \\
    &= L_{\cD}(\fold \mid g'_p) + \frac{2\Pr[x \in g]}{\Pr[x \in g_p']} UC(g) - \frac{\Pr[x \in g]}{\Pr[x \in g_p']} \epsilon_n(g) \leq L_{\cD}(h \mid g'_p) + 4UC(g'_p).
\end{align*}
The final line follows by our choice of $\epsilon_n(g) = 2UC(g)$ and the inductive hypothesis on $g'_p.$ This shows \eqref{eq:g2} for the second case where we update $f,$ and thus completes our proof.

The proof for $\mathcal{H}$ with VC dimension $d > 0$ is identical, but with
$$UC(g) = 9\sqrt{\frac{2d\log(16|\cG|n/\delta)}{n_g}}$$
and $\epsilon_n(g) = 2UC(g)$.
\end{proof}

\section{Description of \prepend (Algorithm 1 of \citet{tosh_simple_2022})}\label{sec:prepend}
In this section, we give a brief description of the \prepend Algorithm of \citet{tosh_simple_2022}, which is closely related to the concurrent decision list algorithm of \citet{globus-harris_algorithmic_2022}. This algorithm outputs a decision list of predictors $h \in \cH$, where the decision nodes are groups $g \in \cG$. Quoting from \citet{tosh_simple_2022}, such a decision list of length $T$ predicts as follows on input $x \in \cX$ by following:
\begin{center}
    \textbf{if} $g_T(x) = 1$ \textbf{then }\text{return} $h_T(x)$ \textbf{else if} $g_{T-1}(x) = 1$ \textbf{then }\text{return} $h_{T-1}(x)$ \textbf{else if} \dots \textbf{else}\text{ return } $h_0(x).$ 
\end{center}
We shall represent such a decision list by alternating groups and hypotheses; a decision list of length $T$ can be represented as: $f_T := [g_T, h_T, g_{T-1}, h_{T-1}, \dots, g_1, h_1, h_0].$ To construct such a predictor, the \prepend algorithm maintains a decision list $f_t$ and proceeds in rounds $t = 1, 2, 3, \dots$ At round $t$, the algorithm checks whether there exists a group-hypothesis pair $(g,h)$ that violates a desired empirical error bound. If such a pair exists, the algorithm ``prepends'' the group and hypothesis to the front of the list; if no such pair exists, the algorithm terminates. 

Algorithm \ref{alg:prepend} displays the algorithm as presented in \citet{tosh_simple_2022}. 

\begin{algorithm}
\caption{\prepend}\label{alg:prepend}
\begin{algorithmic}[1]
\REQUIRE 
\STATE $S$, a training dataset. 
\STATE Collection of groups $\mathcal{G} \subseteq 2^{\mathcal{X}}$. 
\STATE Error rates $\epsilon_n(g) \in (0, 1)$ for all $g \in \mathcal{G}$
\ENSURE Decision list $f_T: \mathcal{X} \rightarrow \mathcal{Z}$.
\STATE Compute $h_0 \in \argmin_{h \in \cH} L_S(h)$
\FOR {t = 0, 1, 2, \dots,}
\STATE Compute:
$$
(g_{t + 1}, h_{t + 1}) \in \argmax_{(g,h) \in \cG \times \cH} L_S(f_t \mid g) - L_S(h \mid g) - \epsilon_n(g).
$$
\IF{$L_S(f_t \mid g_{t + 1}) - L_S(h_{t + 1} \mid g) \geq \epsilon_n(g_{t + 1})$}
\STATE Prepend $(g_{t + 1}, h_{t + 1})$ to $f_t$ to obtain
$$
f_{t + 1} := [g_{t + 1}, h_{t + 1}, g_t, h_t, \dots, g_1, h_1, h_0].
$$
\ELSE
\STATE \textbf{return} $f_t: \mathcal{X} \rightarrow \cZ$, a decision list predictor.
\ENDIF
\ENDFOR
\end{algorithmic}
\end{algorithm}

\section{Experiment hyperparameters}\label{sec:hyperparams}
In this section, we include the hyperparameters used for training each model class. Logistic regression, decision trees, and random forests all used the implementation from \texttt{scikit-learn} \footnote{https://scikit-learn.org/stable/}. XGBoost was implemented using the open-source \texttt{xgboost} implementation of \citet{chen_xgboost_2016} \footnote{https://xgboost.readthedocs.io/en/stable/}. All the hyperparameters not listed were set to the default values in \texttt{scikit-learn} or \texttt{xgboost}. 

\begin{table}[h!]
\centering
\begin{tabular}{|c c|} 
 \hline
 Model & Hyperparameters\\
 \hline
 Logistic Regression & \texttt{loss = log\_loss}, \texttt{dual=False}, \texttt{solver=lbfgs} \\ 
 Decision Tree & \texttt{criterion = log\_loss}, $\texttt{max\_depth = \{2, 4, 8\}}$ \\
 Random Forest & \texttt{criterion = log\_loss}  \\
 XGBoost & \texttt{objective = binary:logistic}  \\
 \hline
\end{tabular}
\caption{Hyperparamter settings for each choice of benchmark hypothesis class.}
\label{table:hyperparams}
\end{table}

\section{Additional dataset details}\label{sec:dataset_details}
Our experiments span twelve datasets from the Folktables package of \citet{ding_retiring_2022}. These are comprised of nine \textit{statewide} datasets and three \textit{nationwide} datasets.

\textbf{Statewide datasets.} We consider nine different datasets corresponding to one of the three U.S. states of New York (NY), California (CA), and Florida (FL) and one of the three binary prediction tasks of Employment, Income, and Coverage. The description of each task is found in Section \ref{sec:experiment_setup}. 

We consider a couple of hierarchical group structures from subdividing the following categorical demographic attributes:
\begin{itemize}
    \item \texttt{race}. 6 total categories: $\texttt{R1}$ (White alone), $\texttt{R2}$ (Black or African American alone), $\texttt{R3+}$ (Native), $\texttt{R6+}$ (Asian or Pacific Islander), $\texttt{R7}$ (other race alone), and $\texttt{R8}$ (two or more races).
    \item \texttt{sex}. 2 total categories: $\texttt{M}$ (Male) and $\texttt{F}$ (Female).
    \item \texttt{age}. 3 total categories. For Income and Employment: $\texttt{Ya}$ ($\texttt{age} < 35$), $\texttt{Ma}$ ($35 \leq \texttt{age} < 60$), and $\texttt{Oa}$ ($\texttt{age} \geq 60$). For Coverage, $\texttt{Ma}$ is instead defined as $35 \leq \texttt{age} < 50$ and $\texttt{Oa}$ is instead defined as $\texttt{age} \geq 50$ because Coverage only concerns individuals whose age is less than 65.
    \item \texttt{edu}. 4 total categories: $\texttt{HS-}$ (no high school diploma), $\texttt{HS}$ (high school diploma or equivalent, but no college degree), $\texttt{COL}$ (associates or bachelor's degree), and $\texttt{COL+}$ (master's, professional, or doctorate degree beyond bachelor's).  
\end{itemize}

For each of the nine statewide datasets, we subdivide on each of the demographic attributes to attain hierarchically structured groups. We focus on two collections of groups:
\begin{itemize}
    \item \textbf{Attributes:} $\{ \texttt{race}, \texttt{sex}, \texttt{age} \}.$ Total of 54 subgroups (6 groups from \texttt{race}, 12 groups from $\texttt{race} \land \texttt{sex}$, and 36 groups from $\texttt{race} \land \texttt{sex} \land \texttt{age}$). 
    \item \textbf{Attributes:} $\{ \texttt{race}, \texttt{sex}, \texttt{edu} \}.$ Total of 66 subgroups (6 groups from $\texttt{race}$, 12 groups from $\texttt{race} \land \texttt{sex}$, and 48 groups from $\texttt{race} \land \texttt{sex} \land \texttt{edu}$). 
\end{itemize}
Tables \ref{table:statewide_datasets1} and \ref{table:statewide_datasets2} lists all the nine statewide datasets and the number of examples in each separate demographic attribute.  

\begin{table}[h!]
\centering
\begin{tabular}{|c c c c c c c c|} 
 \hline
 Dataset & All & \texttt{R1} & \texttt{R2} & \texttt{R3+} & \texttt{R6+} & \texttt{R7} & \texttt{R8}\\
 \hline\hline
 CA Emp. & 376035 & 231232 & 18831 & 3531 & 58147 & 46316 & 17978 \\ 
 NY Emp. & 196104 & 137179 & 25362 & 735 & 16304 & 11089 & 5435 \\
 FL Emp. & 196828 & 155682 & 26223 & 589 & 5356 & 4328 & 4650 \\
 CA Inc. & 190187 & 118212 & 8656 & 1585 & 31039 & 23285 & 7410 \\
 NY Inc. & 101270 & 72655 & 11970 & 340 & 8639 & 5407 & 2259 \\
 FL Inc. & 94507 & 75218 & 11978 & 267 & 2892 & 2314 & 1838 \\
 CA Cov. & 145994 & 82486 & 8652 & 1626 & 21820 & 24526 & 6884 \\
 NY Cov. & 71379 & 44632 & 11262 & 332 & 7244 & 5754 & 2155 \\
 FL Cov. & 73406 & 53841 & 12872 & 274 & 2350 & 2235 & 1834 \\
 \hline
\end{tabular}
\caption{Number of examples for the \texttt{race} attribute, for each of the nine datasets. ``Emp.'' stands for Employment, ``Inc.'' stands for Income, and ``Cov.'' stands for Coverage.}
\label{table:statewide_datasets1}
\end{table}

\begin{table}[h!]
\centering
\begin{tabular}{|c c c c c c c c c c |} 
 \hline
Dataset & \texttt{M} & \texttt{F} & \texttt{Ya} & \texttt{Ma} & \texttt{Oa} & \texttt{HS-} & \texttt{HS} & \texttt{COL} & \texttt{COL+}\\
 \hline\hline
 CA Emp. & 185603 & 190432 & 164324 & 124293 & 87418 & 126469 & 132156 & 81714 & 35696 \\ 
 NY Emp. & 94471 & 101633 & 81007 & 64422 & 50675 & 57694 & 71694 & 43783 & 22933 \\
 FL Emp. & 95281 & 101547 & 70632 & 63329 & 62867 & 53825 & 80479 & 44754 & 17770 \\
 CA Inc. & 101125 & 89062 & 65087 & 97507 & 27593 & 23840 & 80594 & 58824 & 26929 \\
 NY Inc. & 51408 & 49862 & 33349 & 51543 & 16378 & 8417 & 42215 & 33038 & 17600 \\
 FL Inc. & 48623 & 45884 & 28007 & 49451 & 17049 & 8150 & 44394 & 30289 & 11674 \\
 CA Cov. & 65287 & 80707 & 75484 & 33174 & 37336 & 43646 & 70617 & 25871 & 5860 \\
 NY Cov. & 31218 & 40161 & 36683 & 15235 & 19461 & 17649 & 35723 & 14282 & 3725 \\
 FL Cov. & 32686 & 40720 & 33552 & 16604 & 23250 & 17209 & 38354 & 14913 & 2930 \\
 \hline
\end{tabular}
\caption{Number of examples for the \texttt{sex}, \texttt{age}, and \texttt{edu} attributes, for each of the nine datasets. ``Emp.'' stands for Employment, ``Inc.'' stands for Income, and ``Cov.'' stands for Coverage.}
\label{table:statewide_datasets2}
\end{table}

\textbf{Nationwide datasets.} We consider three more datasets constructed from the Folktables package with many more samples than the statewide datasets. For each of the classification tasks (Income, Employment, and Coverage), we gather all examples from a total of 18 U.S. states. 2 states were chosen for each of the 9 federally designated Census geographic divisions of the United States \citep{us_census_us_2023}. 

These states and their corresponding geographic regions are: MA, CT (New England), NY, PA (Mid-Atlantic), IL, OH (East North Central), MO, MN (West North Central), FL, GA, (South Atlantic), TN, AL (East South Central), TX, LA (West South Central), AZ, CO (Mountain), CA, and WA (Pacific).

For each of the three nationwide datasets, we subdivide on each of the demographic attributes to attain hierarchically structured groups. We focus on two collections of groups:
\begin{itemize}
    \item \textbf{Attributes:} $\{ \texttt{state}, \texttt{race}, \texttt{sex} \}.$ Total of 342 subgroups (18 groups from \texttt{state}, 108 groups from $\texttt{state} \land \texttt{race}$, and 216 groups from $\texttt{state} \land \texttt{race} \land \texttt{sex}$). 
    \item \textbf{Attributes:} $\{ \texttt{state}, \texttt{race}, \texttt{age} \}.$ Total of 450 subgroups (18 groups from \texttt{state}, 108 groups from $\texttt{state} \land \texttt{race}$, and 324 groups from $\texttt{state} \land \texttt{race} \land \texttt{age}$).  
\end{itemize}

\section{Comparison of \prepend and \mgltree (Algorithm~\ref{alg:tree}) on CA Employment and CA Income}\label{sec:comparison_full}
In Table \ref{table:leaves_employment} and Table \ref{table:leaves_income}, we include the group-specific predictors of Algorithm \ref{alg:tree} (labeled ``TREE'') and \prepend (labeled ``PREP'') for each leaf node in the hierarchical group structure induced by splitting by \texttt{race}, \texttt{sex}, and \texttt{age} for the CA Employment and CA Income datasets. Because the leaf nodes partition the input space, this accounts for the behavior of the predictors on any possible test example $x \in \cX$. As described in Section \ref{sec:comparison}, we observe that Algorithm \ref{alg:tree} prefers using coarser-grained group-specific predictors higher up the tree, such as \texttt{ALL} or \texttt{R1}, while \prepend often resorts to the finer-grained predictors at the leaves.

\begin{table*}[ht]
\centering
\begin{tabular}{|c| c c c c c c |} 
\hline
Leaf & $\texttt{R1},\texttt{M},\texttt{Ya}$ & $\texttt{R1},\texttt{M},\texttt{Ma}$ & $\texttt{R1},\texttt{M},\texttt{Oa}$ & $\texttt{R1},\texttt{F},\texttt{Ya}$ & $\texttt{R1},\texttt{F},\texttt{Ma}$ & $\texttt{R1},\texttt{F},\texttt{Oa}$ \\
\hline\hline
TREE & $\texttt{R1},\texttt{M},\texttt{Ya}$ & $\texttt{R1},\texttt{M},\texttt{Ma}$ & $\texttt{R1},\texttt{M},\texttt{Oa}$ & $\texttt{R1},\texttt{F},\texttt{Ya}$ & $\texttt{R1},\texttt{F},\texttt{Ma}$ & $\texttt{R1},\texttt{F},\texttt{Oa}$ \\ 
PREP & $\texttt{R1},\texttt{M},\texttt{Ya}$ & $\texttt{R1},\texttt{M},\texttt{Ma}$ & $\texttt{R1},\texttt{M},\texttt{Oa}$ & $\texttt{R1},\texttt{F},\texttt{Ya}$ & $\texttt{R1},\texttt{F},\texttt{Ma}$ & $\texttt{R1},\texttt{F},\texttt{Oa}$ \\
\hline\hline
 
Leaf & $\texttt{R2},\texttt{M},\texttt{Ya}$ & $\texttt{R2},\texttt{M},\texttt{Ma}$ & $\texttt{R2},\texttt{M},\texttt{Oa}$ & $\texttt{R2},\texttt{F},\texttt{Ya}$ & $\texttt{R2},\texttt{F},\texttt{Ma}$ & $\texttt{R2},\texttt{F},\texttt{Oa}$ \\
\hline\hline
TREE & $\texttt{R2},\texttt{M},\texttt{Ya}$ & \textbf{\texttt{ALL}} & \textbf{\texttt{ALL}} & $\texttt{R2},\texttt{F},\texttt{Ya}$ & $\texttt{R2},\texttt{F},\texttt{Ma}$ & $\texttt{R2},\texttt{F},\texttt{Oa}$\\ 
PREP & $\texttt{R2},\texttt{M},\texttt{Ya}$ & \textbf{$\texttt{R2},\texttt{M},\texttt{Ma}$} & \textbf{$\texttt{R2},\texttt{M},\texttt{Oa}$} & $\texttt{R2},\texttt{F},\texttt{Ya}$ & $\texttt{R2},\texttt{F},\texttt{Ma}$ & $\texttt{R2},\texttt{F},\texttt{Oa}$ \\
\hline\hline
 
Leaf & $\texttt{R3+},\texttt{M},\texttt{Ya}$ & $\texttt{R3+},\texttt{M},\texttt{Ma}$ & $\texttt{R3+},\texttt{M},\texttt{Oa}$ & $\texttt{R3+},\texttt{F},\texttt{Ya}$ & $\texttt{R3+},\texttt{F},\texttt{Ma}$ & $\texttt{R3+},\texttt{F},\texttt{Oa}$ \\
\hline\hline
TREE & \textbf{\texttt{ALL}} & \textbf{\texttt{ALL}} & \textbf{\texttt{ALL}} & $\texttt{R3+},\texttt{F},\texttt{Ya}$ & $\texttt{R3+},\texttt{F},\texttt{Ma}$ & \textbf{\texttt{ALL}} \\ 
PREP & \textbf{$\texttt{R3+},\texttt{M},\texttt{Ya}$} & \textbf{$\texttt{R3+},\texttt{M},\texttt{Ma}$} & \textbf{$\texttt{R3+},\texttt{M},\texttt{Oa}$} & $\texttt{R3+},\texttt{F},\texttt{Ya}$ & $\texttt{R3+},\texttt{F},\texttt{Ma}$ & \textbf{$\texttt{R3+},\texttt{F},\texttt{Oa}$} \\
\hline\hline
 
Leaf & $\texttt{R6+},\texttt{M},\texttt{Ya}$ & $\texttt{R6+},\texttt{M},\texttt{Ma}$ & $\texttt{R6+},\texttt{M},\texttt{Oa}$ & $\texttt{R6+},\texttt{F},\texttt{Ya}$ & $\texttt{R6+},\texttt{F},\texttt{Ma}$ & $\texttt{R6+},\texttt{F},\texttt{Oa}$ \\
\hline\hline
TREE & $\texttt{R6+},\texttt{M},\texttt{Ya}$ & $\texttt{R6+},\texttt{M},\texttt{Ma}$ & $\texttt{R6+},\texttt{M},\texttt{Oa}$ & $\texttt{R6+},\texttt{F},\texttt{Ya}$ & $\texttt{R6+},\texttt{F},\texttt{Ma}$ & $\texttt{R6+},\texttt{F},\texttt{Oa}$  \\
PREP & $\texttt{R6+},\texttt{M},\texttt{Ya}$ & $\texttt{R6+},\texttt{M},\texttt{Ma}$ & $\texttt{R6+},\texttt{M},\texttt{Oa}$ & $\texttt{R6+},\texttt{F},\texttt{Ya}$ & $\texttt{R6+},\texttt{F},\texttt{Ma}$ & $\texttt{R6+},\texttt{F},\texttt{Oa}$ \\ 
\hline\hline
 
Leaf & $\texttt{R7},\texttt{M},\texttt{Ya}$ & $\texttt{R7},\texttt{M},\texttt{Ma}$ & $\texttt{R7},\texttt{M},\texttt{Oa}$ & $\texttt{R7},\texttt{F},\texttt{Ya}$ & $\texttt{R7},\texttt{F},\texttt{Ma}$ & $\texttt{R7},\texttt{F},\texttt{Oa}$ \\
\hline\hline
TREE & $\texttt{R7},\texttt{M},\texttt{Ya}$ & $\texttt{R7},\texttt{M},\texttt{Ma}$ & $\texttt{R7},\texttt{M},\texttt{Oa}$ & $\texttt{R7},\texttt{F},\texttt{Ya}$ & $\texttt{R7},\texttt{F},\texttt{Ma}$ & \textbf{\texttt{ALL}} \\ 
PREP & $\texttt{R7},\texttt{M},\texttt{Ya}$ & $\texttt{R7},\texttt{M},\texttt{Ma}$ & $\texttt{R7},\texttt{M},\texttt{Oa}$ & $\texttt{R7},\texttt{F},\texttt{Ya}$ & $\texttt{R7},\texttt{F},\texttt{Ma}$ & \textbf{$\texttt{R7},\texttt{F},\texttt{Oa}$} \\
\hline\hline

Leaf & $\texttt{R8},\texttt{M},\texttt{Ya}$ & $\texttt{R8},\texttt{M},\texttt{Ma}$ & $\texttt{R8},\texttt{M},\texttt{Oa}$ & $\texttt{R8},\texttt{F},\texttt{Ya}$ & $\texttt{R8},\texttt{F},\texttt{Ma}$ & $\texttt{R8},\texttt{F},\texttt{Oa}$ \\
\hline\hline
TREE & $\texttt{R8},\texttt{M},\texttt{Ya}$ & \textbf{$\texttt{R8},\texttt{M}$} & \textbf{$\texttt{R8},\texttt{M}$} & \textbf{\texttt{ALL}} & $\texttt{R8},\texttt{F},\texttt{Ma}$ & $\texttt{R8},\texttt{F},\texttt{Oa}$ \\ 
PREP & $\texttt{R8},\texttt{M},\texttt{Ya}$ & \textbf{$\texttt{R8},\texttt{M},\texttt{Ma}$} & \textbf{$\texttt{R8},\texttt{M},\texttt{Oa}$} & \textbf{$\texttt{R8},\texttt{F},\texttt{Ya}$} & $\texttt{R8},\texttt{F},\texttt{Ma}$ & $\texttt{R8},\texttt{F},\texttt{Oa}$ \\
\hline
\end{tabular}
\caption{\label{table:leaves_employment} Predictors for each \texttt{race}-\texttt{sex}-\texttt{age} leaf node in CA Employment dataset. \texttt{ALL} indicates the ERM logistic regression predictor trained on all the data. Bolded entries are leaves where \mgltree and \prepend differ in their predictor.}
\end{table*}

\begin{table*}[ht]
\centering
\begin{tabular}{|c| c c c c c c |} 
\hline
Leaf & $\texttt{R1},\texttt{M},\texttt{Ya}$ & $\texttt{R1},\texttt{M},\texttt{Ma}$ & $\texttt{R1},\texttt{M},\texttt{Oa}$ & $\texttt{R1},\texttt{F},\texttt{Ya}$ & $\texttt{R1},\texttt{F},\texttt{Ma}$ & $\texttt{R1},\texttt{F},\texttt{Oa}$ \\
\hline\hline
TREE & \textbf{\texttt{R1}} & \textbf{\texttt{R1}} & \textbf{\texttt{R1}} & \textbf{\texttt{R1}} & \textbf{\texttt{R1}} & $\texttt{R1},\texttt{F},\texttt{Oa}$ \\ 
PREP & \textbf{$\texttt{R1},\texttt{M},\texttt{Ya}$} & \textbf{$\texttt{R1},\texttt{M},\texttt{Ma}$} & \textbf{$\texttt{R1},\texttt{M},\texttt{Oa}$} & \textbf{$\texttt{R1},\texttt{F},\texttt{Ya}$} & \textbf{$\texttt{R1},\texttt{F},\texttt{Ma}$} & $\texttt{R1},\texttt{F},\texttt{Oa}$ \\
\hline\hline
 
Leaf & $\texttt{R2},\texttt{M},\texttt{Ya}$ & $\texttt{R2},\texttt{M},\texttt{Ma}$ & $\texttt{R2},\texttt{M},\texttt{Oa}$ & $\texttt{R2},\texttt{F},\texttt{Ya}$ & $\texttt{R2},\texttt{F},\texttt{Ma}$ & $\texttt{R2},\texttt{F},\texttt{Oa}$ \\
\hline\hline
TREE & \textbf{\texttt{ALL}} & \textbf{\texttt{ALL}} & \textbf{\texttt{ALL}} & \textbf{\texttt{ALL}} & \texttt{ALL} & \textbf{\texttt{ALL}} \\ 
PREP & \textbf{$\texttt{R2},\texttt{M},\texttt{Ya}$} & \textbf{$\texttt{R2},\texttt{M},\texttt{Ma}$} & \textbf{$\texttt{R2},\texttt{M},\texttt{Oa}$} & \textbf{$\texttt{R2},\texttt{F},\texttt{Ya}$} & \texttt{ALL} & \textbf{$\texttt{R2},\texttt{F},\texttt{Oa}$} \\
\hline\hline
 
Leaf & $\texttt{R3+},\texttt{M},\texttt{Ya}$ & $\texttt{R3+},\texttt{M},\texttt{Ma}$ & $\texttt{R3+},\texttt{M},\texttt{Oa}$ & $\texttt{R3+},\texttt{F},\texttt{Ya}$ & $\texttt{R3+},\texttt{F},\texttt{Ma}$ & $\texttt{R3+},\texttt{F},\texttt{Oa}$ \\
\hline\hline
TREE & \textbf{\texttt{ALL}} & \texttt{ALL} & \textbf{\texttt{ALL}} & \textbf{\texttt{ALL}} & \textbf{\texttt{ALL}} & \textbf{\texttt{ALL}} \\ 
PREP & \textbf{$\texttt{R3+},\texttt{M},\texttt{Ya}$} & \texttt{ALL} & \textbf{$\texttt{R3+},\texttt{M},\texttt{Oa}$} & \textbf{$\texttt{R3+},\texttt{F},\texttt{Ya}$} & \textbf{$\texttt{R3+},\texttt{F},\texttt{Ma}$} & \textbf{$\texttt{R3+},\texttt{F},\texttt{Oa}$} \\
\hline\hline
 
Leaf & $\texttt{R6+},\texttt{M},\texttt{Ya}$ & $\texttt{R6+},\texttt{M},\texttt{Ma}$ & $\texttt{R6+},\texttt{M},\texttt{Oa}$ & $\texttt{R6+},\texttt{F},\texttt{Ya}$ & $\texttt{R6+},\texttt{F},\texttt{Ma}$ & $\texttt{R6+},\texttt{F},\texttt{Oa}$ \\
\hline\hline
TREE & \textbf{\texttt{R6+}} & \textbf{$\texttt{R6+},\texttt{M},\texttt{Ma}$} & \textbf{\texttt{R6+}} & \textbf{\texttt{R6+}} & \textbf{\texttt{R6+}} & \textbf{\texttt{R6+}} \\ 
PREP & \textbf{\texttt{ALL}} & \textbf{\texttt{ALL}} & \textbf{\texttt{ALL}} & \textbf{\texttt{ALL}} & \textbf{$\texttt{R6+},\texttt{F},\texttt{Ma}$} & \textbf{$\texttt{R6+},\texttt{F},\texttt{Oa}$} \\
\hline\hline
 
Leaf & $\texttt{R7},\texttt{M},\texttt{Ya}$ & $\texttt{R7},\texttt{M},\texttt{Ma}$ & $\texttt{R7},\texttt{M},\texttt{Oa}$ & $\texttt{R7},\texttt{F},\texttt{Ya}$ & $\texttt{R7},\texttt{F},\texttt{Ma}$ & $\texttt{R7},\texttt{F},\texttt{Oa}$ \\
\hline\hline
TREE & $\texttt{R7},\texttt{M},\texttt{Ya}$ & $\texttt{R7},\texttt{M},\texttt{Ma}$ & $\texttt{R7},\texttt{M},\texttt{Oa}$ & $\texttt{R7},\texttt{F},\texttt{Ya}$ & \textbf{$\texttt{R7},\texttt{F},\texttt{Ma}$} & \texttt{ALL} \\ 
PREP & $\texttt{R7},\texttt{M},\texttt{Ya}$ & $\texttt{R7},\texttt{M},\texttt{Ma}$ & $\texttt{R7},\texttt{M},\texttt{Oa}$ & $\texttt{R7},\texttt{F},\texttt{Ya}$ & \textbf{\texttt{ALL}} & \texttt{ALL} \\
\hline\hline

Leaf & $\texttt{R8},\texttt{M},\texttt{Ya}$ & $\texttt{R8},\texttt{M},\texttt{Ma}$ & $\texttt{R8},\texttt{M},\texttt{Oa}$ & $\texttt{R8},\texttt{F},\texttt{Ya}$ & $\texttt{R8},\texttt{F},\texttt{Ma}$ & $\texttt{R8},\texttt{F},\texttt{Oa}$ \\
\hline\hline
TREE & $\texttt{R8},\texttt{M},\texttt{Ya}$ & \textbf{\texttt{ALL}} & \textbf{\texttt{ALL}} & \textbf{\texttt{ALL}} & $\texttt{R8},\texttt{F},\texttt{Ma}$ & $\texttt{R8},\texttt{F},\texttt{Oa}$ \\ 
PREP & $\texttt{R8},\texttt{M},\texttt{Ya}$ & \textbf{$\texttt{R8},\texttt{M},\texttt{Ma}$} & \textbf{$\texttt{R8},\texttt{M},\texttt{Oa}$} & \textbf{$\texttt{R8},\texttt{F},\texttt{Ya}$} & $\texttt{R8},\texttt{F},\texttt{Ma}$ & $\texttt{R8},\texttt{F},\texttt{Oa}$  \\
\hline
\end{tabular}
\caption{\label{table:leaves_income} Predictors for each \texttt{race}-\texttt{sex}-\texttt{age} leaf node in CA Income dataset. \texttt{ALL} indicates the ERM logistic regression predictor trained on all the data. Bolded entries are leaves where \mgltree and \prepend differ in their predictor.}
\end{table*}

\section{Additional experimental results}\label{sec:add_experiments}
In this section, we provide additional results for the other datasets not shown in the main body. For each dataset, we consider the two collections of hierarchical groups described in Appendix \ref{sec:dataset_details}, and we compare the generalization performance of all the methods and benchmark hypothesis classes described in Section \ref{sec:experiment_setup}.

For all the datasets, we plot the test error of each method on a held-out test set for each group. Each group corresponds to a point on the plot. The $y = x$ line corresponds to equal test error between the methods; all points above the line indicates that our Algorithm \ref{alg:tree} had lower test error. Error bars are from the standard error from 10 random trials on each group (from a fresh train-test split, retraining each model from scratch). Error bars are omitted for the nationwide datasets for visual presentation.

Throughout all the figures, ``DecisionTree2,'' ``DecisionTree4,'' and ``DecisionTree8'' refer to decision trees trained with the \texttt{max\_depth} parameter set to 2, 4, and 8, respectively. ``ERM'' refers to the model trained on all available data, ``G-ERM'' refers to the group-stratified model trained only on data from the group corresponding to the point on the plot, and ``PREP'' refers to the \prepend algorithm of \citet{tosh_simple_2022}. 

We also include a ``case study'' of a particular dataset's (CA Employment) error rates group-by-group for logistic regression. We see that the bars for Algorithm \ref{alg:tree} are consistently on par with or lower than both the ERM and Group ERM bars, sometimes by a very significant margin. In cases where Algorithm \ref{alg:tree} beats ERM by a significant amount (such as on, say, the group \texttt{R1, M, Oa}), the predictor from Algorithm \ref{alg:tree} has identified an instance of a high-error subgroup on a specific slice of the population, which may prove useful for further auditing. 

\begin{figure}[ht]
\vskip 0.2in
\begin{center}
\scalebox{0.85}{
\centerline{\includegraphics[width=\columnwidth]{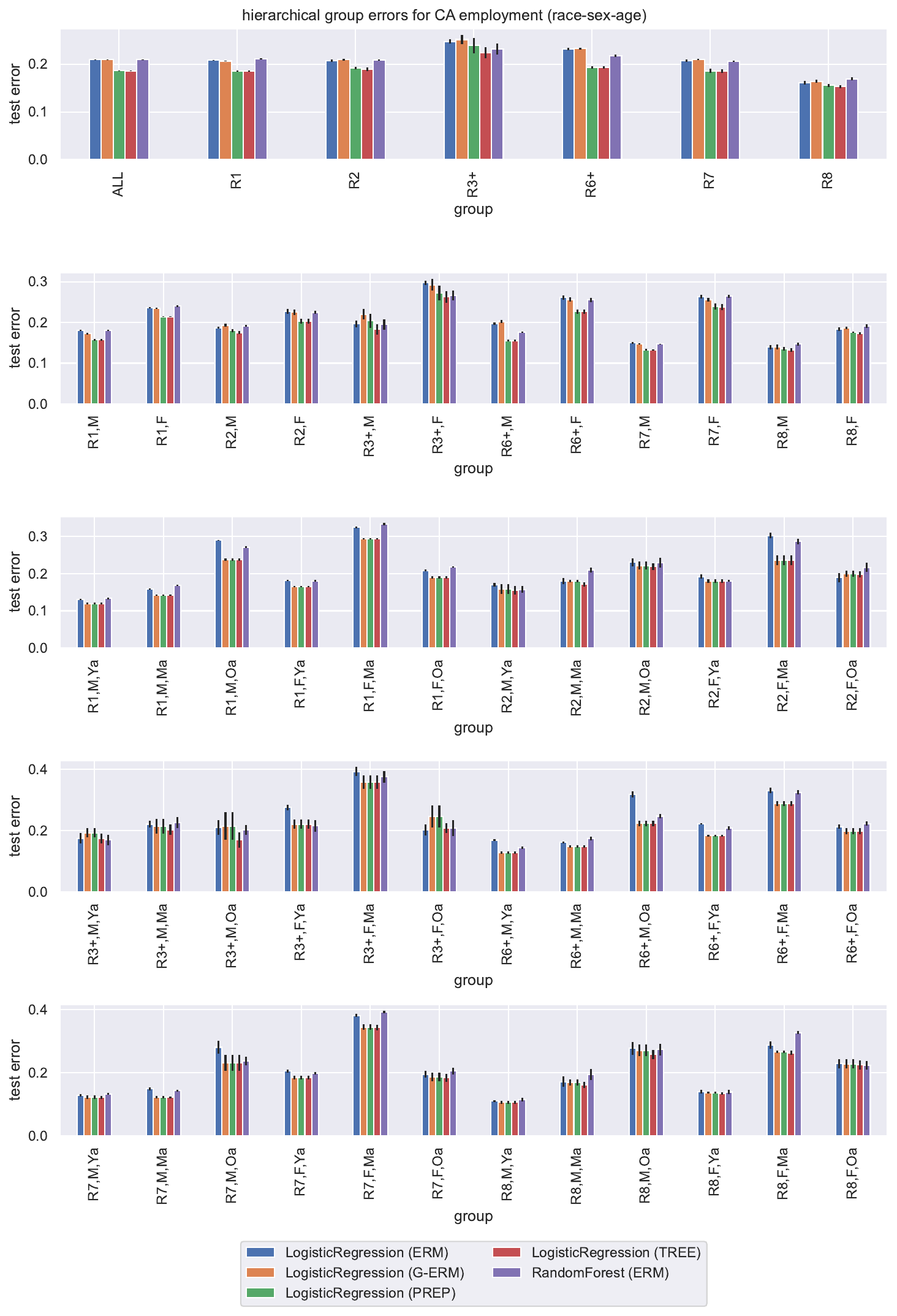}}}
\caption{\textbf{Test accuracy of $\cH =$ Logistic Regression for \texttt{race-sex-age} groups (CA Employment).} Test errors across all $|\cG| = 54$ hierarchically structured groups from \texttt{race-sex-age}. See Appendix \ref{sec:dataset_details} for more information on the specific categories for each group.}
\label{fig:employment_logreg_bar}
\end{center}
\vskip -0.2in
\end{figure}

\begin{figure}[H]
\begin{center}
\centerline{\includegraphics[width=\columnwidth]{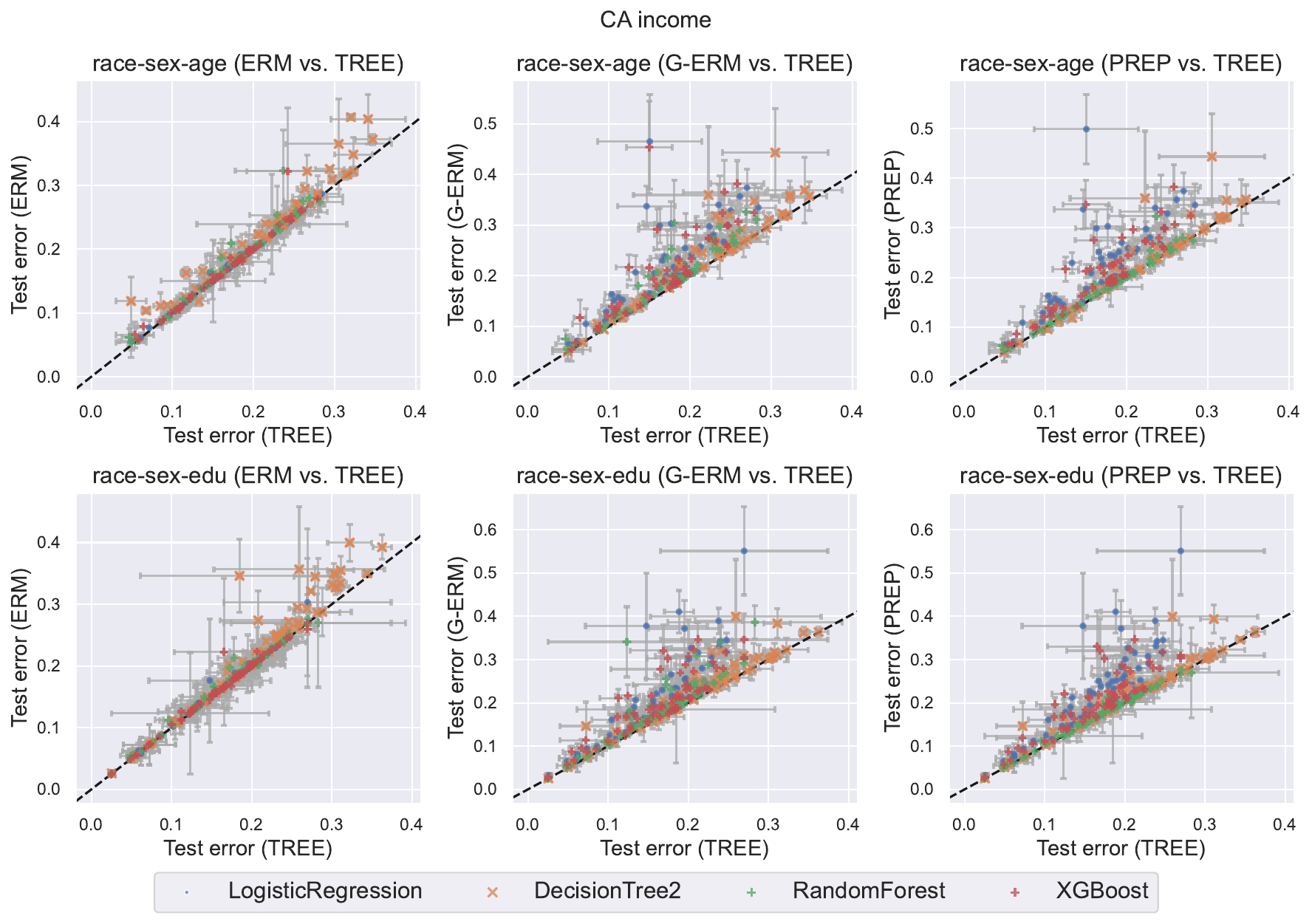}}
\caption{Test error of \mgltree vs.~ERM, Group ERM, and \prepend on \texttt{race}-\texttt{sex}-\texttt{age} and \texttt{race}-\texttt{sex}-\texttt{edu} groups (CA Income). Each point corresponds to a group; points above the $y = x$ line show that \mgltree generalizes better than the competitor method on that particular group. Benchmark hypothesis classes considered: logistic regression, decision trees with \texttt{max\_depth} 2, random forest, and XGBoost.} 
\label{fig:incomeCA}
\end{center}
\end{figure}

\begin{figure}[H]
\vskip 0.2in
\begin{center}
\centerline{\includegraphics[width=\columnwidth]{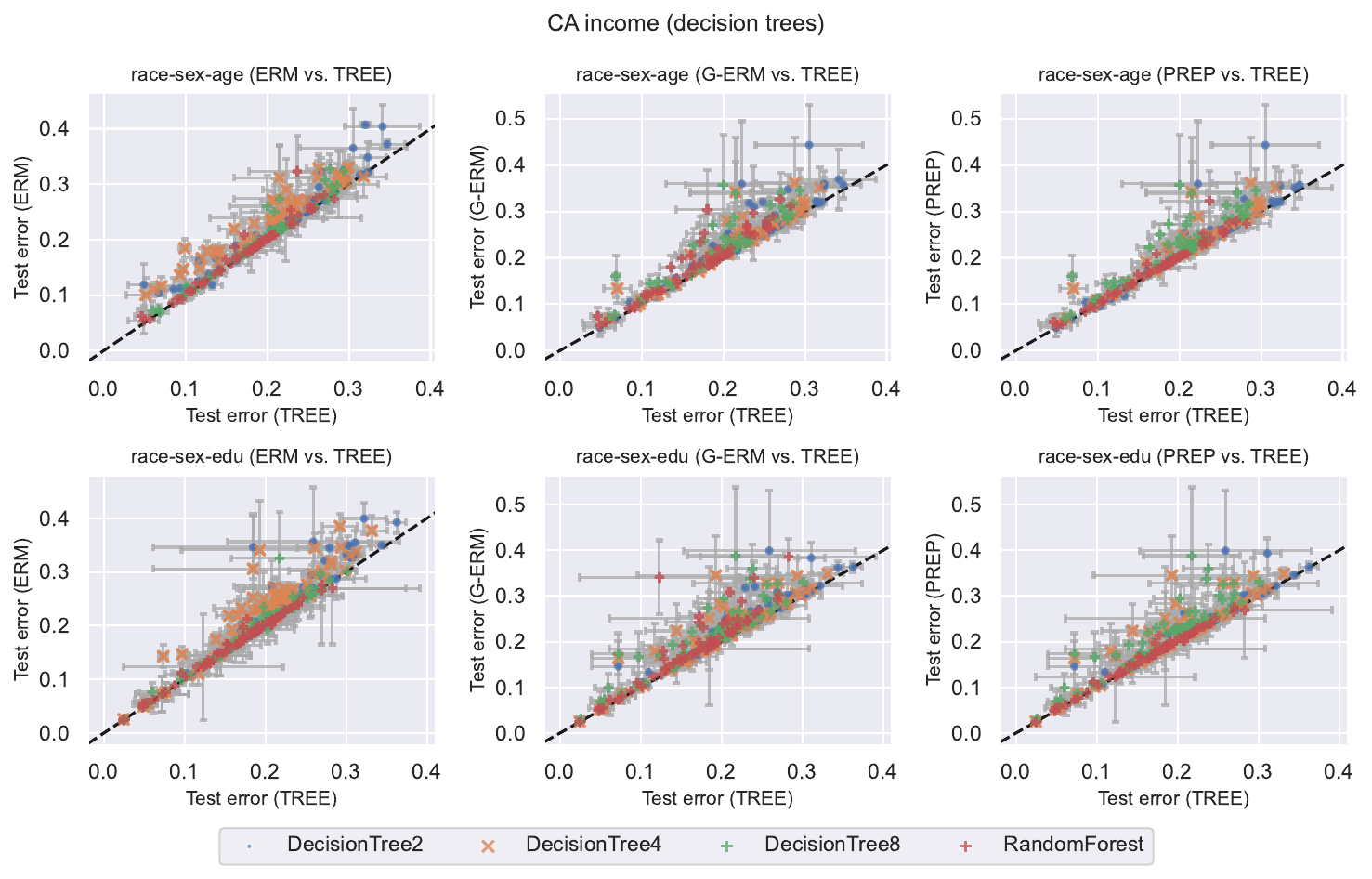}}
\caption{Test error of \mgltree vs.~ERM, Group ERM, and \prepend on \texttt{race}-\texttt{sex}-\texttt{age} and \texttt{race}-\texttt{sex}-\texttt{edu} groups (CA Income). Each point corresponds to a group; points above the $y = x$ line show that \mgltree generalizes better than the competitor method on that particular group. Benchmark hypothesis classes considered: decision trees with \texttt{max\_depth} 2, decision trees with \texttt{max\_depth} 4, decision trees with \texttt{max\_depth} 8, and random forest.} 
\label{fig:incomeCA_dt}
\end{center}
\vskip -0.2in
\end{figure}

\begin{figure}[H]
\vskip 0.2in
\begin{center}
\centerline{\includegraphics[width=\columnwidth]{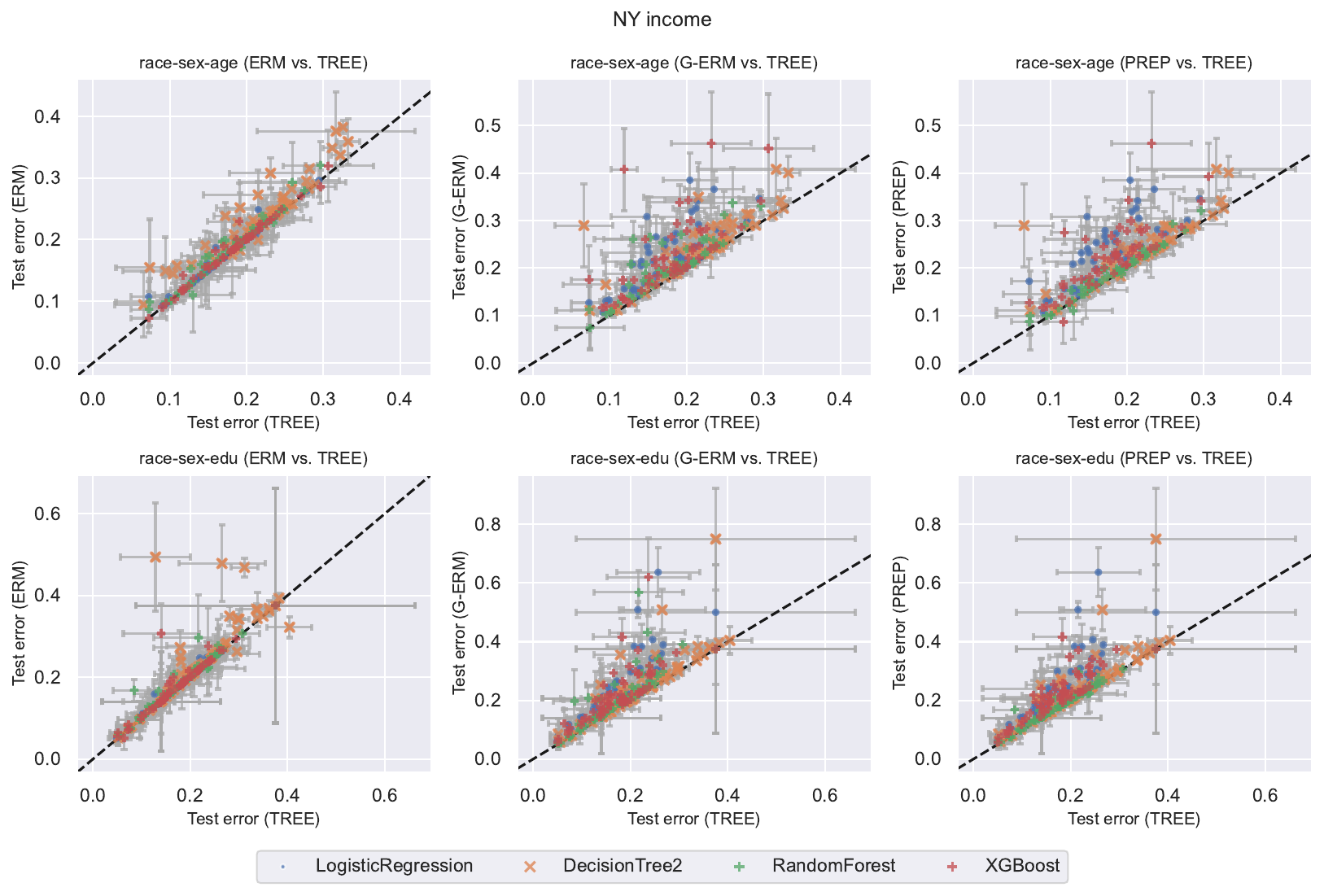}}
\caption{Test error of \mgltree vs.~ERM, Group ERM, and \prepend on \texttt{race}-\texttt{sex}-\texttt{age} and \texttt{race}-\texttt{sex}-\texttt{edu} groups (NY Income). Each point corresponds to a group; points above the $y = x$ line show that \mgltree generalizes better than the competitor method on that particular group. Benchmark hypothesis classes considered: logistic regression, decision trees with \texttt{max\_depth} 2, random forest, and XGBoost.} 
\label{fig:incomeNY}
\end{center}
\vskip -0.2in
\end{figure}

\begin{figure}[H]
\vskip 0.2in
\begin{center}
\centerline{\includegraphics[width=\columnwidth]{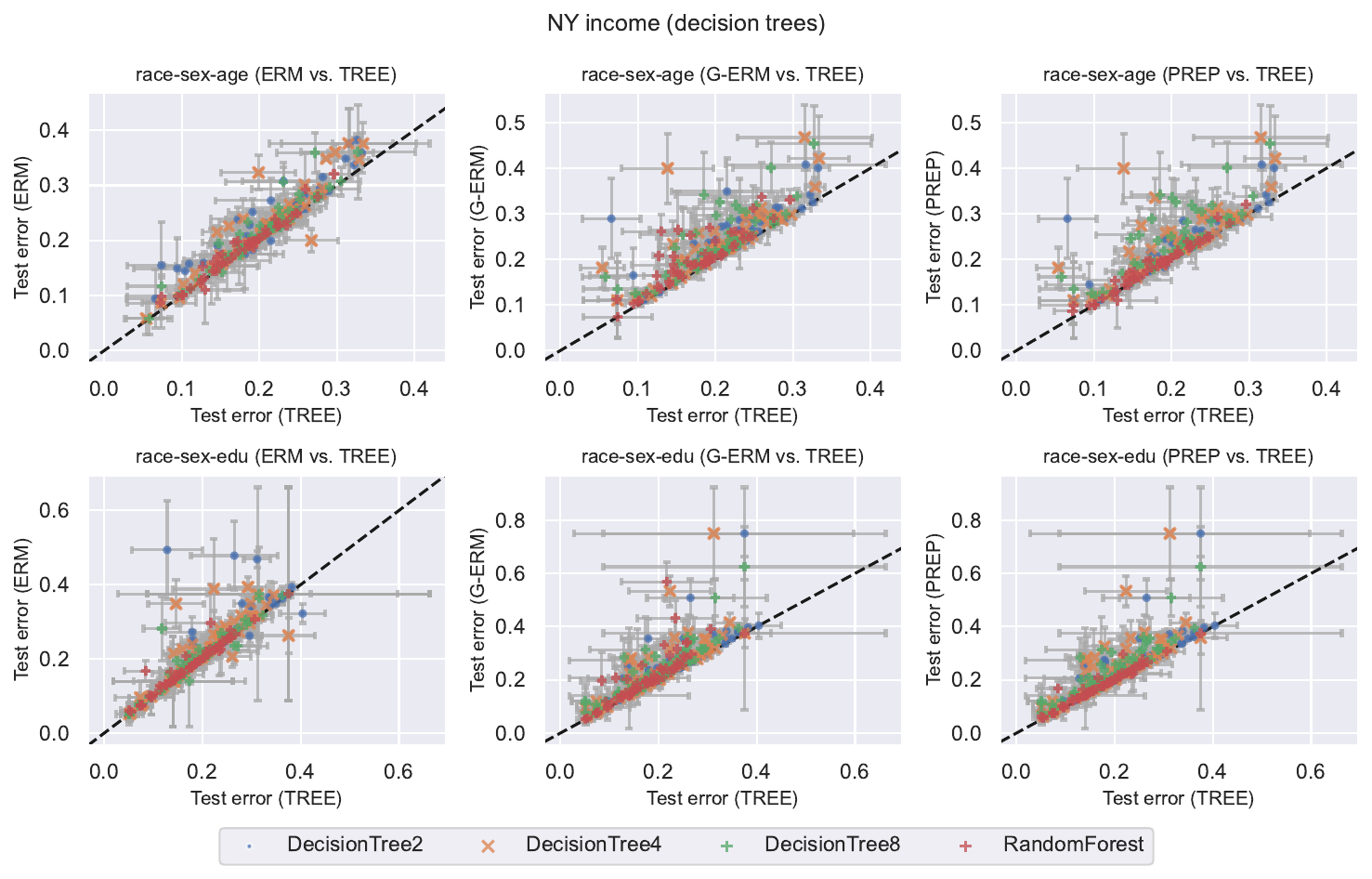}}
\caption{Test error of \mgltree vs.~ERM, Group ERM, and \prepend on \texttt{race}-\texttt{sex}-\texttt{age} and \texttt{race}-\texttt{sex}-\texttt{edu} groups (NY Income). Each point corresponds to a group; points above the $y = x$ line show that \mgltree generalizes better than the competitor method on that particular group. Benchmark hypothesis classes considered: decision trees with \texttt{max\_depth} 2, decision trees with \texttt{max\_depth} 4, decision trees with \texttt{max\_depth} 8, and random forest.} 
\label{fig:incomeNY_dt}
\end{center}
\vskip -0.2in
\end{figure}

\begin{figure}[H]
\vskip 0.2in
\begin{center}
\centerline{\includegraphics[width=\columnwidth]{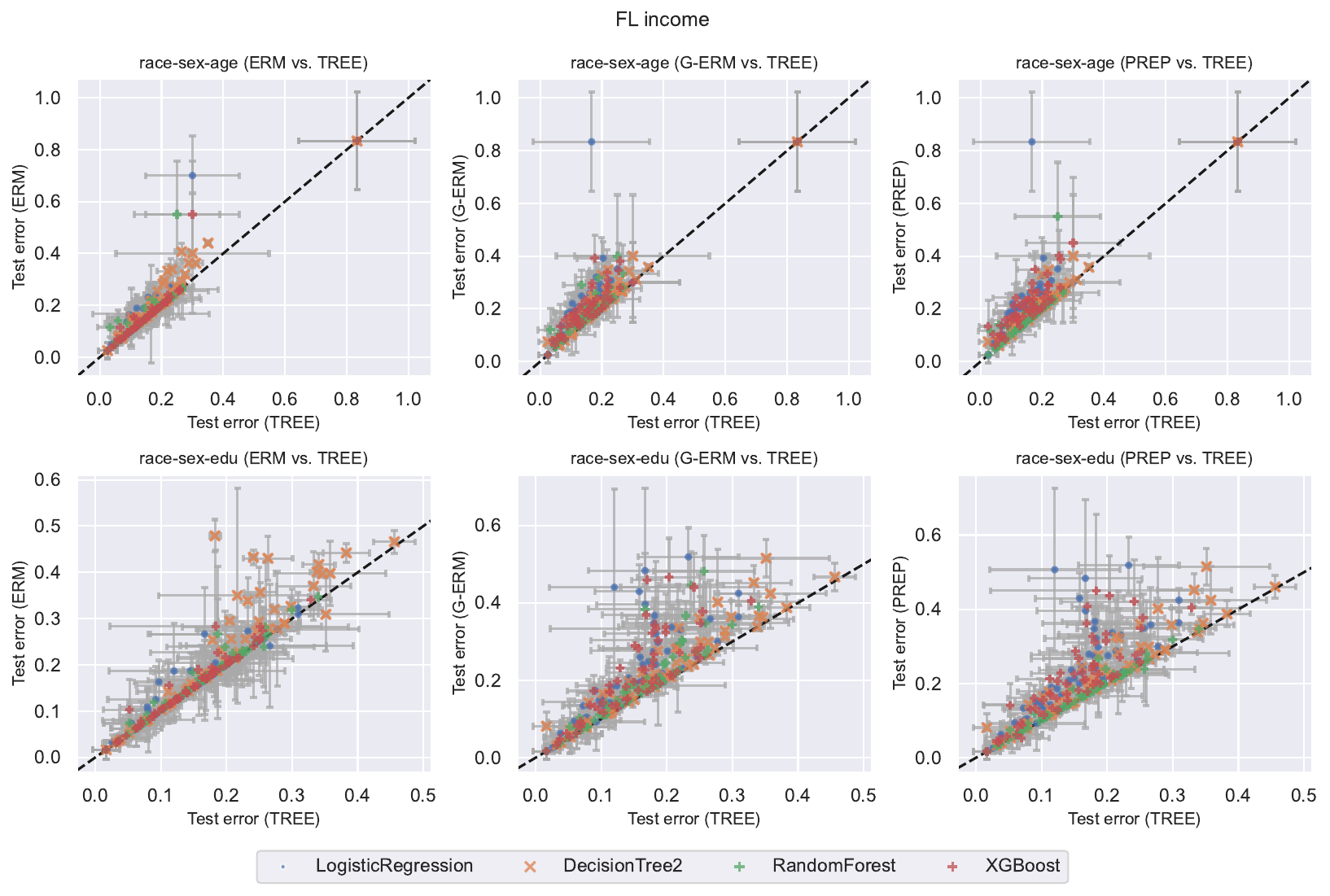}}
\caption{Test error of \mgltree vs.~ERM, Group ERM, and \prepend on \texttt{race}-\texttt{sex}-\texttt{age} and \texttt{race}-\texttt{sex}-\texttt{edu} groups (FL Income). Each point corresponds to a group; points above the $y = x$ line show that \mgltree generalizes better than the competitor method on that particular group. Benchmark hypothesis classes considered: logistic regression, decision trees with \texttt{max\_depth} 2, random forest, and XGBoost.} 
\label{fig:incomeFL}
\end{center}
\vskip -0.2in
\end{figure}

\begin{figure}[H]
\vskip 0.2in
\begin{center}
\centerline{\includegraphics[width=\columnwidth]{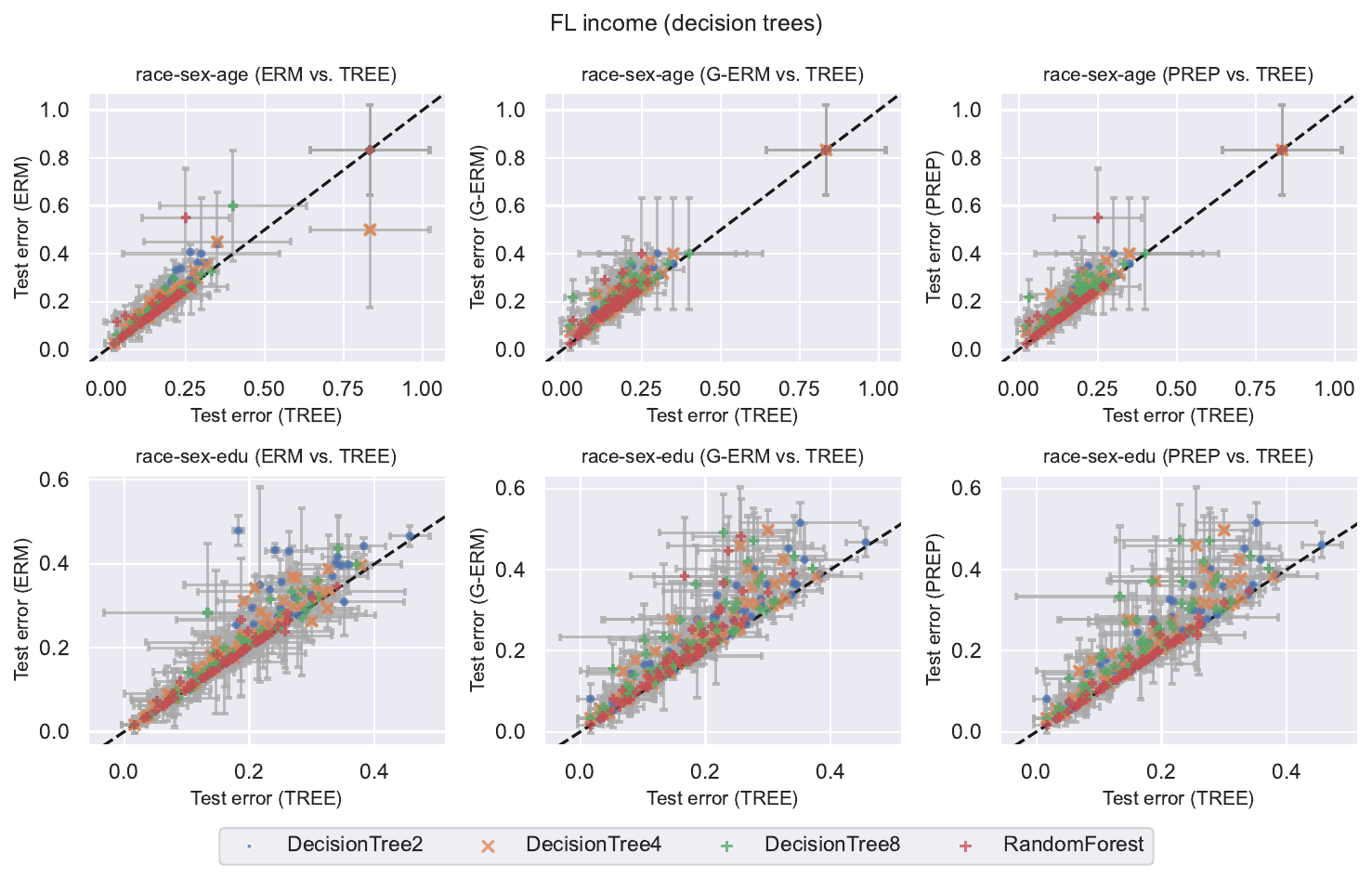}}
\caption{Test error of \mgltree vs.~ERM, Group ERM, and \prepend on \texttt{race}-\texttt{sex}-\texttt{age} and \texttt{race}-\texttt{sex}-\texttt{edu} groups (FL Income). Each point corresponds to a group; points above the $y = x$ line show that \mgltree generalizes better than the competitor method on that particular group. Benchmark hypothesis classes considered: decision trees with \texttt{max\_depth} 2, decision trees with \texttt{max\_depth} 4, decision trees with \texttt{max\_depth} 8, and random forest.} 
\label{fig:incomeFL_dt}
\end{center}
\vskip -0.2in
\end{figure}

\begin{figure}[H]
\vskip 0.2in
\begin{center}
\centerline{\includegraphics[width=\columnwidth]{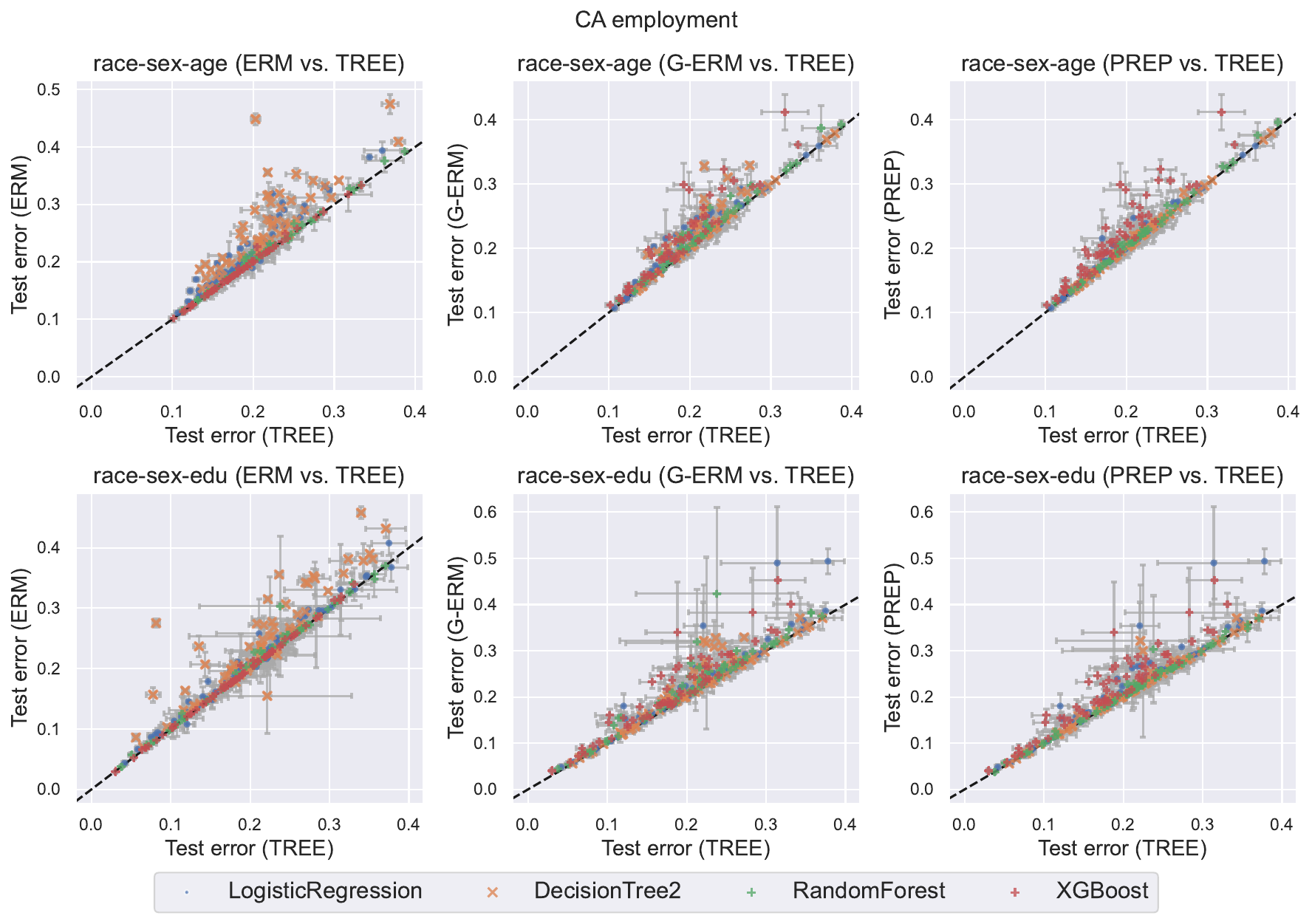}}
\caption{Test error of \mgltree vs.~ERM, Group ERM, and \prepend on \texttt{race}-\texttt{sex}-\texttt{age} and \texttt{race}-\texttt{sex}-\texttt{edu} groups (CA Employment). Each point corresponds to a group; points above the $y = x$ line show that \mgltree generalizes better than the competitor method on that particular group. Benchmark hypothesis classes considered: logistic regression, decision trees with \texttt{max\_depth} 2, random forest, and XGBoost.} 
\label{fig:employmentCA}
\end{center}
\vskip -0.2in
\end{figure}

\begin{figure}[H]
\vskip 0.2in
\begin{center}
\centerline{\includegraphics[width=\columnwidth]{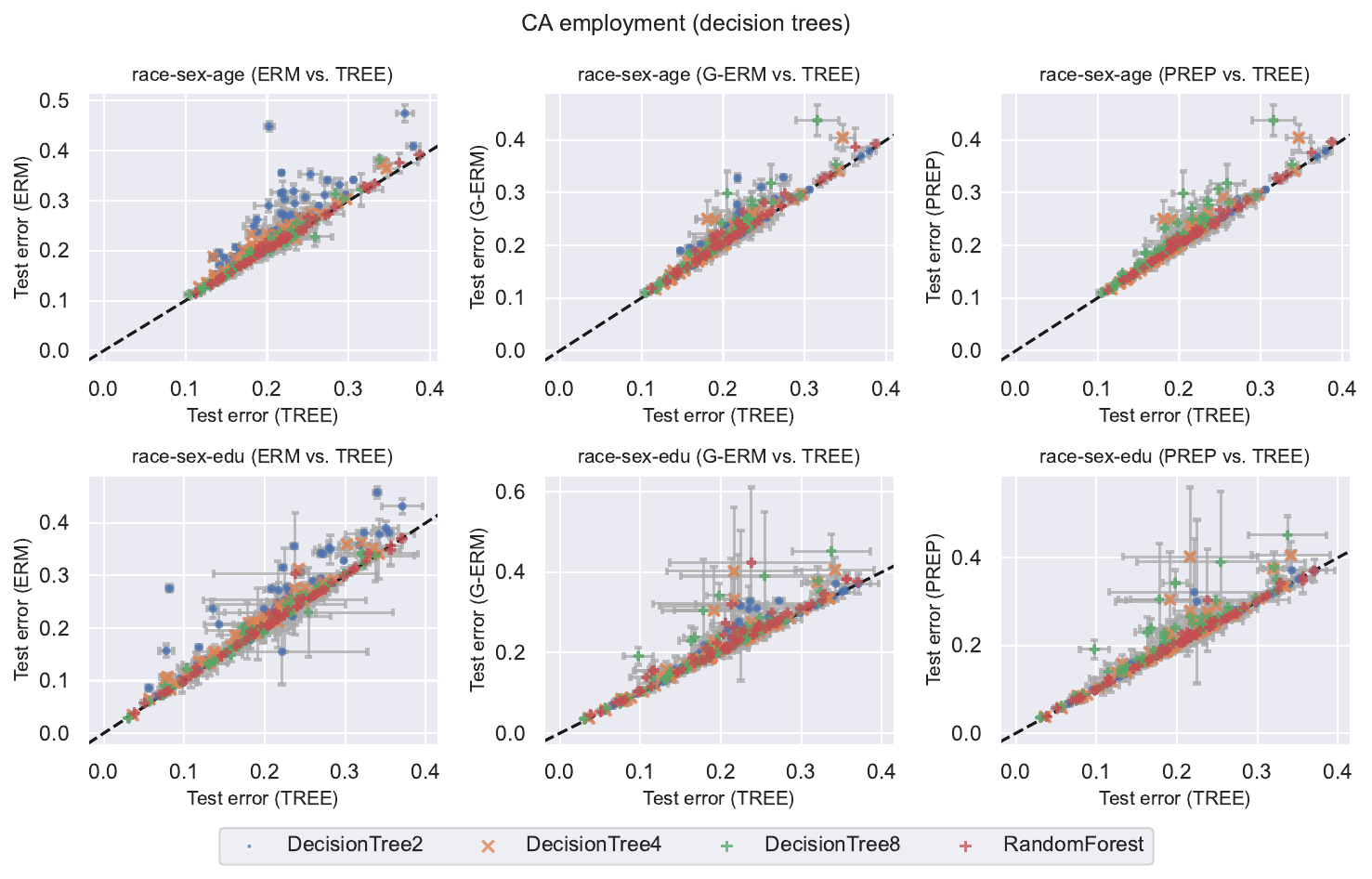}}
\caption{Test error of \mgltree vs.~ERM, Group ERM, and \prepend on \texttt{race}-\texttt{sex}-\texttt{age} and \texttt{race}-\texttt{sex}-\texttt{edu} groups (CA Employment). Each point corresponds to a group; points above the $y = x$ line show that \mgltree generalizes better than the competitor method on that particular group. Benchmark hypothesis classes considered: decision trees with \texttt{max\_depth} 2, decision trees with \texttt{max\_depth} 4, decision trees with \texttt{max\_depth} 8, and random forest.} 
\label{fig:employmentCA_dt}
\end{center}
\vskip -0.2in
\end{figure}

\begin{figure}[H]
\vskip 0.2in
\begin{center}
\centerline{\includegraphics[width=\columnwidth]{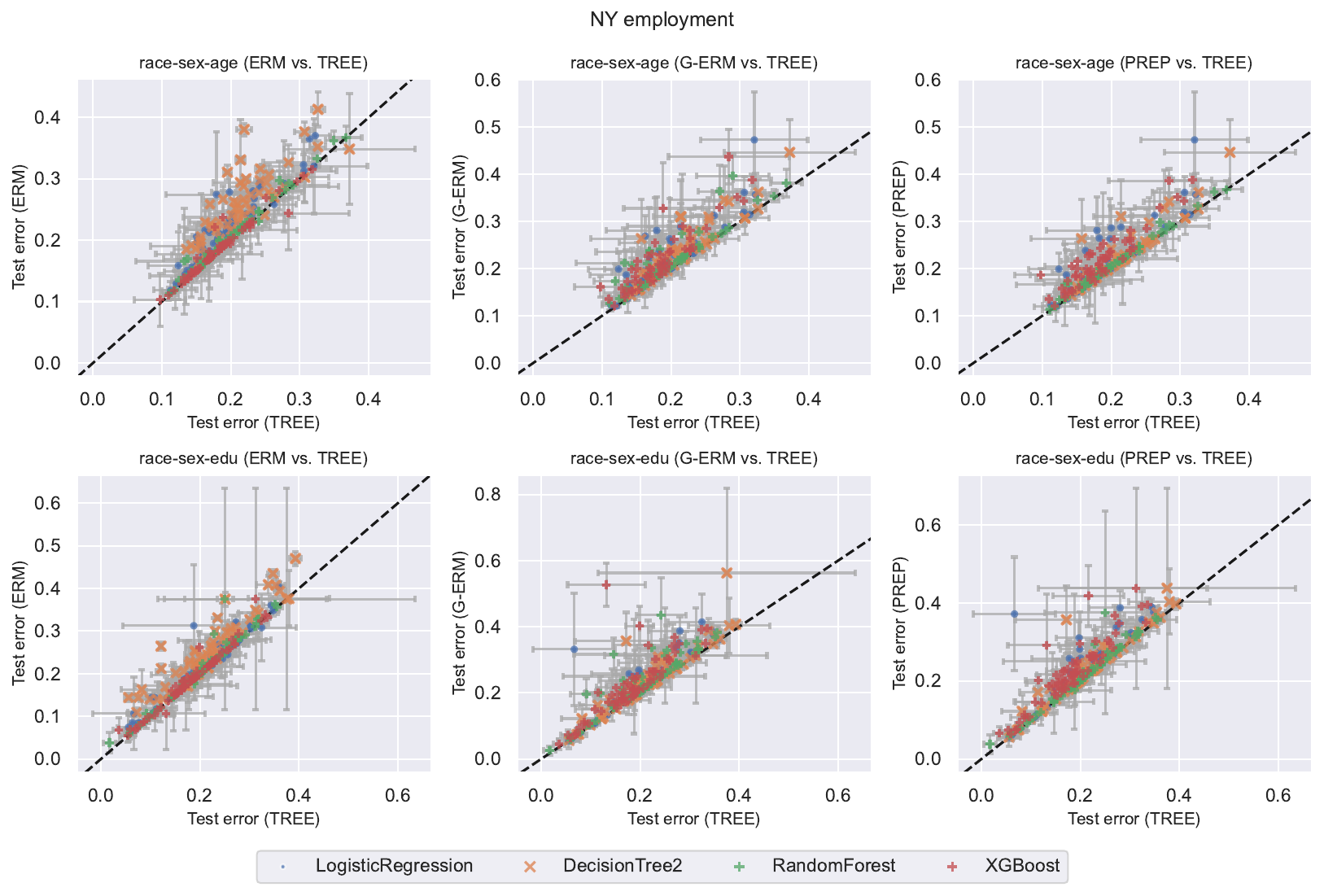}}
\caption{Test error of \mgltree vs.~ERM, Group ERM, and \prepend on \texttt{race}-\texttt{sex}-\texttt{age} and \texttt{race}-\texttt{sex}-\texttt{edu} groups (NY Employment). Each point corresponds to a group; points above the $y = x$ line show that \mgltree generalizes better than the competitor method on that particular group. Benchmark hypothesis classes considered: logistic regression, decision trees with \texttt{max\_depth} 2, random forest, and XGBoost.} 
\label{fig:employmentNY}
\end{center}
\vskip -0.2in
\end{figure}

\begin{figure}[H]
\vskip 0.2in
\begin{center}
\centerline{\includegraphics[width=\columnwidth]{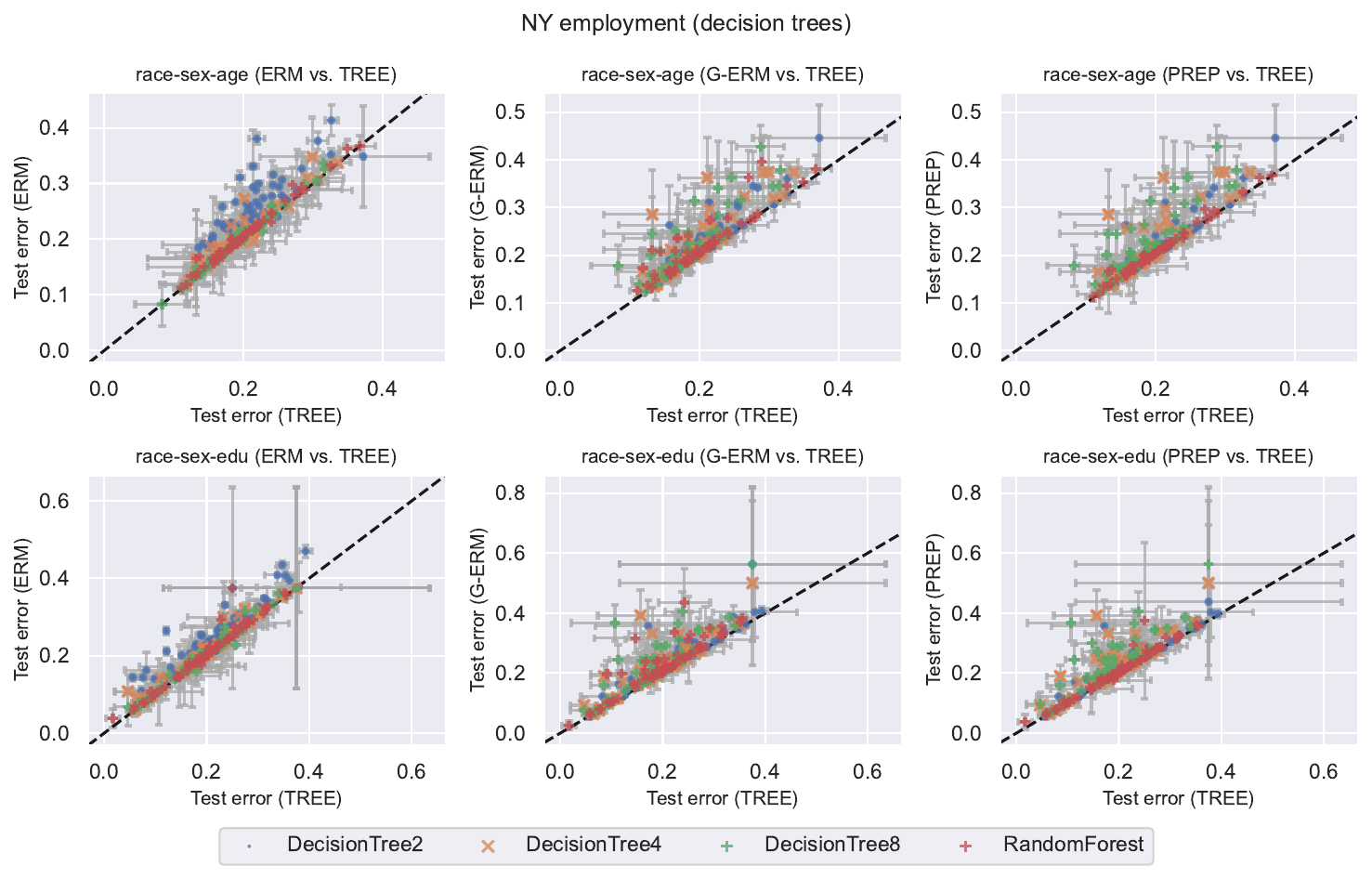}}
\caption{Test error of \mgltree vs.~ERM, Group ERM, and \prepend on \texttt{race}-\texttt{sex}-\texttt{age} and \texttt{race}-\texttt{sex}-\texttt{edu} groups (NY Employment). Each point corresponds to a group; points above the $y = x$ line show that \mgltree generalizes better than the competitor method on that particular group. Benchmark hypothesis classes considered: decision trees with \texttt{max\_depth} 2, decision trees with \texttt{max\_depth} 4, decision trees with \texttt{max\_depth} 8, and random forest.} 
\label{fig:employmentNY_dt}
\end{center}
\vskip -0.2in
\end{figure}

\begin{figure}[H]
\vskip 0.2in
\begin{center}
\centerline{\includegraphics[width=\columnwidth]{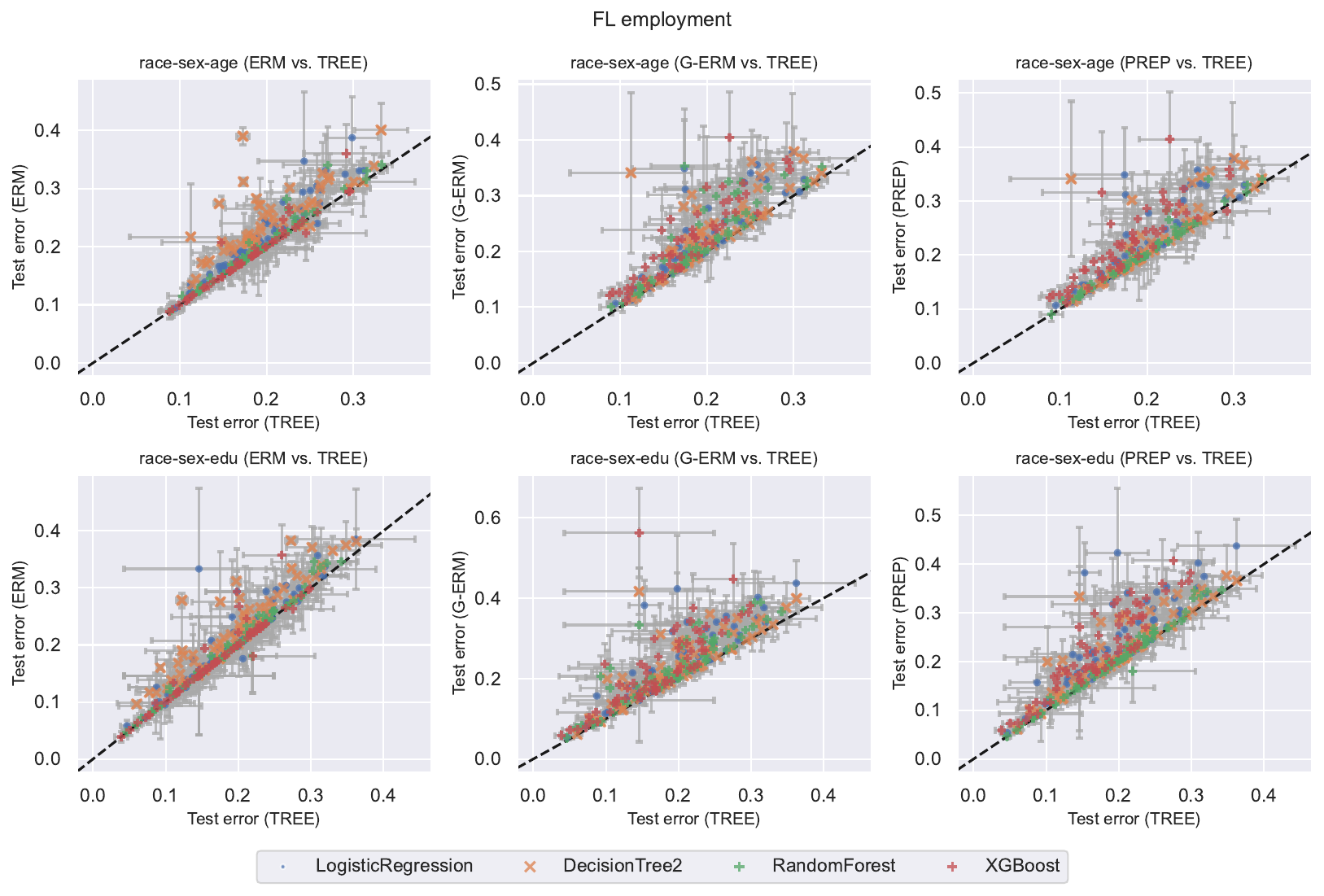}}
\caption{Test error of \mgltree vs.~ERM, Group ERM, and \prepend on \texttt{race}-\texttt{sex}-\texttt{age} and \texttt{race}-\texttt{sex}-\texttt{edu} groups (FL Employment). Each point corresponds to a group; points above the $y = x$ line show that \mgltree generalizes better than the competitor method on that particular group. Benchmark hypothesis classes considered: logistic regression, decision trees with \texttt{max\_depth} 2, random forest, and XGBoost.} 
\label{fig:employmentFL}
\end{center}
\vskip -0.2in
\end{figure}

\begin{figure}[H]
\vskip 0.2in
\begin{center}
\centerline{\includegraphics[width=\columnwidth]{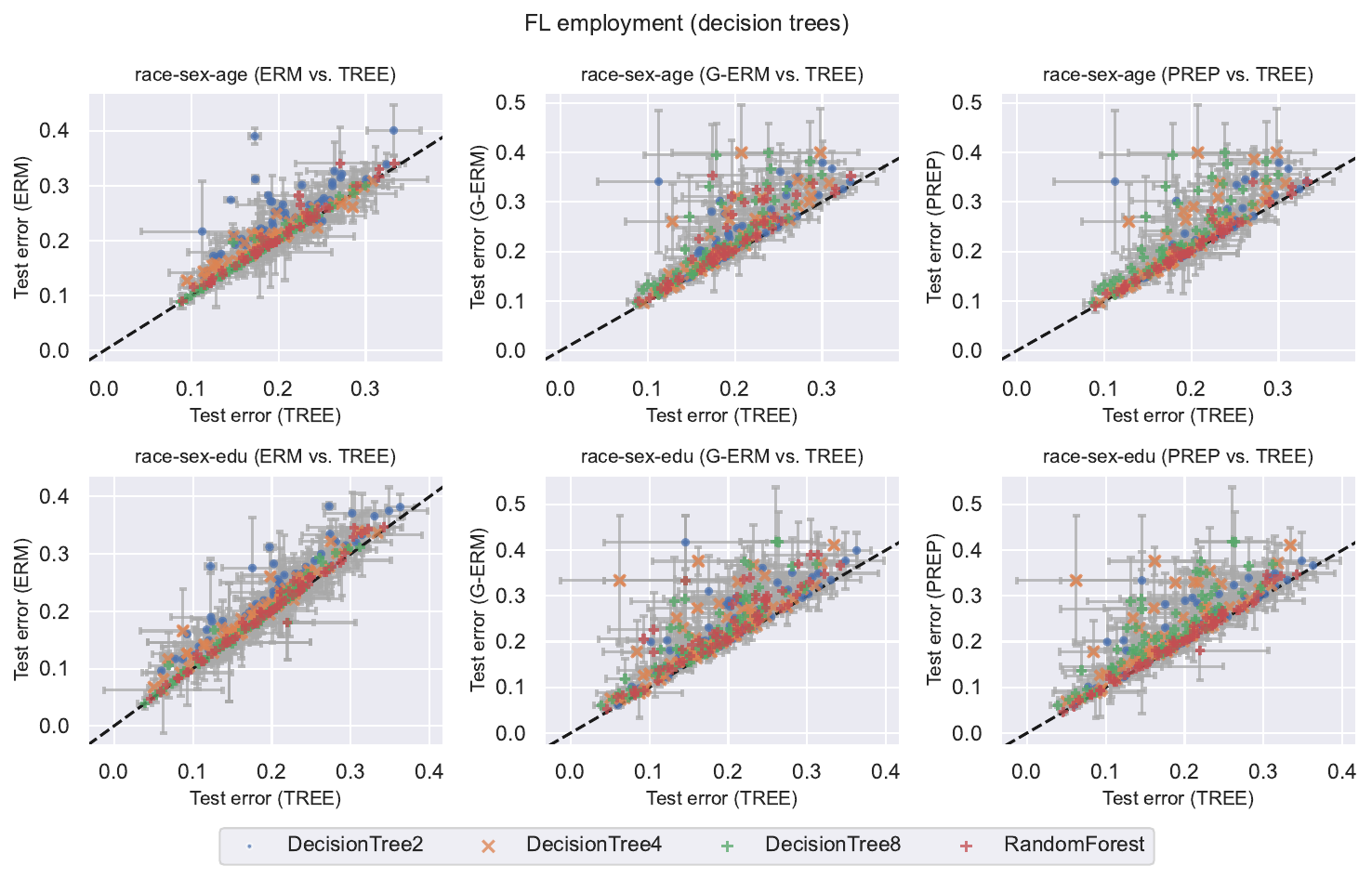}}
\caption{Test error of \mgltree vs.~ERM, Group ERM, and \prepend on \texttt{race}-\texttt{sex}-\texttt{age} and \texttt{race}-\texttt{sex}-\texttt{edu} groups (FL Employment). Each point corresponds to a group; points above the $y = x$ line show that \mgltree generalizes better than the competitor method on that particular group. Benchmark hypothesis classes considered: decision trees with \texttt{max\_depth} 2, decision trees with \texttt{max\_depth} 4, decision trees with \texttt{max\_depth} 8, and random forest.} 
\label{fig:employmentFL_dt}
\end{center}
\vskip -0.2in
\end{figure}

\begin{figure}[H]
\vskip 0.2in
\begin{center}
\centerline{\includegraphics[width=\columnwidth]{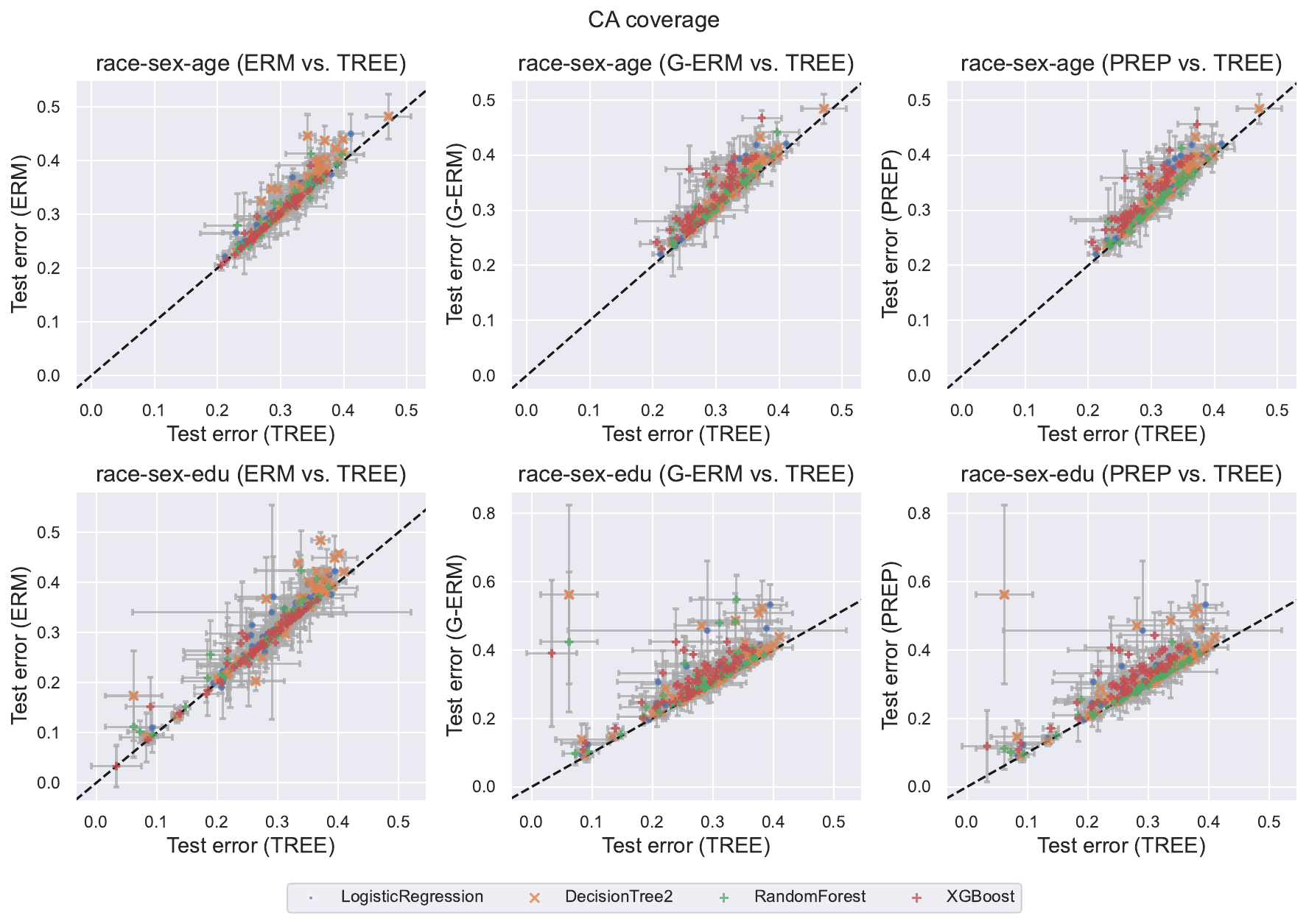}}
\caption{Test error of \mgltree vs.~ERM, Group ERM, and \prepend on \texttt{race}-\texttt{sex}-\texttt{age} and \texttt{race}-\texttt{sex}-\texttt{edu} groups (CA Coverage). Each point corresponds to a group; points above the $y = x$ line show that \mgltree generalizes better than the competitor method on that particular group. Benchmark hypothesis classes considered: logistic regression, decision trees with \texttt{max\_depth} 2, random forest, and XGBoost.} 
\label{fig:coverageCA}
\end{center}
\vskip -0.2in
\end{figure}

\begin{figure}[H]
\vskip 0.2in
\begin{center}
\centerline{\includegraphics[width=\columnwidth]{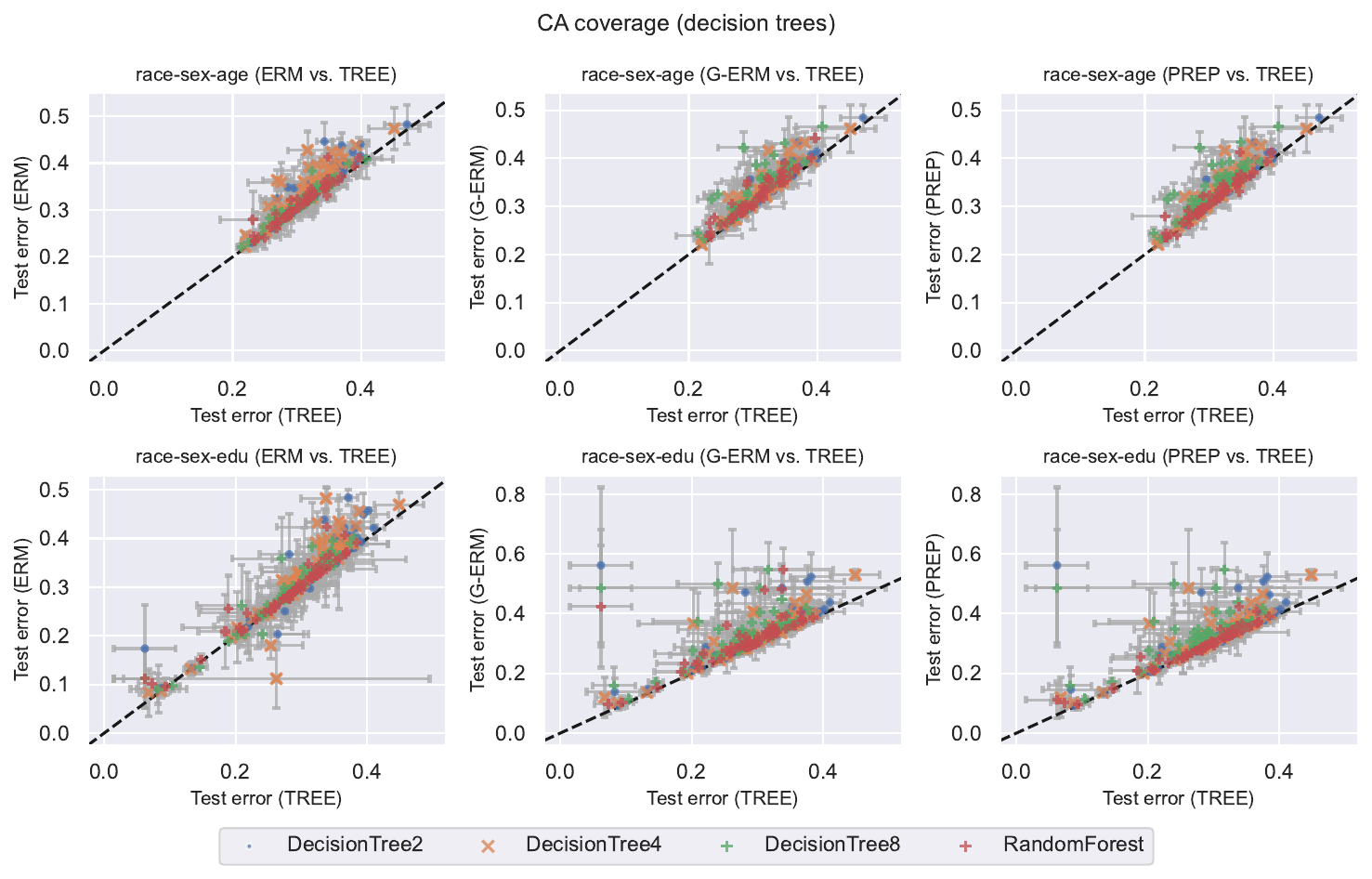}}
\caption{Test error of \mgltree vs.~ERM, Group ERM, and \prepend on \texttt{race}-\texttt{sex}-\texttt{age} and \texttt{race}-\texttt{sex}-\texttt{edu} groups (CA Coverage). Each point corresponds to a group; points above the $y = x$ line show that \mgltree generalizes better than the competitor method on that particular group. Benchmark hypothesis classes considered: decision trees with \texttt{max\_depth} 2, decision trees with \texttt{max\_depth} 4, decision trees with \texttt{max\_depth} 8, and random forest.} 
\label{fig:coverageCA_dt}
\end{center}
\vskip -0.2in
\end{figure}

\begin{figure}[H]
\vskip 0.2in
\begin{center}
\centerline{\includegraphics[width=\columnwidth]{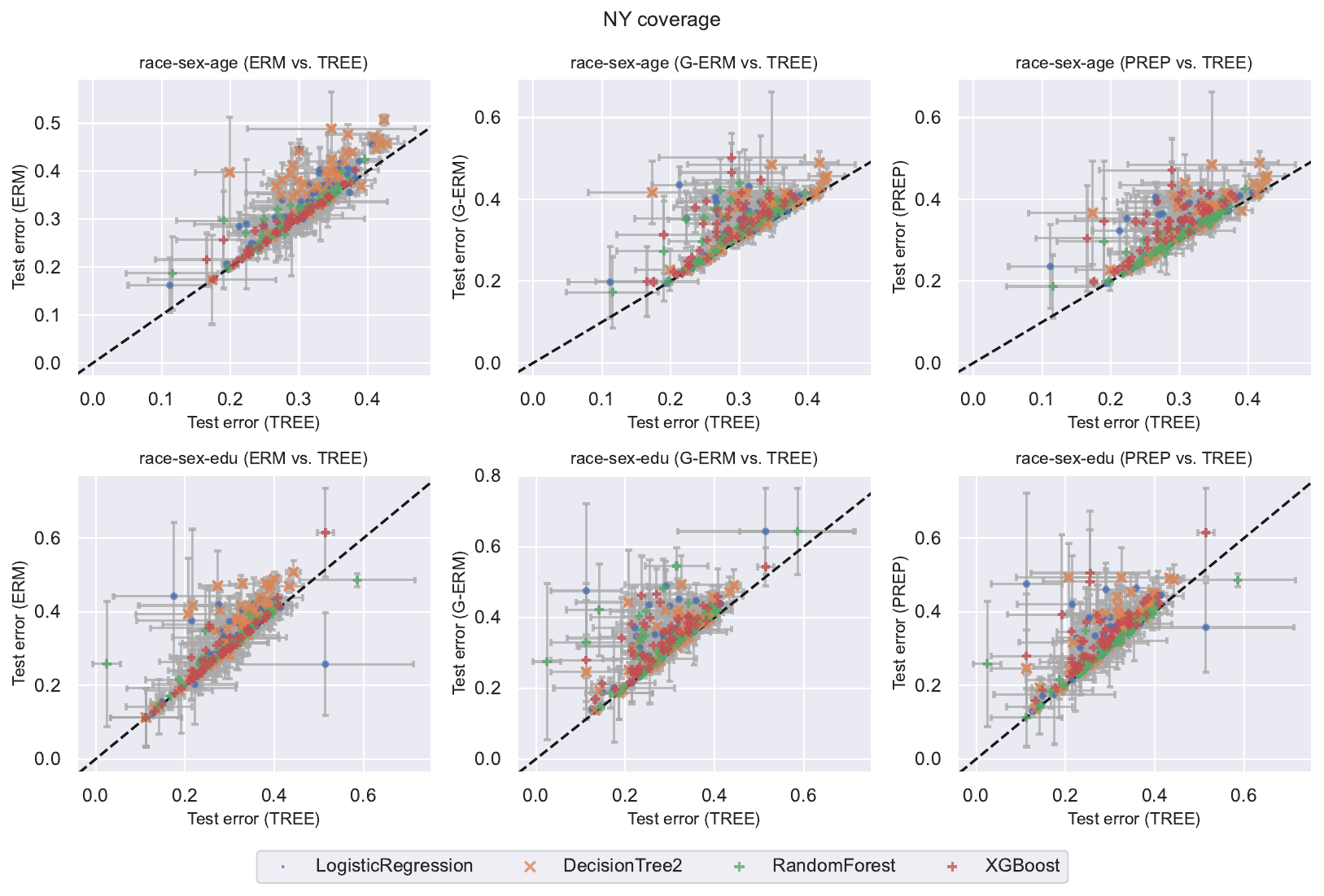}}
\caption{Test error of \mgltree vs.~ERM, Group ERM, and \prepend on \texttt{race}-\texttt{sex}-\texttt{age} and \texttt{race}-\texttt{sex}-\texttt{edu} groups (NY Coverage). Each point corresponds to a group; points above the $y = x$ line show that \mgltree generalizes better than the competitor method on that particular group. Benchmark hypothesis classes considered: logistic regression, decision trees with \texttt{max\_depth} 2, random forest, and XGBoost.} 
\label{fig:coverageNY}
\end{center}
\vskip -0.2in
\end{figure}

\begin{figure}[H]
\vskip 0.2in
\begin{center}
\centerline{\includegraphics[width=\columnwidth]{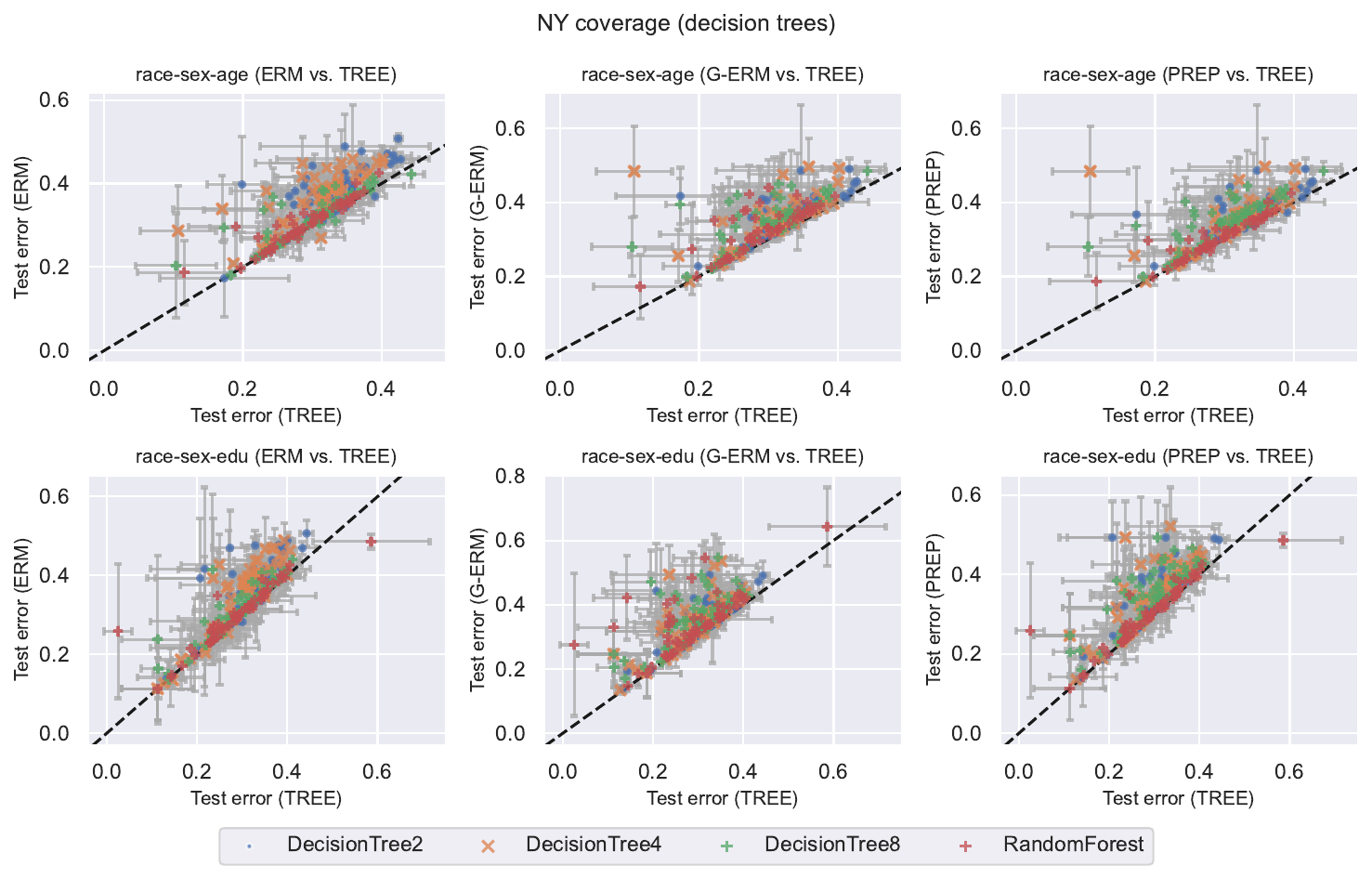}}
\caption{Test error of \mgltree vs.~ERM, Group ERM, and \prepend on \texttt{race}-\texttt{sex}-\texttt{age} and \texttt{race}-\texttt{sex}-\texttt{edu} groups (NY Coverage). Each point corresponds to a group; points above the $y = x$ line show that \mgltree generalizes better than the competitor method on that particular group. Benchmark hypothesis classes considered: decision trees with \texttt{max\_depth} 2, decision trees with \texttt{max\_depth} 4, decision trees with \texttt{max\_depth} 8, and random forest.} 
\label{fig:coverageNY_dt}
\end{center}
\vskip -0.2in
\end{figure}

\begin{figure}[H]
\vskip 0.2in
\begin{center}
\centerline{\includegraphics[width=\columnwidth]{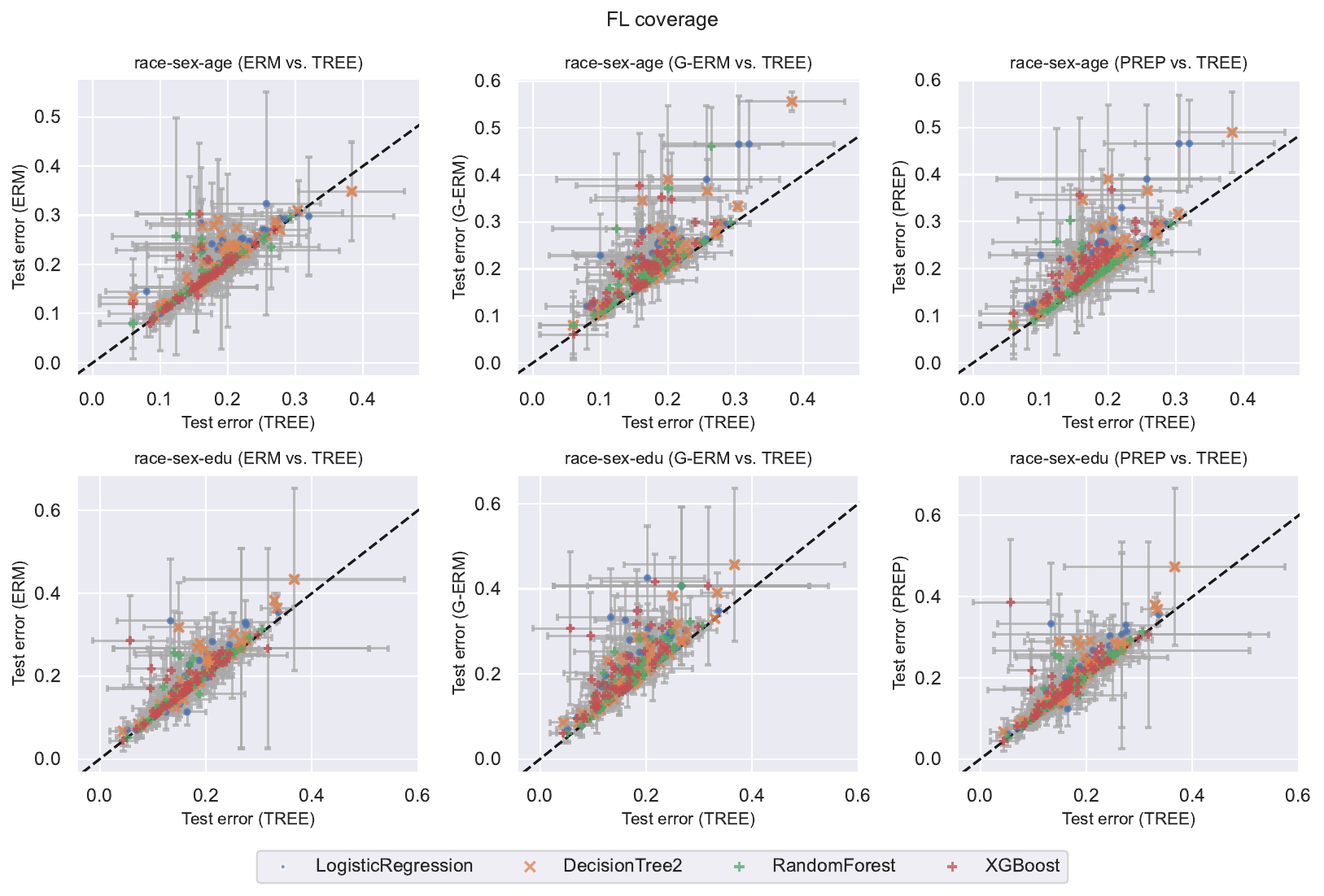}}
\caption{Test error of \mgltree vs.~ERM, Group ERM, and \prepend on \texttt{race}-\texttt{sex}-\texttt{age} and \texttt{race}-\texttt{sex}-\texttt{edu} groups (FL Coverage). Each point corresponds to a group; points above the $y = x$ line show that \mgltree generalizes better than the competitor method on that particular group. Benchmark hypothesis classes considered: logistic regression, decision trees with \texttt{max\_depth} 2, random forest, and XGBoost.} 
\label{fig:coverageFL}
\end{center}
\vskip -0.2in
\end{figure}

\begin{figure}[H]
\vskip 0.2in
\begin{center}
\centerline{\includegraphics[width=\columnwidth]{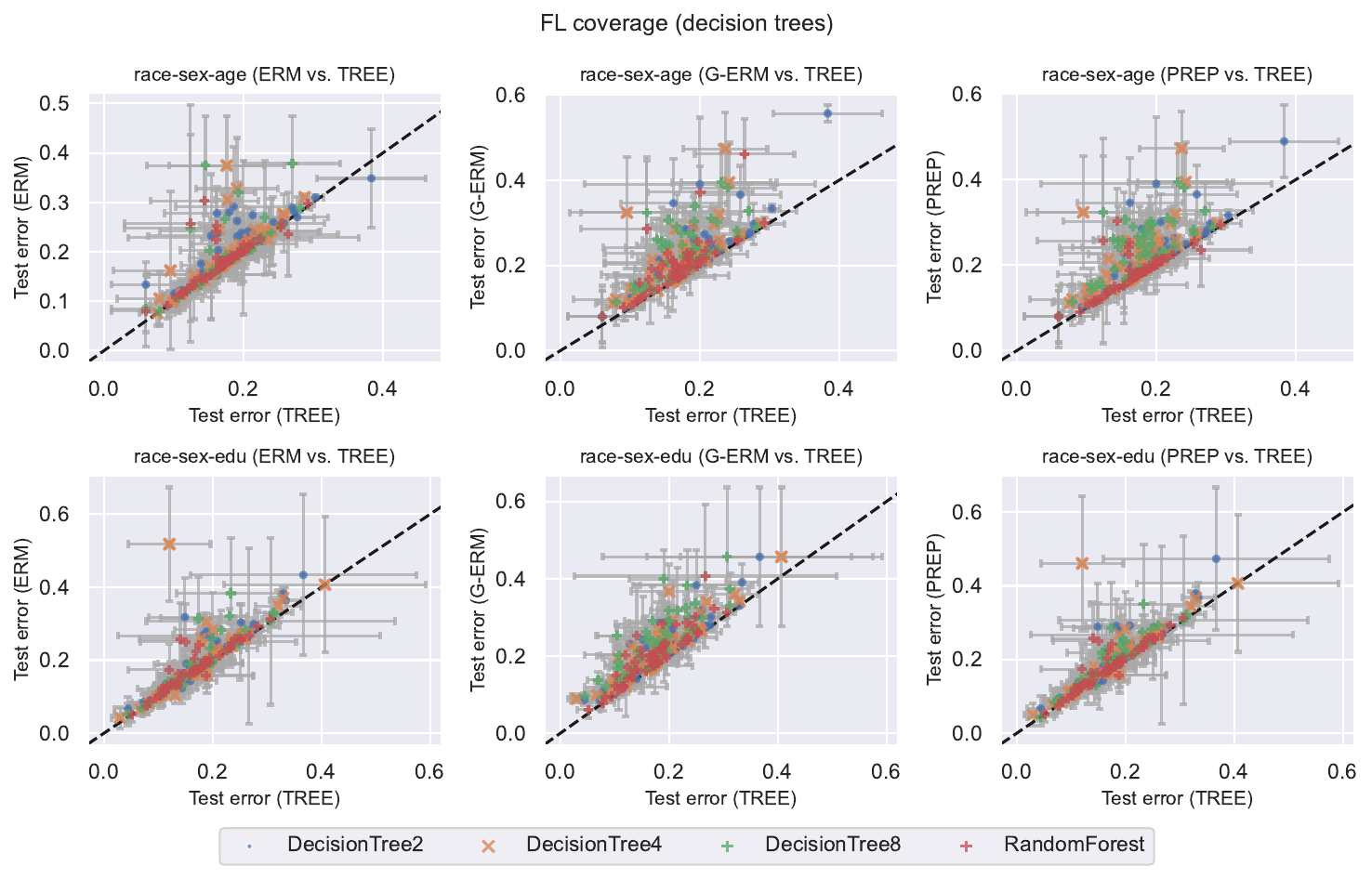}}
\caption{Test error of \mgltree vs.~ERM, Group ERM, and \prepend on \texttt{race}-\texttt{sex}-\texttt{age} and \texttt{race}-\texttt{sex}-\texttt{edu} groups (FL Coverage). Each point corresponds to a group; points above the $y = x$ line show that \mgltree generalizes better than the competitor method on that particular group. Benchmark hypothesis classes considered: decision trees with \texttt{max\_depth} 2, decision trees with \texttt{max\_depth} 4, decision trees with \texttt{max\_depth} 8, and random forest.} 
\label{fig:coverageFL_dt}
\end{center}
\vskip -0.2in
\end{figure}

\begin{figure}[H]
\begin{center}
\centerline{\includegraphics[width=\columnwidth]{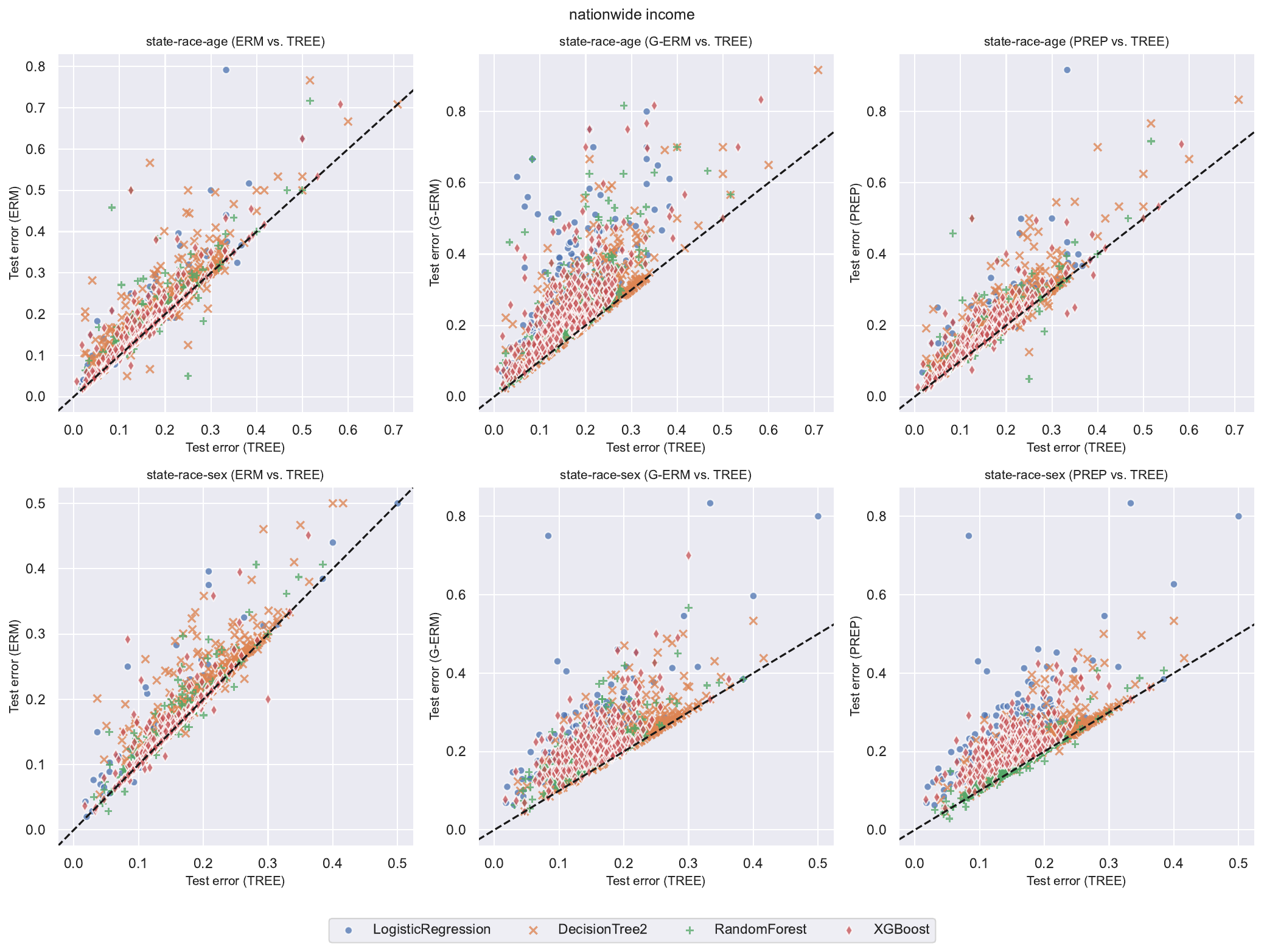}}
\caption{Test error of \mgltree vs.~ERM, Group ERM, and \prepend on \texttt{race}-\texttt{sex}-\texttt{age} and \texttt{race}-\texttt{sex}-\texttt{edu} groups on nationwide dataset for the Income task. Each point corresponds to a group; points above the $y = x$ line show that \mgltree generalizes better than the competitor method on that particular group. Benchmark hypothesis classes considered: logistic regression, decision trees with \texttt{max\_depth} 2, random forest, and XGBoost.} 
\label{fig:incomeST}
\end{center}
\end{figure}

\begin{figure}[H]
\vskip 0.2in
\begin{center}
\centerline{\includegraphics[width=\columnwidth]{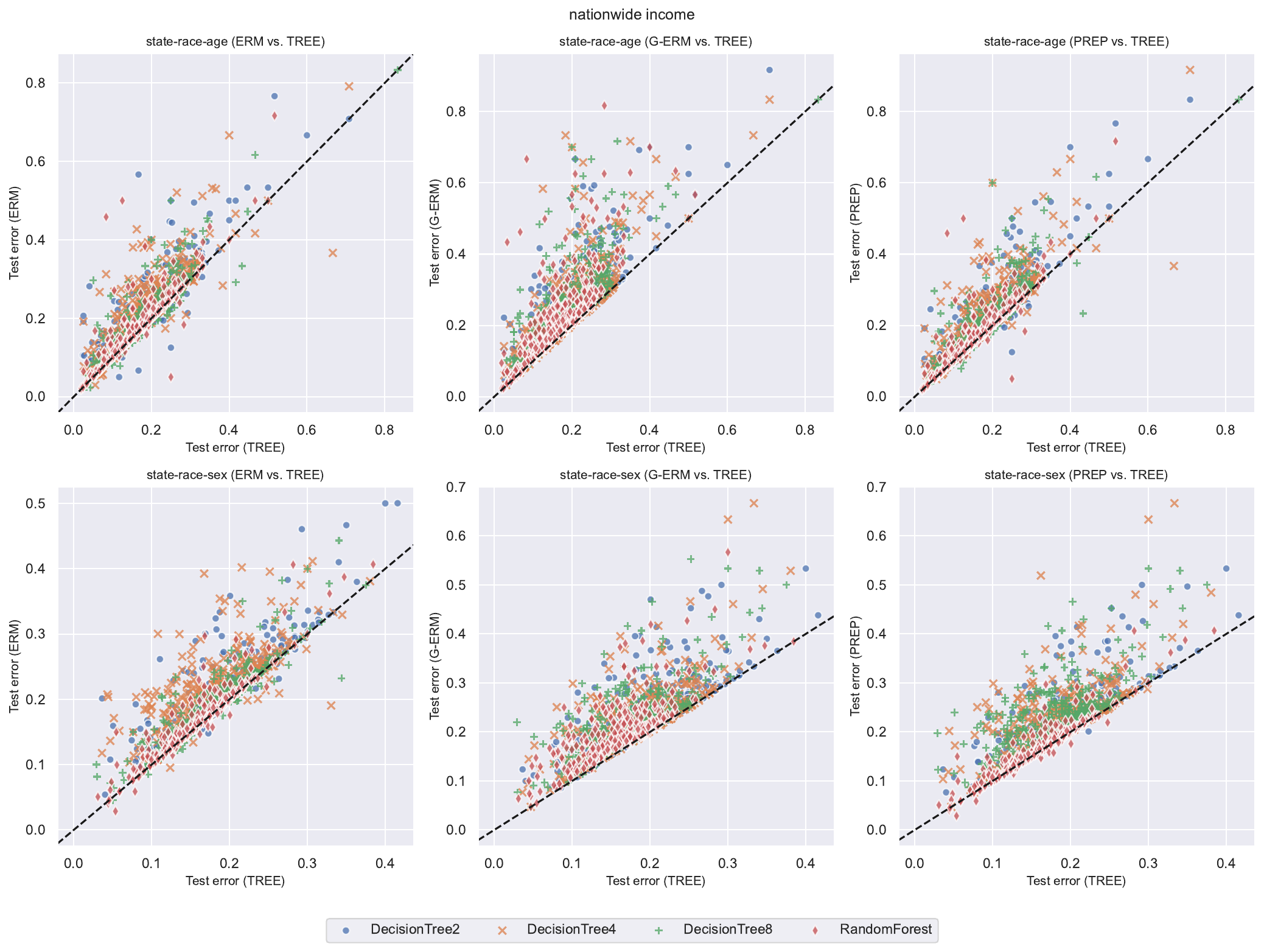}}
\caption{Test error of \mgltree vs.~ERM, Group ERM, and \prepend on \texttt{race}-\texttt{sex}-\texttt{age} and \texttt{race}-\texttt{sex}-\texttt{edu} groups on nationwide dataset for the Income task. Each point corresponds to a group; points above the $y = x$ line show that \mgltree generalizes better than the competitor method on that particular group. Benchmark hypothesis classes considered: decision trees with \texttt{max\_depth} 2, decision trees with \texttt{max\_depth} 4, decision trees with \texttt{max\_depth} 8, and random forest.} 
\label{fig:incomeST_dt}
\end{center}
\vskip -0.2in
\end{figure}

\begin{figure}[H]
\begin{center}
\centerline{\includegraphics[width=\columnwidth]{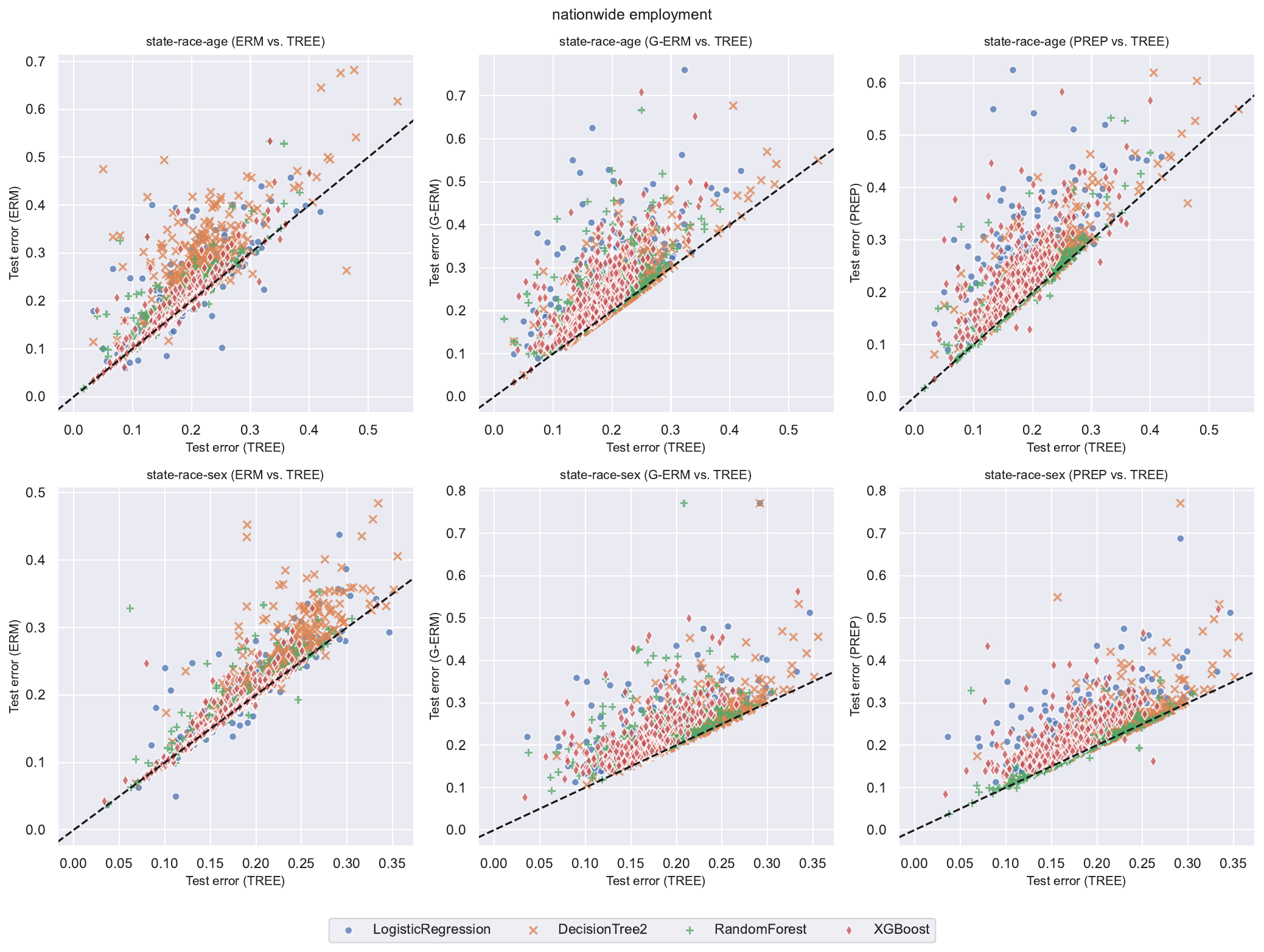}}
\caption{Test error of \mgltree vs.~ERM, Group ERM, and \prepend on \texttt{race}-\texttt{sex}-\texttt{age} and \texttt{race}-\texttt{sex}-\texttt{edu} groups on nationwide dataset for the Employment task. Each point corresponds to a group; points above the $y = x$ line show that \mgltree generalizes better than the competitor method on that particular group. Benchmark hypothesis classes considered: logistic regression, decision trees with \texttt{max\_depth} 2, random forest, and XGBoost.} 
\label{fig:employmentST}
\end{center}
\end{figure}

\begin{figure}[H]
\vskip 0.2in
\begin{center}
\centerline{\includegraphics[width=\columnwidth]{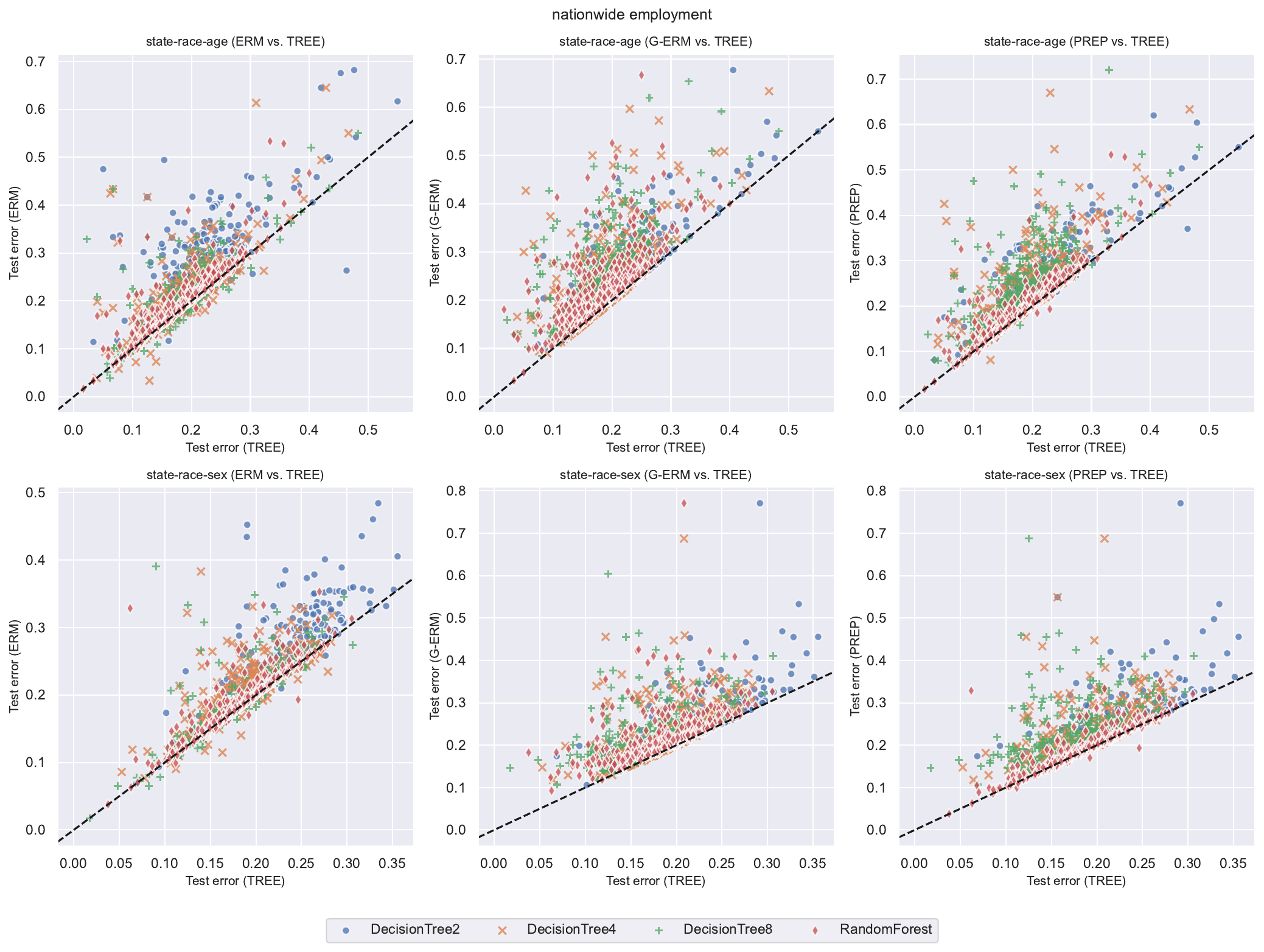}}
\caption{Test error of \mgltree vs.~ERM, Group ERM, and \prepend on \texttt{race}-\texttt{sex}-\texttt{age} and \texttt{race}-\texttt{sex}-\texttt{edu} groups on nationwide dataset for the Employment task. Each point corresponds to a group; points above the $y = x$ line show that \mgltree generalizes better than the competitor method on that particular group. Benchmark hypothesis classes considered: decision trees with \texttt{max\_depth} 2, decision trees with \texttt{max\_depth} 4, decision trees with \texttt{max\_depth} 8, and random forest.} 
\label{fig:employmentST_dt}
\end{center}
\vskip -0.2in
\end{figure}

\begin{figure}[H]
\begin{center}
\centerline{\includegraphics[width=\columnwidth]{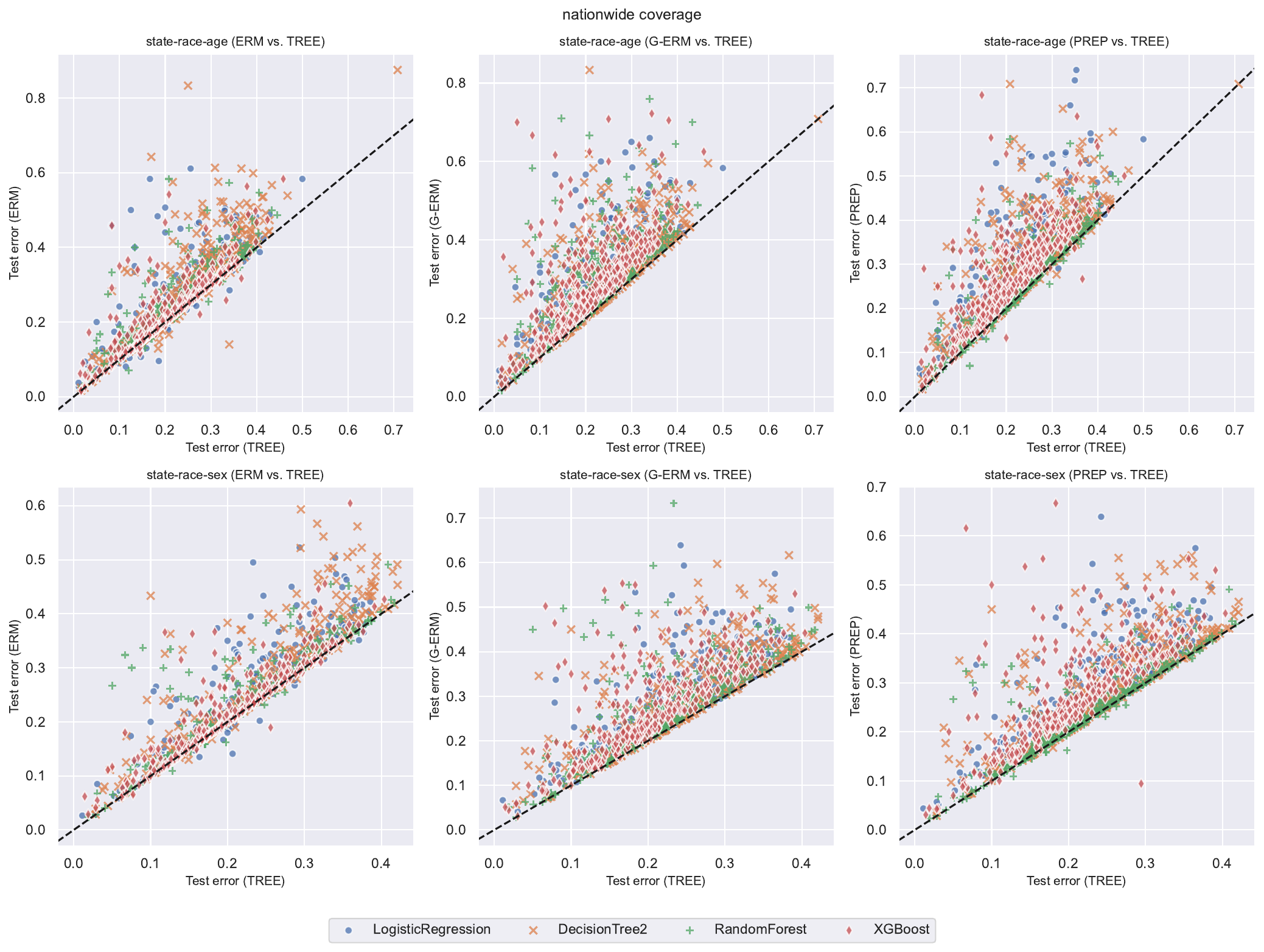}}
\caption{Test error of \mgltree vs.~ERM, Group ERM, and \prepend on \texttt{race}-\texttt{sex}-\texttt{age} and \texttt{race}-\texttt{sex}-\texttt{edu} groups on nationwide dataset for the Coverage task. Each point corresponds to a group; points above the $y = x$ line show that \mgltree generalizes better than the competitor method on that particular group. Benchmark hypothesis classes considered: logistic regression, decision trees with \texttt{max\_depth} 2, random forest, and XGBoost.} 
\label{fig:coverageST}
\end{center}
\end{figure}

\begin{figure}[H]
\vskip 0.2in
\begin{center}
\centerline{\includegraphics[width=\columnwidth]{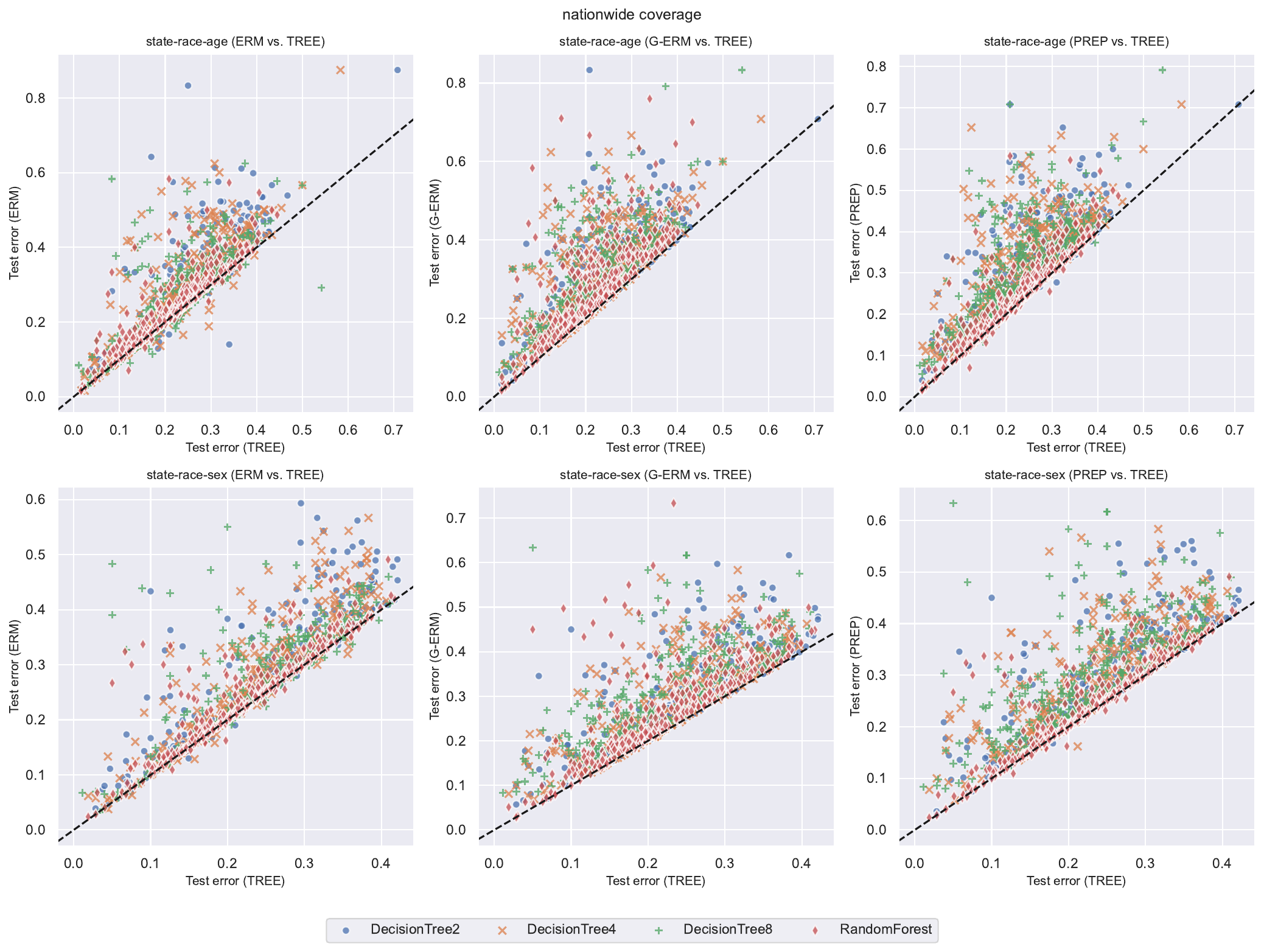}}
\caption{Test error of \mgltree vs.~ERM, Group ERM, and \prepend on \texttt{race}-\texttt{sex}-\texttt{age} and \texttt{race}-\texttt{sex}-\texttt{edu} groups on nationwide dataset for the Coverage task. Each point corresponds to a group; points above the $y = x$ line show that \mgltree generalizes better than the competitor method on that particular group. Benchmark hypothesis classes considered: decision trees with \texttt{max\_depth} 2, decision trees with \texttt{max\_depth} 4, decision trees with \texttt{max\_depth} 8, and random forest.} 
\label{fig:coverageST_dt}
\end{center}
\vskip -0.2in
\end{figure}

\end{document}